\newcommand{\stoptocwriting}{%
  \addtocontents{toc}{\protect\setcounter{tocdepth}{-5}}}
\newcommand{\vertiii}[1]{{\left\vert\kern-0.25ex\left\vert\kern-0.25ex\left\vert #1
		\right\vert\kern-0.25ex\right\vert\kern-0.25ex\right\vert}}
\newcommand{\vect}[1]{\ensuremath{\mathbf{#1}}}
\newcommand{\R}{\vect{R}}
\newcommand{\E}{\mathbb{E}}
\newcommand{\sign}{\mathrm{Sign}}
\DeclareMathOperator{\Cov}{Cov}
\DeclareMathOperator{\supp}{supp}
\newcommand{\err}{\epsilon}
\newcommand{\rank}{\mathrm{rank}}
\newcommand{\inner}[2]{\left\langle #1, #2 \right\rangle}
\newcommand{\norm}[1]{\left\lVert#1\right\rVert}
\par\vspace{4mm}}
\newcommand{\cA}{\mathcal{A}}
\newcommand{\cB}{\mathcal{B}}
\newcommand{\cC}{\mathcal{C}}
\newcommand{\cD}{\mathcal{D}}
\newcommand{\cE}{\mathcal{E}}
\newcommand{\cF}{\mathcal{F}}
\newcommand{\cI}{\mathcal{I}}
\newcommand{\cL}{\mathcal{L}}
\newcommand{\cM}{\mathcal{M}}
\newcommand{\cN}{\mathcal{N}}
\newcommand{\cO}{\mathcal{O}}
\newcommand{\cS}{\mathcal{S}}
\newcommand{\cT}{\mathcal{T}}
\newcommand{\cV}{\mathcal{V}}
\newcommand{\cW}{\mathcal{W}}
\newcommand{\cX}{\mathcal{X}}
\newcommand{\bE}{\mathbb{E}}
\newcommand{\bP}{\mathbb{P}}
\newcommand{\bR}{\mathbb{R}}
\newcommand{\bS}{\mathbb{S}}
\newcommand{\bT}{\mathbb{T}}
\newtheorem{theorem}{Theorem}
\newtheorem{definition}{Definition}
\newtheorem{lemma}{Lemma}
\newtheorem{remark}{Remark}
\newtheorem{proposition}{Proposition}
\newtheorem{assumption}{Assumption}
\begin{document}

\title{Blessing of Nonconvexity in Deep Linear Models: Depth Flattens the Optimization Landscape Around the True Solution}
\author{Jianhao Ma$^*$ and Salar Fattahi$^+$\vspace{2mm}\\
    Department of Industrial and Operations Engineering\\
    University of Michigan, Ann Arbor\\
    $^*$\href{mailto:jianhao@umich.edu}{jianhao@umich.edu}, $^+$\href{mailto:fattahi@umich.edu}{fattahi@umich.edu}
}

\maketitle

\stoptocwriting

\begin{abstract}
    This work characterizes the effect of depth on the optimization landscape of linear regression, showing that, despite their nonconvexity, deeper models have more desirable optimization landscape.
    We consider a robust and over-parameterized setting, where a subset of measurements are grossly corrupted with noise and the true linear model is captured via an $N$-layer linear neural network. On the negative side, we show that this problem \textit{does not} have a benign landscape: given any $N\geq 1$, with constant probability, there exists a solution corresponding to the ground truth that is neither local nor global minimum. However, on the positive side, we prove that, for any $N$-layer model with $N\geq 2$, a simple sub-gradient method becomes oblivious to such ``problematic'' solutions; instead, it converges to a balanced solution that is not only close to the ground truth but also enjoys a flat local landscape, thereby eschewing the need for ``early stopping''. Lastly, we empirically verify that the desirable optimization landscape of deeper models extends to other robust learning tasks, including deep matrix recovery and deep ReLU networks with $\ell_1$-loss.
\end{abstract}

\section{Introduction}
\label{sec::introduction}

Supported by the empirical success of deep models in contemporary learning tasks, it is by now a conventional wisdom that ``deeper models generalize better''~\cite{he2016deep, novak2018sensitivity, bommasani2021opportunities}. Indeed, the flurry of recent attempts towards demystifying this phenomenon is a testament to the amount of research it has spawned: from simple linear regression to more complex and nonlinear models, it is shown that deeper models benefit from a range of desirable statistical properties, such as \textit{depth separation}~\cite{safran2021optimization, eldan2016power, telgarsky2015representation, telgarsky2016benefits}, \textit{benign overfitting}~\cite{bartlett2020benign}, and \textit{hierarchical learning}~\cite{allen2020backward}, to name a few.

Despite the great promise of deeper models---both theoretically and empirically---the effect of depth on their optimization landscape has remained elusive to this day. A recent body of work attempts to characterize the effect of depth on the loss function through the notion of \textit{benign landscape}. Roughly speaking, an optimization problem has a benign landscape if it is devoid of spurious local minima, and its true solutions---i.e., solutions corresponding to the ground truth---coincide with global minima. It has been shown that 2-layer~\cite{bhojanapalli2016global} and multi-layer~\cite{kawaguchi2016deep} linear neural networks with nearly-noiseless data have benign landscape. However, the notion of benign landscape is significantly stronger than what is needed in practice. For instance, the existence of spurious local minima may not pose any issue if an optimization algorithm can avoid them efficiently. Another line of research focuses on characterizing the solution trajectory of different local-search algorithms, showing that they enjoy an \textit{implicit bias} that steers them away from undesirable solutions~\cite{li2021implicit, vaskevicius2019implicit,chou2021more, gunasekar2018implicit, arora2019implicit, gissin2019implicit}. However, such guarantees only apply to specific trajectories of an algorithm, thereby falling short of any meaningful characterization of the optimization landscape around those trajectories.

\begin{figure*}
    \begin{centering}
        \subfloat[1-layer]{
            {\includegraphics[width=0.32\linewidth]{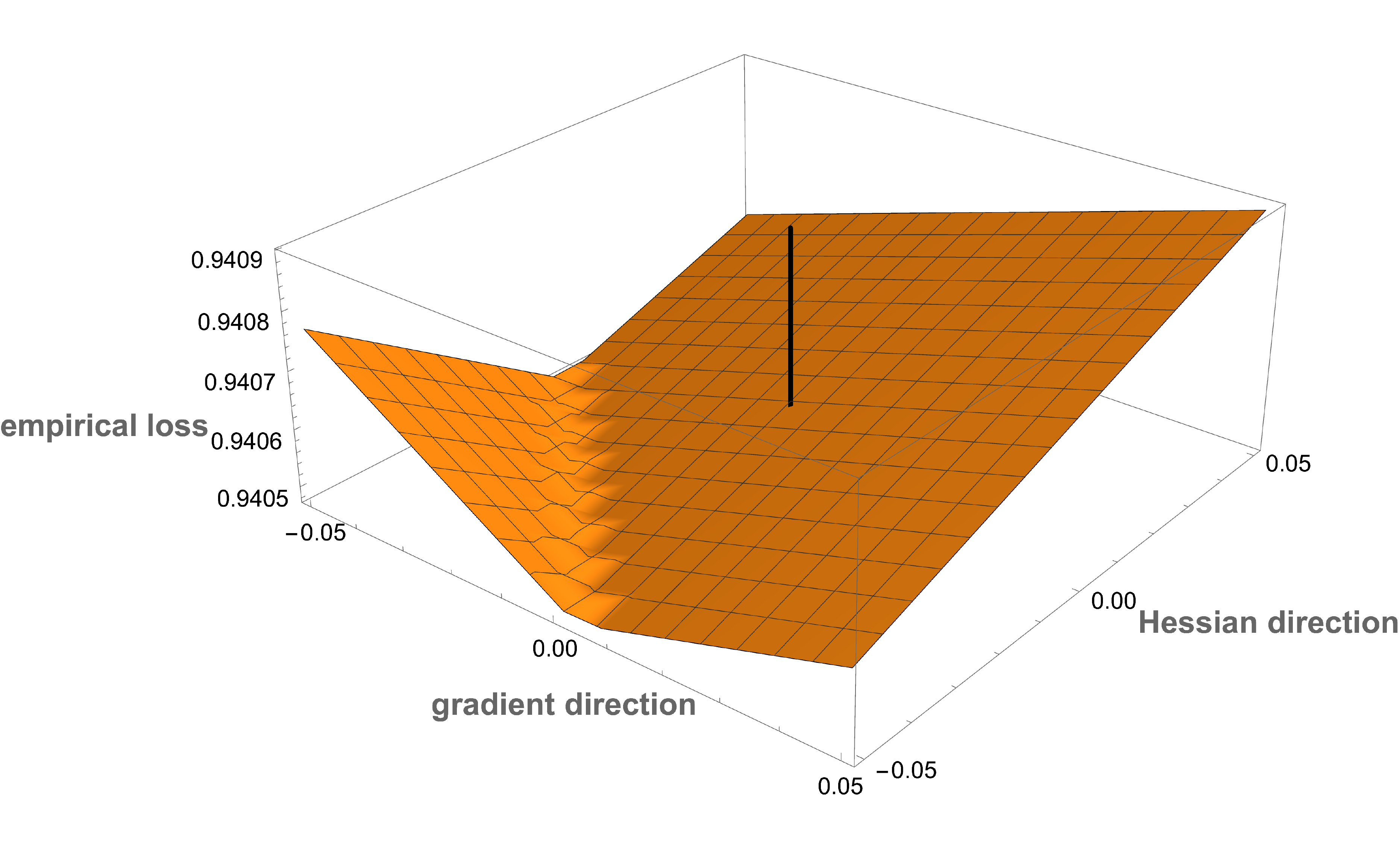}}\label{fig::1-layer-landscape}}
        \subfloat[2-layer]{
            {\includegraphics[width=0.32\linewidth]{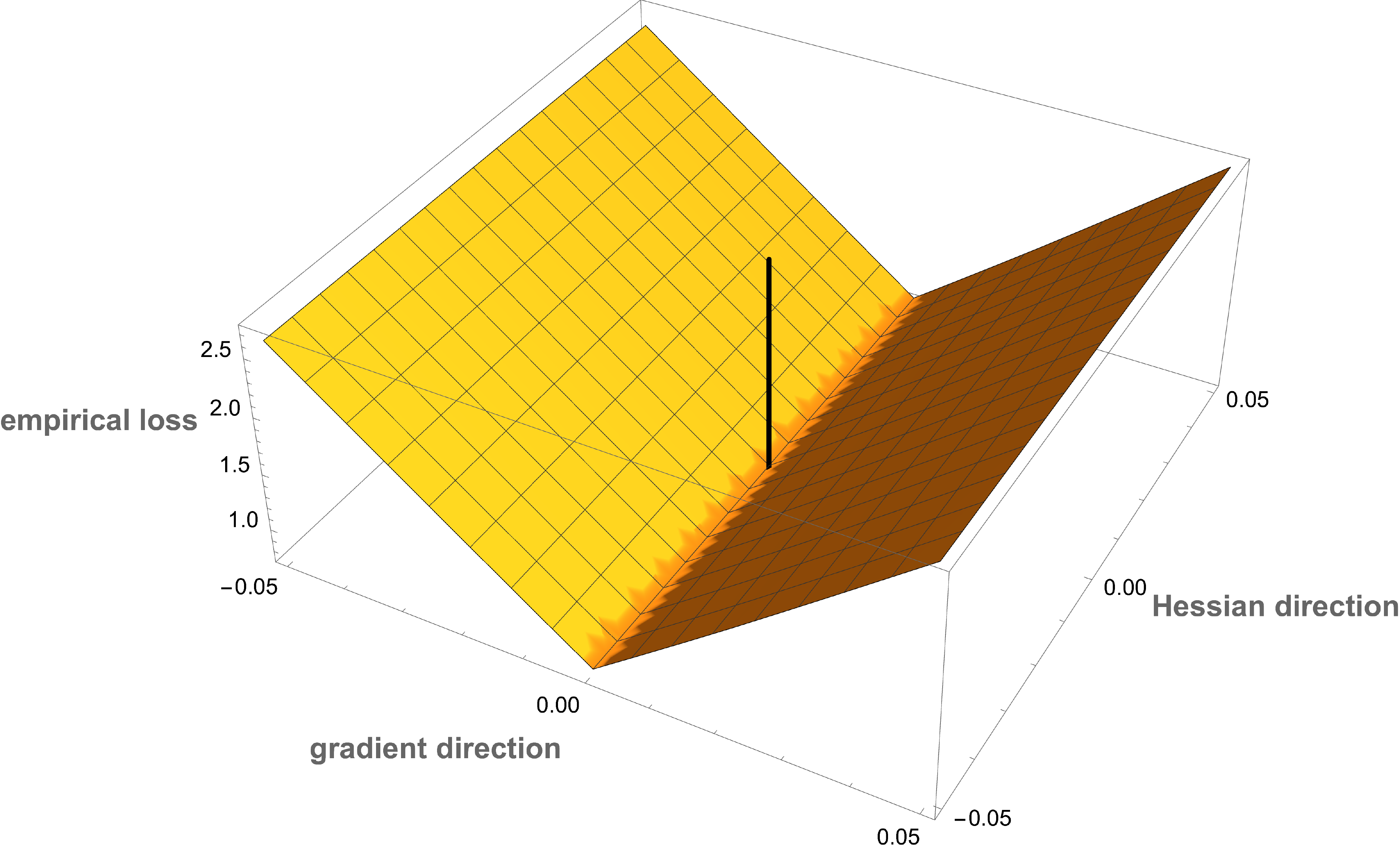}}\label{fig::2-layer-landscape}}
        \subfloat[3-layer]{
            {\includegraphics[width=0.32\linewidth]{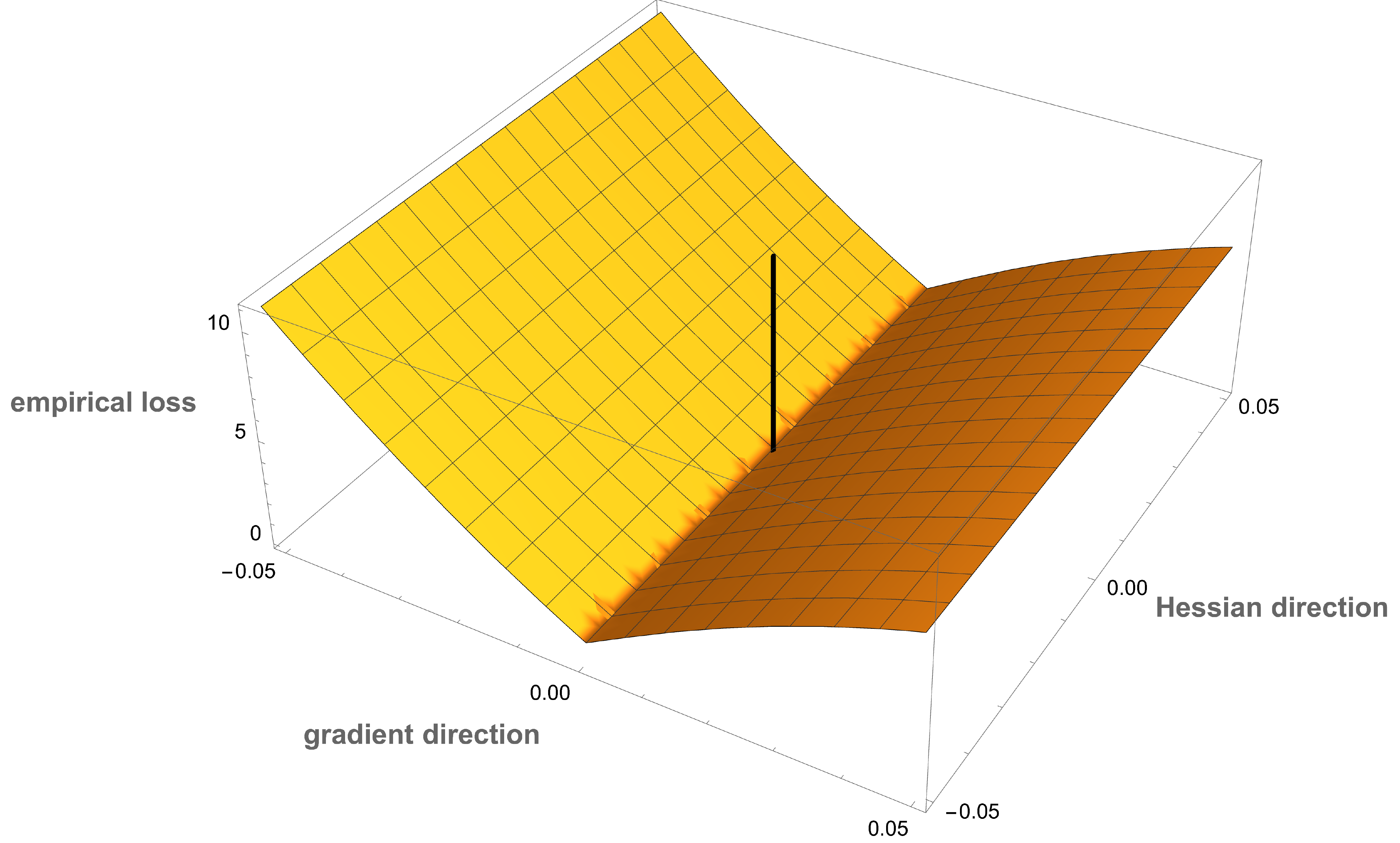}}\label{fig::3-layer-landscape}}\\
        \subfloat[1-layer]{
            {\includegraphics[width=0.32\linewidth]{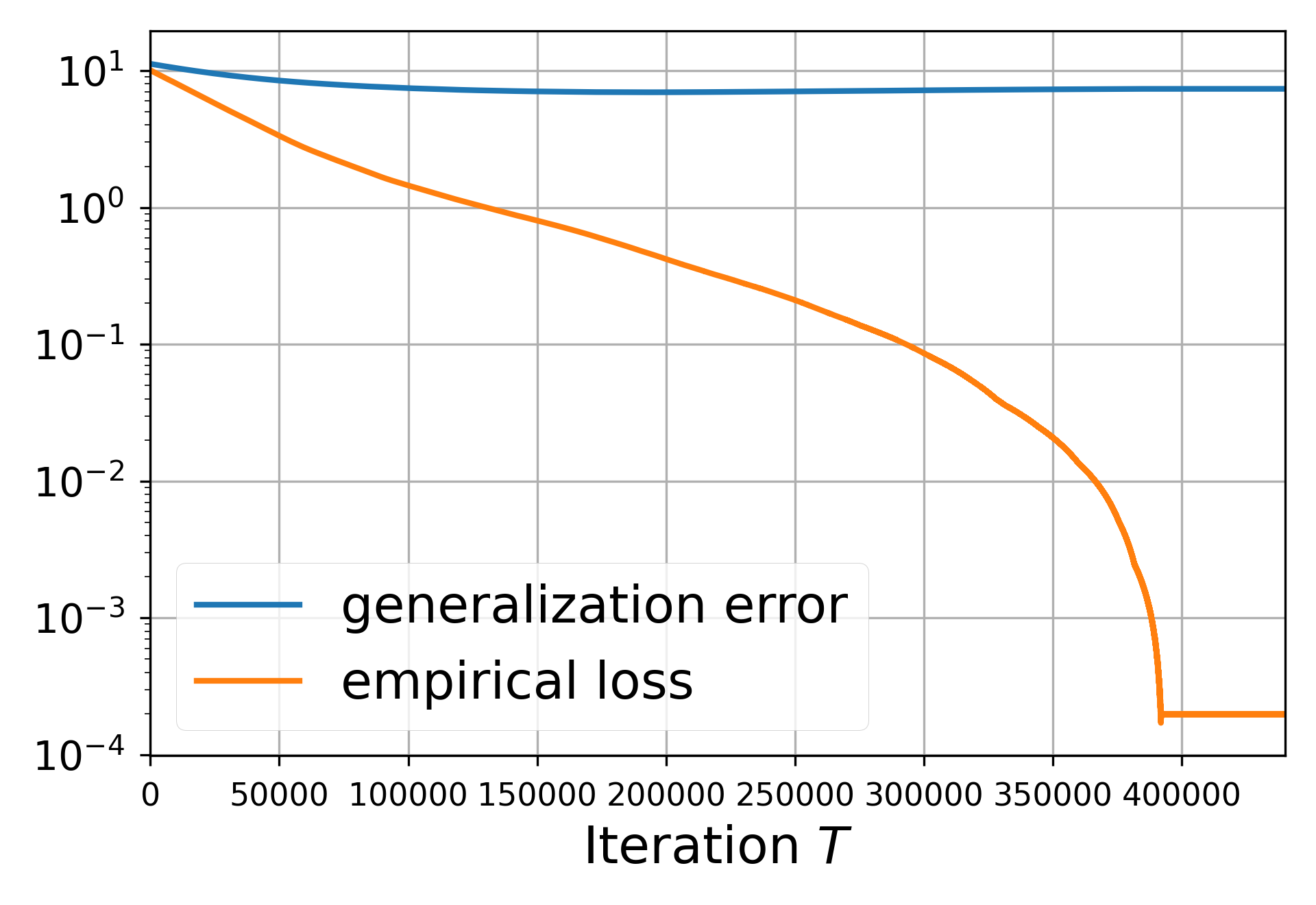}}\label{fig::1-layer-linear-regression}}
        \subfloat[2-layer]{
            {\includegraphics[width=0.32\linewidth]{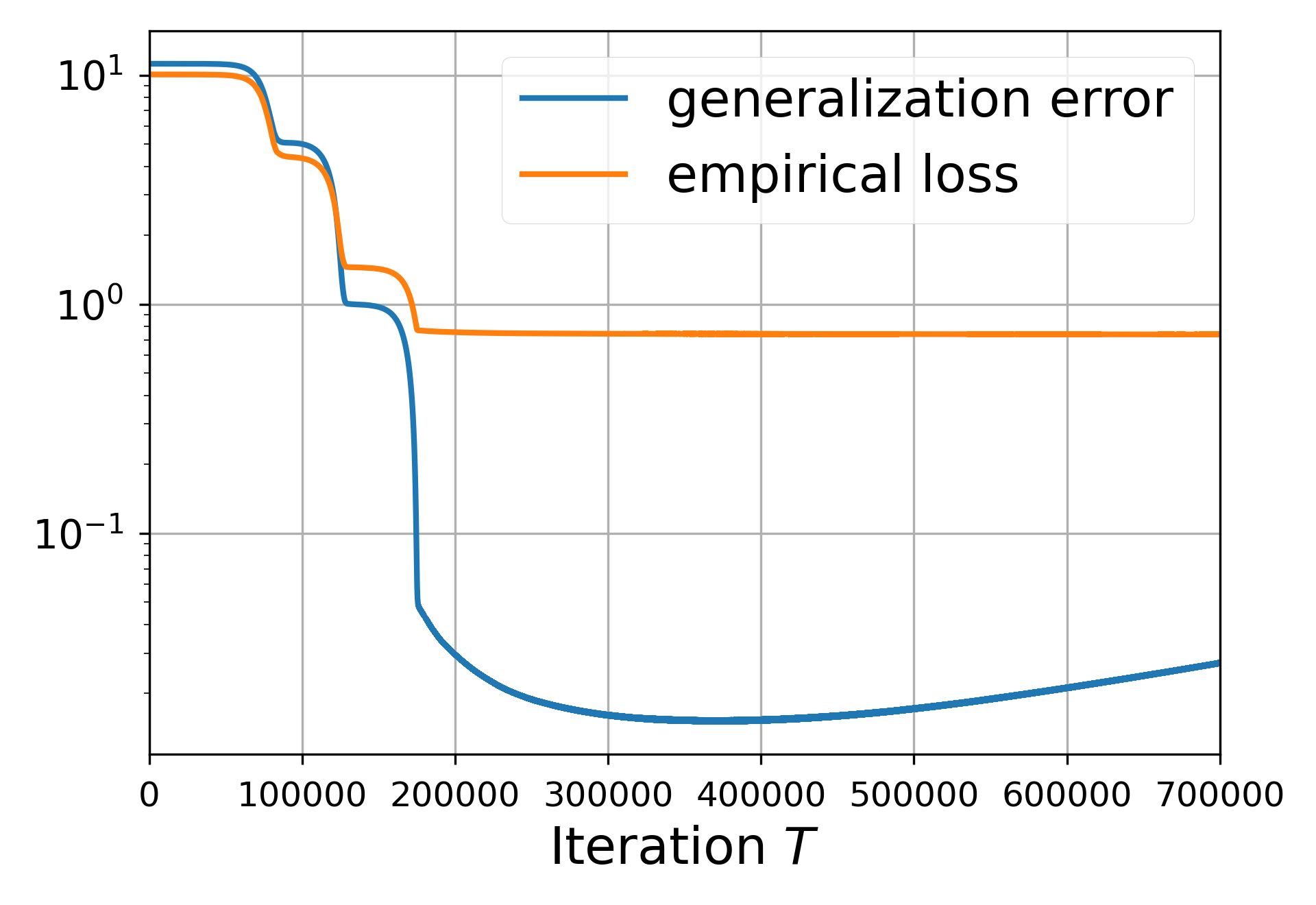}}\label{fig::2-layer-linear-regression}}
        \subfloat[3-layer]{
            {\includegraphics[width=0.32\linewidth]{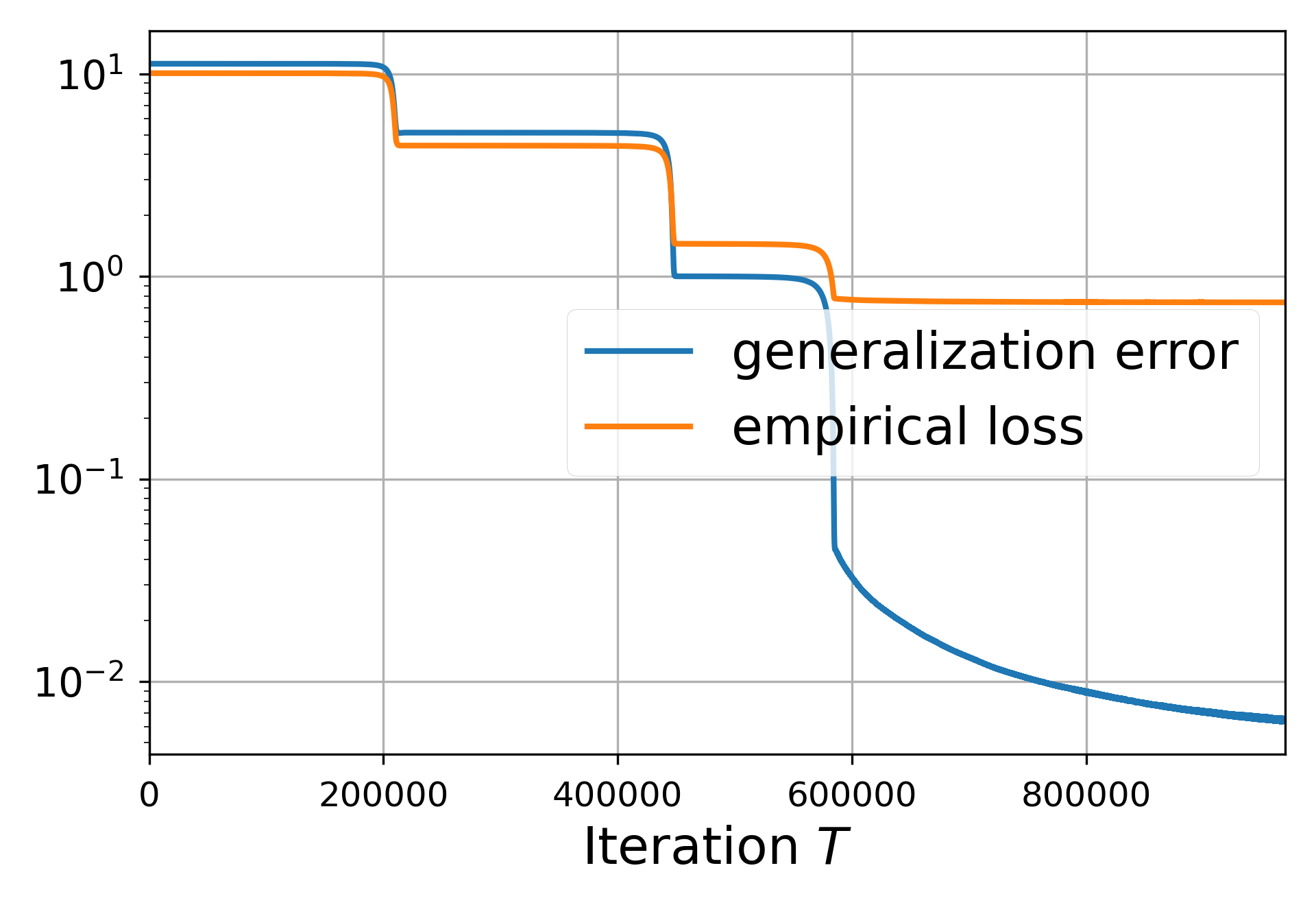}}\label{fig::3-layer-linear-regression}}
    \end{centering}
    \caption{\footnotesize
        {\bf First row.} Local landscape around the balanced true solution $\mathbf{w}^\star = (\sqrt[N]{\theta^\star}, \dots, \sqrt[N]{\theta^\star})$ for 1-, 2-, and 3-layer models. $x$ and $y$ axis correspond to the points of the form $\mathbf{w}^\star+\alpha\mathbf{d}+\beta\mathbf{h}$, for different values of $\alpha$ and $\beta$, where $\mathbf{d}$ and $\mathbf{h}$ are respectively the most descent sub-gradient direction and the most negatively curved direction of the Hessian after smoothing. {\bf Second row.}
        Generalization error and the empirical loss of the solutions found by SubGM for 1-, 2-, and 3-layer models.}
    \label{fig::simulation}
\end{figure*}

\subsection{Our Contributions}
To shed light on the effect of depth on the optimization landscape of deep models, we consider a prototypical problem in machine learning, namely \textit{robust linear regression}, where the goal is to recover a linear model from a limited number of grossly corrupted measurements. Given samples of the form $y_i = \inner{\theta^\star}{x_i}+\err_i$, we study the optimization landscape of $\ell_1$-loss under an $N$-layer model defined as $y = f_{\mathbf{w}}(x) := \inner{w_1\odot w_2\odot\dots\odot w_N}{x}$. Our results are summarized as follows:
\begin{itemize}
    \item [-] We prove that, {for any $N\geq 1$, there exists at least one true solution that is neither local nor global minimum of $\ell_1$-loss, provided that at least a fraction $p>0$ of the measurements are corrupted with noise.}
    \item [-] Despite the ubiquity of such ``hidden'' true solutions, we show that, {for any $N$-layer model with $N\geq 2$, a simple sub-gradient method (SubGM) with small initialization converges to a small neighborhood of a balanced true solution. The radius of this neighborhood shrinks with the depth of the model, resulting in more accurate solutions.} Moreover, the balancedness of the solution implies that each layer of the model inherits a similar sparsity pattern to the ground truth.
    \item [-] We prove that {deeper models take longer to train, but once trained, the algorithm will stay close to the ground truth for a longer time.} This implies that early stopping of the algorithm becomes less crucial for deeper models.
    \item [-] Finally, {we prove that depth flattens the optimization landscape around the solution obtained by SubGM.} In particular, we show that, within an $\gamma$-neighborhood of the true solution, the steepest descent direction can reduce the loss by at most $\cO\left(\gamma^N\right)$, which decreases \textit{exponentially} with $N$.
\end{itemize}

\paragraph{Motivating Example.} To showcase our results, we consider an instance of robust linear regression in the over-parameterized setting, where the dimension of $\theta^\star$ is 500 and the number of available samples is 300. Moreover, we assume that $10\%$ of the measurements are corrupted with large noise. The first row of Figure~\ref{fig::simulation} shows the landscape around the balanced ground truth ${w}_1^\star = \sqrt[N]{\theta^\star}, {w}_2^\star = \sqrt[N]{\theta^\star}, \dots, {w}_N^\star = \sqrt[N]{\theta^\star}$.\footnote{Later, we will show that a simple sub-gradient method converges to this balanced solution.} In particular, $x$ and $y$ axis show the most descent sub-gradient direction and the most negatively curved direction of the Hessian after smoothing.\footnote{To smoothen $|x|$, we replace it with $\sqrt{x^2+\epsilon}$, for $\epsilon = 10^{-7}$.} Evidently, there is a sharp transition in the landscape of $N$-layer models: for 1-layer model, the true solution has strictly negative directions along both sub-gradient and negative curvature. However, these descent directions almost disappear in 2- and 3-layer models. The second row of Figure~\ref{fig::simulation} shows the performance of SubGM on these models. It can be observed that a 1-layer model easily overfits to noise, leading to a vacuous generalization error. On the contrary, a 3-layer model can find a solution that generalizes progressively better than 1- and 2-layer models, demonstrating the algorithmic benefit of the depth.

\section{Related Works}
\label{sec::related-works}

\paragraph{Deep models:} It is known that deeper models enjoy better \textit{approximation power}.
For instance, \cite{eldan2016power, telgarsky2015representation, telgarsky2016benefits} introduce several functions that are expressible by deep models of moderate size; yet, they cannot be approximated via any shallow network of sub-exponential size. A recent work by~\cite{safran2021optimization} shows that depth separation may lead to optimization separation. In other words, functions that can be expressed by deeper models can also be efficiently learned via gradient descent. Another line of work shows that deep linear models have strong implicit bias towards the true solution~\cite{gunasekar2018implicit, arora2019implicit, chou2021more}, and that they benefit from \textit{incremental learning}~\cite{gissin2019implicit, li2020towards}. More generally,~\cite{allen2020backward} show that stochastic gradient descent on deep nonlinear models can provably learn certain complex functions by automatically decomposing them into a series of simpler ones.

\paragraph{Robust and sparse linear regression:}
Robust and sparse linear regression is a classical problem in statistics, with a wide range of applications in signal and image processing. Regularized methods, including Lasso~\cite{van2007deterministic, candes2007dantzig, bickel2009simultaneous, raskutti2011minimax}, Best Subset Selection~\cite{mazumder2017subset, bertsimas2016best}, and Forward and Backward Step-wise Regression~\cite{friedman2001elements}, are considered as most widely used methods for solving robust and sparse linear regression that come equipped with strong statistical and computational guarantees. Recently, sparse linear regression has been used to explore the implicit bias of different optimization algorithms and initialization regimes. \cite{vaskevicius2019implicit, zhao2019implicit} show that, for some unregularized overparameterized models, gradient descent (with early stopping) for sparse linear regression achieves minimax error rate. Moreover, \cite{woodworth2020kernel} study how the scale of the initial point controls the transition between the “kernel” (lazy training) and “rich” regimes, and their corresponding generalization performance. \cite{haochen2021shape} use this problem setting to explore the role of label noise in stochastic gradient descent.

\paragraph{Notations:}
For two vectors $x,y\in\R^d$, their inner product is defined as $\inner{x}{y}=x^{\top}y$, and their Hadamard product is defined as $x\odot y=[x_1y_1\ \cdots\ x_dy_d]^\top$. For simplicity of notation, we use $\prod_j w_j$ to denote the Hadamard product of $w_1,w_2,\dots,w_N\in\R^d$. For a vector $x$, $\norm{x},\norm{x}_{\infty}$, and $\norm{x}_0$ refer to $2$-norm, $\infty$-norm, and the number of nonzero elements, respectively. The symbols $a_t\lesssim b_t$ and $a_t = \mathcal{O}(b_t)$ are used to denote $a_t\leq Cb_t$, for a universal constant $C$.
The notation $a_t=\Theta(b_t)$ is used to denote $a_t=O(b_t)$ and $b_t=\Omega(a_t)$. The $\sign(\cdot)$ function is defined as $\sign(x)=x/|x|$ if $x\neq 0$, and $\sign(0)=[-1,1]$. We denote $[n]:=\{1,2,\cdots,n\}$. For a vector $x\in\mathbb{R}^d$, we define $x^a = [x_1^a\ x_2^a\ \dots\  x_d^a]^\top$, for any $a>0$. In all of our probabilistic arguments, the randomness is only over the input data and noise.

\section{Problem Formulation}
We study the problem of robust and sparse linear regression, where the goal is to estimate a $k$-sparse vector $\theta^{\star}\in \R^d$ ($k\ll d$) from a limited number of data points $\{(x_i,y_i)\}_{i=1}^{m}$, where $y_i=\inner{\theta^{\star}}{x_i}+\err_i$, $x_i$ is i.i.d. standard Gaussian vector, and $\err_i$ is noise. Moreover, for simplicity of our subsequent analysis, we assume that $\theta^\star$ is a non-negative vector. {We believe that this assumption can be relaxed without a significant change in our results.}

\begin{assumption}[Noise Model]
    \label{assumption::general-noise}
    Given a corruption probability $p$, the noise vector $\err = [\err_1\ \cdots\ \err_m]^\top\in \R^{m}$ is generated as follows: first, a subset $\cS\subset [m]$ with cardinality $pm$ is chosen uniformly at random\footnote{Here, for simplicity we assume $pm$ is an integer.}. Then, for each entry $i\in \cS$, the value of $\err_i$ is drawn independently from a distribution ${P_o}$, and all the remaining entries are set to zero. Moreover, a random variable $\zeta$ under the distribution ${P_o}$ satisfies $\E_{{P_o}}[\zeta] = 0$ and $\bP(|\zeta|\geq t_0)\geq p_0$, for some strictly positive constants $t_0$ and $p_0$.
\end{assumption}

Our considered noise model does not impose any assumption on the magnitude of the noise or the specific form of its distribution, which makes it particularly suitable for modeling outliers. Note that the assumption $\bP(|\err|\geq t_0)\geq p_0$ is very mild and satisfied for almost all common distributions. Roughly speaking, it implies that the noise takes a nonzero value with a nonzero probability.

To capture the input-output relationship, we consider a class of $N$-layer diagonal linear neural networks of the form $f_{\mathbf{w}}(x) = \inner{w_1\odot \cdots \odot w_N}{x}$, where $\mathbf{w}:=(w_1,\cdots, w_N)$ collects the weights of the layers $w_1,\cdots, w_N\in \R^d$. Due to the sparse-and-large nature of the noise, it is natural to minimize the so-called empirical risk with $\ell_1$-loss:
\begin{equation}
    \begin{aligned}
        \min_{\mathbf{w}}\cL(\mathbf{w}) & :=\frac{1}{m}\sum_{i=1}^{m}\left|f_{\mathbf{w}}(x_i)-y_i\right|=\frac{1}{m}\sum_{i=1}^{m}\left|\inner{w_1\odot \cdots \odot w_N}{x_i}-y_i\right|.
        \label{eq::empirical-loss}
    \end{aligned}
\end{equation}

Other variants of empirical risk minimization for linear regression have been studied in the literature. For instance,~\cite{chou2021more} study the solution trajectory of gradient flow on $\ell_2$-loss, showing that it converges to a solution with smallest $\ell_1$-norm. Similar analysis has also been appeared in more general deep linear neural networks~\cite{du2019width, du2018algorithmic}. However, it is well-known that $\ell_2$-loss is highly sensitive to outliers, and $\ell_1$-loss is a better alternative to identify and reject large-and-sparse noise.

A solution $\bar{\mathbf{w}}$ is called \textit{global} if it corresponds to a global minimizer of $\cL(\mathbf{w})$. Moreover, a \textit{local solution} $\bar{\mathbf{w}}$ corresponds to the minimum of $\cL(\mathbf{w})$ within an open ball centered at $\bar{\mathbf{w}}$. Finally, a \textit{true solution} $\bar{\mathbf{w}}$ satisfies $\bar w_1\odot \cdots \odot \bar w_N = \theta^\star$.

\section{Main Results}
\subsection{Absence of Benign Landscape}
\label{sec::negative}
We show that, for any arbitrary corruption probability $0<p<1/2$ and any number of layers $N\geq 1$, there exists at least one true solution with a strictly negative descent direction, provided that the problem is \textit{over-parameterized}, i.e., $m\lesssim d$.

\begin{theorem}[unidentifiable true solutions]
    \label{prop::negative}
    Define $\cW=\{\mathbf{w}:w_1\odot\cdots\odot w_N=\theta^{\star}\}$ as the set of all true solutions of an $N$-layer model. For any $N\geq 1$ and $0<p<1/2$, the following statements hold:

    - (Over-parameterized regime) If $m\leq 0.1d$, with probability of at least $1/16$, we have
    \begin{equation}
        \inf_{\mathbf{w^\star}\in\cW}\ \ \inf_{\mathbf{w}:\norm{\mathbf{w}-\mathbf{w^\star}}_{\infty}\leq \gamma}\left\{\cL(\mathbf{w})-\cL(\mathbf{w}^\star)\right\}\lesssim  -\sqrt{\frac{p_0p}{m}}d\gamma,
    \end{equation}
    for any $\gamma\lesssim t_0/\sqrt{d}\wedge 1$.

    - (Under-parameterized regime) If $m\gtrsim \frac{d}{(1-2p)^2}$, with probability of at least $1-e^{-\Omega(d)}$, we have
    \begin{equation}
        \inf_{\mathbf{w}^\star\in\cW}\ \ \inf_{\mathbf{w}}\left\{\cL(\mathbf{w})-\cL(\mathbf{w}^\star)\right\}\geq   0.
    \end{equation}
\end{theorem}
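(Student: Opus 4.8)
The plan is to handle the two regimes by entirely different arguments; the over-parameterized case carries essentially all of the content, while the under-parameterized case is a standard exact-recovery argument for $\ell_1$ regression.

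\textbf{Over-parameterized regime.} The aim is to exhibit one true solution $\mathbf w^\star\in\cW$ and one nearby point $\mathbf w$ with $\norm{\mathbf w-\mathbf w^\star}_\infty\le\gamma$ along which $\cL$ decreases by the claimed amount. For $N=1$ we are forced to take $\mathbf w^\star=\theta^\star$, but for $N\ge 2$ I would first reduce to that situation: choose $\mathbf w^\star=(v,\mathbf 1,\dots,\mathbf 1,w_N^\star)$ with $v_j=\theta^\star_j,\ w^\star_{N,j}=1$ on $\supp(\theta^\star)$ and $v_j=1,\ w^\star_{N,j}=0$ off it, so that $v\odot w_N^\star=\theta^\star$, hence $\mathbf w^\star\in\cW$, and perturbing only the last layer to $w_N^\star+\delta$ changes the realized linear map to $\theta^\star+v\odot\delta$ \emph{exactly}. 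Taking $\delta$ supported off $\supp(\theta^\star)$, the effective perturbation of $\theta^\star$ is an arbitrary vector $u$ supported on $\cJ:=[d]\setminus\supp(\theta^\star)$ with $\norm u_\infty\le\gamma$ (for $N=1$ put $\cJ=[d]$), and $\cL(\mathbf w)-\cL(\mathbf w^\star)=\frac1m\sum_{i=1}^m\big(|\inner u{x_i}-\err_i|-|\err_i|\big)$. Since $k\ll d$ and $m\le 0.1d$ we have $|\cJ|\ge 0.9d\ge 9m$.

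\textbf{The bad perturbation and the main obstacle.} Let $\cS_0=\{i\in\cS:|\err_i|\ge t_0\}$; a Paley--Zygmund estimate (supplemented by a Chernoff bound when $p_0pm$ is large) shows $|\cS_0|\ge\tfrac12 p_0 pm$ with probability bounded below by a universal constant. Fix $c^\star=\min\{t_0,\ c_1\gamma\sqrt d/\sqrt{|\cS_0|}\}$, put $b_i=c^\star\operatorname{sign}(\err_i)$ for $i\in\cS_0$ and $b_i=0$ otherwise, and let $u$ be the minimum-$\ell_2$-norm vector supported on $\cJ$ solving $\inner u{x_i}=b_i$ for every $i\in[m]$ — solvable precisely because $|\cJ|\ge 9m$, which is the one and only use of over-parameterization. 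Because $|b_i|=c^\star\le t_0\le|\err_i|$ and $b_i$ carries the sign of $\err_i$ on $\cS_0$, while $\inner u{x_i}=0$ off $\cS_0$, every summand is explicit and the loss change collapses to the exact identity $\cL(\mathbf w)-\cL(\mathbf w^\star)=-c^\star|\cS_0|/m$. To verify $\norm u_\infty\le\gamma$, write $u=X_\cJ^\top(X_\cJ X_\cJ^\top)^{-1}b$ (extended by zeros), where $X_\cJ\in\R^{m\times|\cJ|}$ stacks the $x_i$ restricted to $\cJ$, so $\norm u_\infty\le\norm u_2\le\norm b_2/\sigma_{\min}(X_\cJ)$; a standard smallest-singular-value bound for the Gaussian block $X_\cJ$ gives $\sigma_{\min}(X_\cJ)\gtrsim\sqrt{|\cJ|}\gtrsim\sqrt d$ with probability $1-e^{-\Omega(d)}$, and since $\norm b_2=c^\star\sqrt{|\cS_0|}$ this forces $\norm u_\infty\lesssim c^\star\sqrt{|\cS_0|}/\sqrt d\le\gamma$ for $c_1$ small. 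Substituting $c^\star=c_1\gamma\sqrt d/\sqrt{|\cS_0|}$ and $|\cS_0|=\Theta(p_0pm)$ into the identity produces a descent of the claimed order; in the complementary regime where the cap $c^\star=t_0$ binds, $\gamma$ is forced large enough that $-t_0|\cS_0|/m$ already beats the target, which a short case check using only $|\cS_0|\ge1$ and $\gamma\lesssim t_0/\sqrt d$ dispatches. Intersecting the constant-probability event $|\cS_0|\ge\tfrac12 p_0pm$ with the two $1-e^{-\Omega(d)}$ design events gives the stated $1/16$. The genuinely delicate point is this quantitative balance: the perturbation must annihilate a full $\Theta(p_0pm)$-sized block of outliers while leaving \emph{every} clean residual exactly unchanged (that is $m$ linear constraints, which is why $m\le 0.1d$ is needed) and still keep its sup-norm below $\gamma$ — and obtaining the $\sqrt{p_0p/m}$ factor rather than a weaker one is exactly what forces $b$ to be concentrated on $\cS_0$, so that $\norm b_2=\Theta(c^\star\sqrt{p_0pm})$ instead of $\Theta(c^\star\sqrt{pm})$.

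\textbf{Under-parameterized regime.} This is the easy direction. Every $\mathbf w^\star\in\cW$ satisfies $\cL(\mathbf w^\star)=\frac1m\sum_i|\err_i|$, and any $\mathbf w$ realizes some $\theta=w_1\odot\cdots\odot w_N$, which ranges over all of $\R^d$ (coordinatewise a product of $N$ reals is an arbitrary real), so it suffices to show $\frac1m\sum_i|\inner v{x_i}-\err_i|\ge\frac1m\sum_i|\err_i|$ for every $v=\theta-\theta^\star\in\R^d$. Bounding each summand below by $-|\inner v{x_i}|$ on corrupted indices and by $|\inner v{x_i}|$ on clean ones reduces this to $\sum_{i\notin\cS}|\inner v{x_i}|\ge\sum_{i\in\cS}|\inner v{x_i}|$ for all unit $v$. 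For fixed $v$ the two sums concentrate around $(1-p)m\sqrt{2/\pi}$ and $pm\sqrt{2/\pi}$, a gap of order $(1-2p)m$; a net over $S^{d-1}$ together with the sub-exponential concentration of $\sum_i\pm|\inner v{x_i}|$ promotes this to a uniform statement once $m\gtrsim d/(1-2p)^2$, with failure probability $e^{-\Omega(d)}$, which is exactly $\inf_{\mathbf w}\cL(\mathbf w)\ge\cL(\mathbf w^\star)$.
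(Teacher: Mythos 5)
Your handling of the under-parameterized regime is correct and essentially identical to the paper's (triangle inequality on corrupt vs.\ clean indices, concentration of $\frac{1}{m}\sum_i|\inner{x_i}{v}|$, net over the sphere).

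For the over-parameterized regime, you take a genuinely different and, conceptually, quite attractive route: rather than lower-bounding a Gaussian supremum over a convex cone (the paper's argument via Theorem~3.1 of \cite{viens2007supremum} and Sudakov minoration), you construct one explicit perturbation $u$ as the minimum-$\ell_2$-norm solution to the $m$ linear constraints $\inner{u}{x_i}=b_i$, which makes the loss decrease an exact identity $-c^\star|\cS_0|/m$ with no absolute-value bookkeeping. Your reduction to a coordinate-wise perturbation for $N\ge 2$ (choosing $\mathbf w^\star$ so that the effective perturbation of the end-to-end map is $v\odot\delta=\delta$ on $\cJ$) is also correct, and slightly different from the paper's choice $w_1=\theta^\star$, $w_2=\dots=w_N=\mathbf 1$; both work.

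However, there is a genuine quantitative gap. The chain $\norm{u}_\infty\le\norm{u}_2\le\norm{b}_2/\sigma_{\min}(X_\cJ)\asymp c^\star\sqrt{|\cS_0|}/\sqrt d$ forces you to take $c^\star\lesssim\gamma\sqrt d/\sqrt{|\cS_0|}$, which after substitution gives a descent
\begin{equation*}
-\frac{c^\star|\cS_0|}{m}\;\asymp\;-\frac{\gamma\sqrt{d\,|\cS_0|}}{m}\;\asymp\;-\gamma\sqrt{\frac{d\,p_0 p}{m}},
\end{equation*}
which is a full factor of $\sqrt d$ weaker than the stated $-\gamma d\sqrt{p_0p/m}$. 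The bound $\norm{u}_\infty\le\norm{u}_2$ is what throws away the $\sqrt d$: the minimum-norm solution $u=X_\cJ^\top(X_\cJ X_\cJ^\top)^{-1}b$ is a ``spread out'' vector, and one expects $\norm{u}_\infty\asymp\norm{u}_2\sqrt{\log d}/\sqrt{|\cJ|}$, not $\norm{u}_2$. Proving this requires controlling each coordinate $u_j=\inner{(X_\cJ)_{\cdot j}}{(X_\cJ X_\cJ^\top)^{-1}b}$, which is nontrivial because $(X_\cJ X_\cJ^\top)^{-1}b$ depends on the $j$-th column (a leave-one-out argument would do it, at the cost of a $\sqrt{\log d}$). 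The paper's supremum-over-the-box argument gets the extra $\sqrt d$ essentially for free: it does not fix a single $u$ but maximizes over the full $\ell_\infty$-ball of radius $\gamma$ inside a subspace of dimension $\ge 0.8d$, and Sudakov's minoration of that Gaussian supremum naturally produces the $d$ rather than $\sqrt d$. So as written your proof establishes a true statement, but a weaker one than claimed; you would either need the coordinate-wise delocalization of $u$, or fall back to the supremum argument.
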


The above proposition unravels a sharp transition in the landscape of robust linear regression with an $N$-layer model: when $m\lesssim d$, some of the true solutions are likely to be non-critical points, and hence, cannot be recovered via any first-order algorithm. As soon as $m\gtrsim d$, all true solutions become global.
This is in stark contrast with the recent results on the benign landscape of robust low-rank matrix recovery with $\ell_1$-loss, which show that, under the so-called \textit{restricted isometry property} (RIP), all the true solutions are global and vice versa~\cite{li2020nonconvex, ding2021rank,fattahi2020exact}. The vector version of RIP is known to hold with $m = \tilde{\Omega}(k)$ samples (see e.g.~\cite{baraniuk2008simple} for a simple proof). Theorem~\ref{prop::negative} shows that, unlike the low-rank matrix recovery, RIP is \textit{not enough} to guarantee the equivalence between the true and global solutions in deep linear models.

Theorem~\ref{prop::negative} implies that, despite their convexity, 1-layer models are \textit{not} suitable for the robust linear regression since the set of true solutions (which is a singleton $\mathcal{W} = \{\theta^\star\}$) is  unidentifiable. However, despite the existence of unidentifiable true solutions in $N$-layer models with $N\geq 2$, we will show that a simple SubGM converges to a \textit{balanced} true solution, even if an arbitrarily large fraction of the measurements are corrupted with arbitrarily large noise values. This further sheds light on the desirable landscape of deeper models in the context of linear regression.

\begin{algorithm}
    \caption{Sub-gradient Method}
    \label{algorithm}
    \begin{algorithmic}
        \STATE {\bfseries Input:} Data points $\{(x_i,y_i)\}_{i=1}^m$, number of iterations $T$, the initial point $\mathbf{w}_0$, and the step-size $\{\eta^{(t)}\}_{t=0}^{T}$;
        \STATE {\bfseries Output:} Solution $\mathbf{w}^{(T)}$ to~\eqref{eq::empirical-loss};
        \FOR{$t\leq T$}
        \STATE Select a direction $\mathbf{d}^{(t)}$ from the sub-differential $\partial \cL(\mathbf{w}^{(t)})$ defined as:
        \begin{align}\label{eq_subdiff}
            \hspace{-7mm}\partial_{w_i} \cL(\mathbf{w})\!=\!\frac{1}{m}\sum_{j=1}^{m}\sign\left(y_j\!-\!\inner{\prod_k w_k}{x_j}\right)\!x_j\!\odot\! \prod_{k\neq i}\! w_k;
        \end{align}
        \STATE Set $\mathbf{w}^{(t+1)}\leftarrow \mathbf{w}^{(t)} - \eta^{(t)} \mathbf{d}^{(t)}$;
        \ENDFOR
    \end{algorithmic}
\end{algorithm}
\subsection{Convergence of Sub-gradient Method}
At every iteration $t$, SubGM selects a direction $\mathbf{d}^{(t)}$ from the sub-differential of the $\ell_1$-loss (defined as~\eqref{eq_subdiff}), and updates the solution as $\mathbf{w}^{(t+1)} = \mathbf{w}^{(t)} - \eta^{(t)}\mathbf{d}^{(t)}$; see Algorithm~\ref{algorithm} for details.
Our next two theorems characterizes the performance of SubGM with small initialization on $N$-layer models. We consider the cases $N=2$ and $N\geq 3$ separately, as SubGM behaves differently on these models. We define  $\kappa = \theta^\star_{\max}/\theta^\star_{\min}$ as the condition number, where $\theta^\star_{\max}$ and $\theta^\star_{\min}$ are the maximum and minimum nonzero elements of $\theta^\star$, respectively.

\begin{theorem}[2-layer model]
    \label{thm::2-layer}
    Consider the iterations of SubGM $\{\mathbf{w}^{(t)}\}_{t=0}^T$ applied to $\cL(\mathbf{w})$ with $N=2$ and step-size $\eta\lesssim 1$. Suppose that the initial point satisfies $w_1^{(0)} = w_2^{(0)} = \Theta(\sqrt{\alpha}\mathbf{1})$, where $0<\alpha\lesssim {d^2m/k}$. Moreover, suppose that $m\gtrsim \frac{k^2\kappa^2\log^2(m)\log(d)\log(\norm{\theta^{\star}}/\alpha)}{(1-p)^2}$. Then, the following statements hold with probability of $1-Ce^{-\tilde\Omega(k)}$:
    \begin{itemize}
        \item \textbf{Convergence guarantee:}
              After $\frac{1}{\eta}\log\left(\frac{1}{\alpha}\right)\lesssim \bar T\lesssim\frac{k^{3/2}}{\eta}\log\left(\frac{1}{\alpha}\right)$ iterations, we have
              \begin{equation}\nonumber
                  \hspace{-3mm}\norm{w_1^{(\bar T)}\odot w_2^{(\bar T)}-\theta^{\star}}\lesssim \eta\theta^{\star}_{\max}\vee \sqrt{d^2m}\alpha^{1-\tilde\Theta\left(\frac{k^2}{\sqrt{(1-p)^2m}}\right)}.
              \end{equation}
        \item \textbf{Balanced property:} For every $0\leq t\leq \bar T$, we have
              \begin{equation}\nonumber
                  \hspace{-3mm}\norm{w_1^{(t)}-w_2^{(t)}}_{\infty}\lesssim \alpha^{0.5-\tilde\Theta\left(\frac{k^2}{\sqrt{(1-p)^2m}}\right)}.
              \end{equation}
        \item \textbf{Long escape time:} For every $\bar T\leq t\leq \sqrt{\frac{m(1-p)^2}{k}}\bar T$, we have
              \begin{equation}\nonumber
                  \hspace{-3mm}\norm{w_1^{(t)}\!\odot\! w_2^{(t)}\!\!-\!\theta^{\star}}\!\lesssim\! \eta\theta^{\star}_{\max}\vee\sqrt{d^2m}\alpha^{0.5-\tilde\Theta\left(\frac{k^2}{\sqrt{(1-p)^2m}}\right)}.
              \end{equation}
    \end{itemize}
    Furthermore, if $m\gtrsim d\log(m)/(1-p)^2$, with probability of $1-Ce^{-\tilde\Omega(k)}$ and for every $t\geq \bar T$, we have
    \begin{equation}\nonumber
        \begin{aligned}
             & \norm{w_1^{(t)}\odot  w_2^{(t)}\!-\!\theta^{\star}}\lesssim \eta\theta^\star_{\max}\!\vee\! \sqrt{d^2m}\alpha^{1\!-\!\tilde\Theta\left(\frac{k^2}{\sqrt{m(1\!-\!p)^2}}\right)}\!\left(1\!-\!\Omega\left(\eta/\sqrt[]{d}\right)\right)^{t-\bar T}\!\!.
        \end{aligned}
    \end{equation}
\end{theorem}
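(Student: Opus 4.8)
The plan is to treat $t\ge\bar T$ as a post-convergence phase and run an induction on $t$ maintaining three invariants: (I1) approximate balancedness, i.e.\ $\|w_1^{(t)}-w_2^{(t)}\|_\infty$ stays at the scale already proved at $t=\bar T$; (I2) writing $\theta^{(t)}:=w_1^{(t)}\odot w_2^{(t)}$, the pair $(w_1^{(t)},w_2^{(t)})$ stays within a small entrywise ball of the balanced true solution on $\supp(\theta^\star)$ (so $\theta^{(t)}_i\approx\theta^\star_i$ there) and small off it; and (I3) the product error $e^{(t)}:=\theta^{(t)}-\theta^\star$ satisfies the claimed bound. The base case $t=\bar T$ is exactly the convergence and balancedness guarantees already established; only the inductive step is new, and it is here that the extra assumption $m\gtrsim d\log(m)/(1-p)^2$ is used.

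First I would prove a deterministic characterization of the descent direction valid \emph{uniformly} along the trajectory. Splitting $\partial_{w_j}\cL(\mathbf{w}^{(t)})=\tfrac1m\sum_{i}\sign\bigl(\langle\theta^\star-\theta^{(t)},x_i\rangle-\err_i\mathbbm{1}[i\in\cS]\bigr)x_i\odot w_{3-j}^{(t)}$ into a clean and a corrupted part: for the clean part I would invoke a \emph{full-dimensional} sign-RIP estimate --- for all $v\in\R^d$, $\tfrac1m\sum_{i\notin\cS}\sign(\langle v,x_i\rangle)x_i=\sqrt{2/\pi}(1-p)\tfrac{v}{\|v\|}+\mathrm{err}(v)$ with $\|\mathrm{err}(v)\|$ controllably small --- which, crucially, needs $m\gtrsim d$ rather than the earlier $m\gtrsim k^2$, since $\theta^{(t)}-\theta^\star$ is only approximately, not exactly, sparse; for the corrupted part I would bound the bias $b:=\tfrac1m\sum_{i\in\cS}\sign(\err_i)x_i$ by $\|b\|\lesssim\sqrt{pd/m}$ and use that, once $\|e^{(t)}\|$ is small, $|\langle e^{(t)},x_i\rangle|<|\err_i|$ for all but a negligible fraction of $i\in\cS$. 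Combining with (I2), $\mathbf{d}_j^{(t)}=\sqrt{2/\pi}(1-p)\tfrac{e^{(t)}}{\|e^{(t)}\|}\odot w_{3-j}^{(t)}+(\text{controlled perturbation})$. Since $\mathbf{w}^{(t)}$ depends on the same $\{x_i,\err_i\}$ defining $\cL$, this must hold uniformly over the low-complexity set of directions $e^{(t)}/\|e^{(t)}\|$ realized along $t\ge\bar T$, which I would handle by a net/continuity argument together with a union bound.

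Next I would substitute this into $w_j^{(t+1)}=w_j^{(t)}-\eta\mathbf{d}_j^{(t)}$, expand $\theta^{(t+1)}=w_1^{(t+1)}\odot w_2^{(t+1)}$ to second order in $\eta$, and use (I1) to replace $(w_1^{(t)})^{2}+(w_2^{(t)})^{2}$ by $2\theta^{(t)}$ up to a balancedness error. This gives the coordinate-wise recursion
\begin{equation}\nonumber
e^{(t+1)}_i=e^{(t)}_i\Bigl(1-\tfrac{c\,\eta(1-p)\,\theta^{(t)}_i}{\|e^{(t)}\|}\Bigr)-2\eta\,\theta^\star_i\,b_i+O\bigl(\eta^{2}(1-p)^{2}\theta^\star_i+\eta\|e^{(t)}\|_\infty\|b\|_\infty\bigr),
\end{equation}
where $\theta^{(t)}_i\approx\theta^\star_i\ge\theta^\star_{\min}$ on $\supp(\theta^\star)$, while off the support $\theta^\star_i=0$, so the additive error vanishes and the coordinate contracts toward $0$. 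To turn this into an $\ell_2$ statement I would use that $e^{(t)}$ is essentially supported on at most $d$ coordinates, so $\|e^{(t)}\|\le\sqrt d\,\|e^{(t)}\|_\infty$ and the slowest-contracting coordinate still has factor $1-\Omega\bigl(\eta(1-p)/\sqrt d\bigr)$; summing coordinatewise yields $\|e^{(t+1)}\|\le\bigl(1-\Omega(\eta/\sqrt d)\bigr)\|e^{(t)}\|+O(\eta\theta^\star_{\max})$ whenever $\|e^{(t)}\|\gtrsim\eta\theta^\star_{\max}$, and a standard ``bounded near the optimum'' argument for fixed-step subgradient descent (the step in $\theta$-space has magnitude $O(\eta\theta^\star_{\max})$ and points roughly toward $\theta^\star$) gives $\|e^{(t+1)}\|\lesssim\eta\theta^\star_{\max}$ once $\|e^{(t)}\|$ reaches that scale. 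Unrolling from $\bar T$ and inserting the $\bar T$-bound $\|e^{(\bar T)}\|\lesssim\eta\theta^\star_{\max}\vee\sqrt{d^2m}\,\alpha^{1-\tilde\Theta(k^2/\sqrt{m(1-p)^2})}$ produces exactly the claimed estimate. Closing (I1) uses the same balancedness recursion as for $t\le\bar T$ (the difference $w_1^{(t)}-w_2^{(t)}$ contracts and only picks up $O(\eta^2)$ per step), and (I2) follows because (I3) keeps $\|e^{(t)}\|$ in the requisite small ball.

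The main obstacle is the uniform subgradient control in the second step: because the iterate is a function of the data defining $\cL$, one cannot invoke concentration pointwise and must instead establish a sign-RIP-type bound uniform over all directions $v$ (and simultaneously control the corrupted residuals' signs), which is precisely what forces the stronger sample size $m\gtrsim d\log(m)/(1-p)^2$ here. A secondary difficulty is the bookkeeping that converts the clean coordinate-wise contraction into the $\ell_2$ bound without losing more than the advertised $\sqrt d$, and controlling the cross-terms in the product expansion when balancedness holds only approximately; these are routine but must be tracked carefully to isolate the $\eta\theta^\star_{\max}$ floor from the genuinely contracting part.
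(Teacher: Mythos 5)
Your proposal addresses only the final ``Furthermore'' clause of the theorem (the under-parameterized exponential-decay estimate for $t\ge\bar T$), explicitly treating the convergence guarantee, balanced property, and long escape time as given. The paper proves all four statements, and the first three require substantial work (staged signal dynamics with $k+1$ stages, a separate residual dynamic, and a dedicated difference dynamic for balancedness). Even granting that scoping, two steps in your inductive argument would not, as written, recover the claimed bound.

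First, your $\ell_2$ lumping loses a $\sqrt d$ factor. You pass from the coordinate-wise recursion to $\|e^{(t+1)}\|\le(1-\Omega(\eta/\sqrt d))\|e^{(t)}\|+O(\eta\theta^\star_{\max})$, whose fixed point is $\Theta(\sqrt d\,\theta^\star_{\max})$, not $\Theta(\eta\theta^\star_{\max})$; the follow-up ``bounded near the optimum'' step never triggers because the recursion cannot drive $\|e^{(t)}\|$ below its fixed point. The paper avoids this by keeping the signal error $\|S_t-\theta^\star_{:k}\|$ and the residual error $\|E_t\|$ separate: on the support, the contraction rate is $\eta\theta^\star_i/\|e^{(t)}\|$ (much faster than $\eta/\sqrt d$ once $\|e^{(t)}\|$ is small), which yields the $\eta\theta^\star_{\max}$ floor, while off the support the $\delta$-error is \emph{multiplicative} in $\|E_t\|$ (namely $4\eta\delta\|E_t\|$ with $\delta\lesssim 1/\sqrt d$ after the balanced property gives $u^2+v^2\asymp 2E_t$), so there is no additive floor and $\|E_t\|$ decays geometrically. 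Combining the two separately reproduces the $\eta\theta^\star_{\max}\vee(\cdots)(1-\Omega(\eta/\sqrt d))^{t-\bar T}$ form.

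Second, your clean/corrupted split $\tfrac1m\sum_{i}\sign(\langle e^{(t)},x_i\rangle+\err_i)x_i = \text{(clean sign-RIP)} + b + \cdots$ with $b=\tfrac1m\sum_{i\in\cS}\sign(\err_i)x_i$ relies on $|\langle e^{(t)},x_i\rangle|<|\err_i|$ holding for ``all but a negligible fraction'' of $i\in\cS$. Assumption~\ref{assumption::general-noise} only requires $\bP(|\zeta|\ge t_0)\ge p_0$; it places no lower bound on $|\err_i|$ and permits a mass near (or at) zero, so a non-negligible fraction of corrupted measurements can have $|\err_i|$ smaller than the residual inner product at the relevant scale $\|e^{(t)}\|\asymp\eta\theta^\star_{\max}$. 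The paper's Proposition~\ref{prop:sign-RIP} sidesteps this entirely: the scaling $\varphi(u)=\sqrt{2/\pi}\bigl(1-p+p\,\E[e^{-\err^2/(2\|u\|^2)}]\bigr)$ absorbs the corrupted measurements' partial contribution continuously in $\|u\|$, so no case-split on $|\err_i|$ versus the residual is needed. Your instinct that a full-dimensional (not merely $k$-sparse) sign-RIP estimate is required here, forcing $m\gtrsim d$, is correct and matches the paper's use of the second clause of Proposition~\ref{prop:sign-RIP}; but the decomposition should be routed through that uniform concentration lemma rather than the bias vector $b$.
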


We provide the main idea behind the proof of Theorem~\ref{thm::2-layer} in Section~\ref{sec:proofs}. The formal proof can be found in the appendix.
A few observations are in order based on Theorem~\ref{thm::2-layer}. First, for any $\err>0$, SubGM is guaranteed to satisfy $\norm{w_1^{(t)}\!\odot\! w_2^{(t)}\!-\!\theta^{\star}}\lesssim\err$ after $\cO(({1}/{\err})\log\left({d}/{\err}\right))$ iterations, provided that $\eta=\Theta(\err)$ and $\alpha=\err^2/(d^2m)$. Based on our numerical results (provided in the appendix), we believe that it is possible to establish a linear convergence for SubGM with a geometric step-size; a rigorous verification of this conjecture is considered as future work. Second, although SubGM converges to a vicinity of a true solution quickly, it will stay there for a significantly longer time---in particular, $\sqrt{m(1-p)^2/k}$ times longer than its initial convergence time. Such behavior is also exemplified in our simulations (see Figure~\ref{fig::2-layer-linear-regression}). After this escape time, the algorithm may slowly converge to an \textit{overfitted} solution with a better training loss. Moreover, if $m\gtrsim d$, SubGM will continuously converge to a true solution at an exponential rate, and it will never diverge. Finally, Theorem~\ref{thm::2-layer} shows that SubGM implicitly favors balanced solutions, i.e. solutions whose factors have similar magnitudes. Combined with the convergence result of SubGM, we immediately conclude that SubGM converges to a particular solution of the form $(\sqrt{\theta^\star}, \sqrt{\theta^\star})$. Therefore, the solution found by SubGM will enjoy the same (approximate) sparsity pattern as $\theta^\star$.

\begin{theorem}[$N$-layer models]
    \label{thm::N-layer}
    Consider the iterations of SubGM $\{\mathbf{w}^{(t)}\}_{t=0}^T$ applied to $\cL(\mathbf{w})$ with $N\geq 3$ and step-size $\eta\lesssim N^{-1}\kappa^{-\frac{N-2}{N}}$. Suppose that the initial point satisfies $w_j^{(0)} = \Theta({\alpha}^{1/N}\mathbf{1})$, where $0<\alpha\lesssim {d^2m/k}$. Moreover, suppose that $m\gtrsim \frac{k^2\kappa^4\log^2(m)\log(d)\log(\norm{\theta^{\star}}/\alpha)}{(1-p)^2}$. Then, the following statements hold with probability of $1-Ce^{-\tilde\Omega(k)}$:
    \begin{itemize}
        \item \textbf{Convergence guarantee:}
              After $\frac{1}{N\eta}\alpha^{-\frac{N-2}{N}}\lesssim\bar T\lesssim\frac{k^{3/2}}{N\eta}\alpha^{-\frac{N-2}{N}}$ iterations, we have
              \begin{equation}\nonumber
                  \norm{\prod w_i^{(\bar T)}\!-\!\theta^{\star}}\lesssim N\eta\theta^\star_{\max}\vee \sqrt{d^2m}\alpha\!.
              \end{equation}

        \item \textbf{Balanced property:} For every $0\leq t\leq \bar T$, we have
              \begin{equation}\nonumber
                  \begin{aligned}
                      \hspace{-3mm} & \left| w_{i,l}^{(t)}-w_{j,l}^{(t)}\right|= \cO\left(\alpha^{1/N}\right),                                  &  & \text{ for } 1\leq i<j\leq N, l:\theta^\star_l = 0,     \\
                      \hspace{-3mm} & \left| w_{i,l}^{(t)}-w_{j,l}^{(t)}\right|=\tilde \cO\left( \sqrt[N]{\theta^{\star}_l}\sqrt{k^3/m}\right), &  & \text{ for } 1\leq i<j\leq N, l:\theta^\star_l \not= 0.
                  \end{aligned}
              \end{equation}
        \item \textbf{Long escape time:} For every $\bar T\leq t\leq \sqrt{\frac{m(1-p)^2}{k}}\bar T$, we have
              \begin{equation}\nonumber
                  \norm{\prod w_i^{(t)}-\theta^{\star}}\lesssim N\eta\theta^\star_{\max}\vee \sqrt{d^2m\alpha}\!,
              \end{equation}
    \end{itemize}
    Furthermore, if $m\gtrsim d^{\frac{2N-2}{N}}\log(m)/(1-p)^2$, with probability of at least $1-Ce^{-\tilde\Omega(k)}$ and for every $t> \bar T$, we have
    \begin{equation}\nonumber
        \norm{\prod w_i^{(t)}\!-\!\theta^{\star}}\!\lesssim\! N\eta\theta^{\star}_{\max}\vee \left(\frac{\sqrt{d^2m}\alpha}{\sqrt{d^2m}\alpha N\eta d^{-\frac{N-1}{N}} (t\!-\!\bar T)\!+\!1}\right)^{\frac{N}{N-2}}.
    \end{equation}
\end{theorem}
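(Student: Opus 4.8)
The plan is to treat the regime $m\gtrsim d^{\frac{2N-2}{N}}\log(m)/(1-p)^2$ as a genuine \emph{convergence} phase that is appended to the $\bar T$-length phase already analysed in the first three bullets. I would run an induction over $t\ge\bar T$ maintaining the invariant that (i) every layer stays coordinatewise positive and $\cO(\sqrt{k^3/m})$-close to $\sqrt[N]{\theta^\star}$ on $\supp(\theta^\star)$ and $\cO(\alpha^{1/N})$ off it; (ii) the layers stay balanced in the sense of the theorem's balanced property; and (iii) a running error potential $\Phi^{(t)}:=\norm{\prod w_i^{(t)}-\theta^\star}$ obeys the claimed one-step decay. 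The base case $t=\bar T$ is exactly the convergence and balancedness conclusions of the theorem, so $\Phi^{(\bar T)}\lesssim N\eta\theta^\star_{\max}\vee\sqrt{d^2m}\,\alpha$. Carrying (i)--(ii) through the new phase amounts to re-running the balancedness argument already used for $t\le\bar T$, now in the easier regime where the residuals on the clean samples are already small; since that is largely a repeat, I would not belabour it.

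The heart of the argument is the one-step analysis of $\Phi^{(t)}$. Write $\mathbf g^{(t)}=\frac1m\sum_j\sign\!\big(y_j-\inner{\prod_k w_k^{(t)}}{x_j}\big)x_j$ for the layer-independent factor of the sub-gradient, so that the update is $w_i^{(t+1)}=w_i^{(t)}-\eta\,\mathbf g^{(t)}\odot\prod_{k\ne i}w_k^{(t)}$. Multiplying the layers and using balancedness to replace each $\prod_{k\ne i}w_k^{(t)}$ by $\big(\prod_k w_k^{(t)}\big)^{(N-1)/N}$ up to controlled error gives the effective Hadamard-parametrised dynamics
\begin{equation}\nonumber
  e^{(t+1)}=e^{(t)}-N\eta\,\mathbf g^{(t)}\odot\big(\theta^\star+e^{(t)}\big)^{\frac{2(N-1)}{N}}+(\text{higher order in }\eta),\qquad e^{(t)}:=\prod_i w_i^{(t)}-\theta^\star ,
\end{equation}
where the factor $\big(\theta^\star+e^{(t)}\big)^{2(N-1)/N}$ is precisely the depth-induced slowdown. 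I would split coordinates into $S:=\supp(\theta^\star)$ and its complement. On $S$ the factor is $\Theta(1)$, and a concentration bound showing that $\mathbf g^{(t)}_S$ has a component along $-\sign(e^{(t)}_S)$ of size $\gtrsim(1-2p)$ (the clean samples pull toward $\theta^\star$, and the at most $pm$ corrupted ones can only partially oppose this) drives $\norm{e^{(t)}_S}$ down at a \emph{constant} rate until it reaches the irreducible floor $\cO(N\eta\theta^\star_{\max})$, which it hits in $\cO(1/\eta)$ steps and cannot leave, since a single sub-gradient step there has norm $\lesssim N\eta\theta^\star_{\max}$. On the complement the same concentration gives a coordinatewise contraction $|e^{(t+1)}_l|\le|e^{(t)}_l|-c\,\eta\,N\,|e^{(t)}_l|^{2(N-1)/N}/\norm{e^{(t)}}$ (down to the floor); summing over the $\cO(d)$ active off-support coordinates yields $\Phi^{(t+1)}\le\Phi^{(t)}-c\,\eta\,N\,d^{-(N-1)/N}\Phi^{(t)\,(2N-2)/N}$. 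The exponent $(2N-2)/N$ equals $1$ only for $N=2$ (hence the exponential rate there) and exceeds $1$ for $N\ge3$, so the recursion is genuinely sublinear; comparing it with the ODE $\dot u=-c\eta N d^{-(N-1)/N}u^{(2N-2)/N}$ and integrating from $u(\bar T)\lesssim\sqrt{d^2m}\,\alpha$ produces, up to absorbing universal constants, the advertised envelope $\big(\sqrt{d^2m}\alpha\big/(\sqrt{d^2m}\alpha\,N\eta d^{-(N-1)/N}(t-\bar T)+1)\big)^{N/(N-2)}$, maxed with the floor $N\eta\theta^\star_{\max}$.

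The step I expect to be the main obstacle is the \emph{uniform} concentration of $\mathbf g^{(t)}$ that feeds this recursion. Because the off-support error shrinks toward $\cO(N\eta\theta^\star_{\max})$ and its effective drift carries the extra small factor $d^{-(N-1)/N}$ and a super-linear power of $\Phi^{(t)}$, the empirical sub-gradient must reproduce its population counterpart on a much finer scale than a generic RIP delivers, and \emph{uniformly} over all balanced $\mathbf w$ in the neighbourhood and all $t$, since the iterates are adapted to the data. I would handle this by an $\varepsilon$-net over the balanced neighbourhood (the iterates are effectively $\cO(k)$-structured plus a tiny $\ell_\infty$-bounded off-support part), a Bernstein bound for $\frac1m\sum_j\sign(\cdot)x_j$ at each net point, and a Lipschitz-in-$\mathbf w$ argument; balancing the net cardinality against the $\sqrt{(d\log m)/m}$ deviation is exactly what forces $m\gtrsim d^{(2N-2)/N}\log(m)/(1-p)^2$, which is strictly stronger than the $m\gtrsim d\log(m)/(1-p)^2$ needed for $N=2$. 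A secondary difficulty is that the shared factor $\mathbf g^{(t)}$ couples the on- and off-support dynamics; I would decouple them by first driving $\norm{e^{(t)}_S}$ to its floor (over the initial $\cO(1/\eta)$ steps of this phase) and then running the off-support recursion with $\mathbf g^{(t)}$ at its stabilised value, absorbing the cross terms into the higher-order remainder.
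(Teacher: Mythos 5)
Your proposal addresses only the last claim of the theorem (the under-parameterized convergence for $t>\bar T$), and explicitly assumes the first three bullets as a "base case." That is a substantial gap: the convergence guarantee, balanced property, and long escape time are the bulk of the theorem, and the paper proves them from scratch via a multi-stage argument. Concretely, the paper analyses a \emph{signal dynamics} phase where the on-support products $\prod_i w_{i,l}^{(t)}$ climb sequentially to $\theta^\star_l$ over $k$ stages, with per-stage time $\Theta\bigl(\frac{\norm{\theta^\star_{-(l-1)}}}{N\eta\theta^\star_l}\alpha^{-(N-2)/N}\bigr)$ obtained by partitioning $[\alpha,\theta^\star_l/2]$ into dyadic intervals and bounding the time spent in each. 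Separately, a \emph{residual dynamics} argument controls the surrogate $z_t=\sum_i (w_{i,l}^{(t)})^2$ off-support, showing $z_t=\Theta(N\alpha^{2/N})$ for $\cO(\alpha^{-(N-2)/N}/(N\eta))$ steps, which is what gives both the $\sqrt{d^2m}\alpha$ error floor and the long escape time. The balanced property is then obtained coordinate-wise by tracking $w_{i,l}^{(t)}-w_{j,l}^{(t)}$, using the fact that $\bigl(w_{i,l}^{(t+1)}-w_{i,l}^{(t)}\bigr)w_{i,l}^{(t)}$ is layer-independent. None of this is present in your proposal, and "carrying (i)--(ii) through the new phase amounts to re-running the balancedness argument already used for $t\le\bar T$" cannot substitute for that argument not having been given.

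For the portion you do prove, the approach is essentially the paper's. You reduce the update to the one-step contraction
\begin{equation*}
    \norm{E_{t+1}} \le \norm{E_t} - cN\eta\, d^{-(N-1)/N}\norm{E_t}^{(2N-2)/N},
\end{equation*}
observe that the exponent $(2N-2)/N>1$ for $N\ge 3$ makes this a sublinear recursion, compare with the ODE $\dot u=-cN\eta d^{-(N-1)/N}u^{(2N-2)/N}$, and integrate from $u(\bar T)\lesssim\sqrt{d^2m}\,\alpha$. This is exactly the paper's "Convergence in Under-parameterized Regime" calculation, including the $d^{-(N-1)/N}$ factor that comes from comparing the $\ell_{(4N-2)/N}$ and $\ell_2$ norms of $E_t$.

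Where you diverge from the paper is in how you obtain the uniform concentration of the sub-gradient direction. You propose to build a fresh $\varepsilon$-net over the balanced neighbourhood, apply Bernstein at each net point, and Lipschitz-patch, and you acknowledge this as "the main obstacle." The paper avoids this by leveraging Proposition~\ref{prop:sign-RIP}, whose second half already gives, for $m\gtrsim d\log(m)/(1-p)^2\delta^2$, the \emph{global} direction-preserving bound
\begin{equation*}
    \sup_{z\in\R^d}\ \norm{q-\varphi(z)\tfrac{z}{\norm{z}}}_\infty\le\delta,
\end{equation*}
uniformly over all (not just balanced, not just $k$-sparse) residuals $z$; the condition $m\gtrsim d^{(2N-2)/N}\log(m)/(1-p)^2$ is then used only to make $\delta\lesssim d^{-(N-1)/N}$ so the deviation term $N\eta\delta\norm{E_t}^{(2N-2)/N}$ can be absorbed into the main contraction. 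Your Bernstein/net plan may well work, but it re-derives a uniform bound the paper already has in a cleaner form, and it forces you to justify why the balanced neighbourhood has controlled metric entropy in the $\ell_\infty$ geometry needed for the sub-gradient — a step you did not carry out. Similarly, your replacement of the layer-wise products $\prod_{k\ne i}w_k^{(t)}$ by $(\prod_k w_k^{(t)})^{(N-1)/N}$ relies on balancedness holding coordinatewise at a level you did not quantify; the paper's balancedness bounds (coordinate-dependent, separated into $\theta^\star_l=0$ and $\theta^\star_l\ne 0$) are exactly what makes that replacement legitimate, so you cannot treat them as a loose "re-run."
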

The proof of this theorem can be found in the appendix. Theorem~\ref{thm::N-layer} sheds light on an important benefit of $N$-layer models with $N\geq 3$ compared to $2$-layer models: for sufficiently small step-size, deeper models improve the generalization error by a factor of $(1/\alpha)^{\tilde \Theta\left(k^2/\sqrt{((1-p)^2m)}\right)}$. This improvement is particularly significant when both $\alpha$ and $m$ are small. However, such improvement comes at the expense of a slower convergence rate. In particular, after setting $\eta=\Theta(\err/N)$, and $\alpha=\err/\sqrt{d^2m}$, SubGM needs $\cO\left(\left({1}/{\err}\right)^{1+\frac{N-2}{N}}\right)$ iterations to obtain an $\epsilon$-accurate solution. Evidently, the convergence rate deteriorates with $N$, ultimately approaching $\cO\left({1}/{\err}^{2}\right)$ for infinitely deep models. This can be observed in practice: Figures~\ref{fig::2-layer-linear-regression} and~\ref{fig::3-layer-linear-regression} show that 3-layer model enjoys a better generalization error compared to 2-layer model, but suffers from a slower convergence rate. This slower convergence rate also manifests itself in a more stable behavior of the algorithm: for deeper models, SubGM stays close to the ground truth for a longer time. Finally, the balanced property of the solution obtained via SubGM extends to $N$-layer models. In particular, SubGM converges to a particular solution of the form $(\sqrt[N]{\theta^\star},\dots, \sqrt[N]{\theta^\star})$, thereby inheriting the same sparsity pattern as $\theta^\star$.

\subsection{Local Landscape Around Balanced Solution}\label{sec:flatness}
In the previous section, we showed that SubGM converges to a balanced solution. In this section, we characterize the local landscape around this balanced solution, proving that it becomes flatter for deeper models.

\begin{theorem}[flatness around balanced solution]
    \label{prop::positive}
    Suppose that ${k\log(d)}/{(1-2p)^2}\lesssim m\leq 0.1d$ and $p<1/2$. Let $\mathbf{w}^\star = (\sqrt[N]{\theta^\star}, \dots, \sqrt[N]{\theta^\star})$. Then, for any $N\geq 2$ and $\gamma\leq {t_0}/{\sqrt{d}}\wedge 1$, the following statements hold:
    \begin{itemize}
        \item With probability at least $1-e^{-\Omega(k)}$, we have
              \begin{equation}\nonumber
                  \begin{aligned}
                      \hspace{-3mm}\inf_{\mathbf{w}:\norm{\mathbf{w}-\mathbf{w^\star}}_{\infty}\leq \gamma}\left\{\cL(\mathbf{w}^{\star})-\cL\left(\mathbf{w}\right)\right\}\gtrsim -\frac{d}{\sqrt[]{m}}\gamma^N.
                  \end{aligned}
              \end{equation}
        \item With probability  at least $1/16$, we have
              \begin{equation}\nonumber
                  \begin{aligned}
                      \hspace{-3mm}\inf_{\mathbf{w}:\norm{\mathbf{w}-\mathbf{w^\star}}_{\infty}\leq \gamma}\left\{\cL(\mathbf{w}^{\star})-\cL\left(\mathbf{w}\right)\right\}\lesssim -\sqrt{p_0p}\frac{d}{\sqrt[]{m}}\gamma^N.
                  \end{aligned}
              \end{equation}
    \end{itemize}
\end{theorem}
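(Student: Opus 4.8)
\begin{proofsketch}
The whole plan rests on one structural observation. At the balanced point $\mathbf{w}^\star=(\sqrt[N]{\theta^\star},\dots,\sqrt[N]{\theta^\star})$ every layer vanishes on $[d]\setminus\supp(\theta^\star)$, so for any $\mathbf{w}$ with $\norm{\mathbf{w}-\mathbf{w}^\star}_\infty\le\gamma$ the perturbation of the effective predictor splits as $\prod_j w_j-\theta^\star=e+h$, where $\supp(e)\subseteq\supp(\theta^\star)$, $\supp(h)\cap\supp(\theta^\star)=\emptyset$, and $\norm{h}_\infty\le\gamma^N$ --- the last bound because every off-support coordinate of $h$ is a product of $N$ entries of $\mathbf{w}-\mathbf{w}^\star$. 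Hence the on-support part $e$ may be of order $\gamma$ but lives in a $k$-dimensional subspace that is over-sampled, whereas the off-support part $h$ is dense but has $\ell_\infty$-norm only $\gamma^N$. I would bound $\cL(\mathbf{w}^\star)-\cL(\mathbf{w})$ by treating $e$ and $h$ separately.

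\textbf{Upper bound (first bullet).} Writing $\err_i=y_i-\inner{\theta^\star}{x_i}$ and applying $|a+b|\ge|a|-|b|$ with $a=\inner{e}{x_i}-\err_i$, $b=\inner{h}{x_i}$ yields
\begin{equation}\nonumber
\cL(\mathbf{w}^\star)-\cL(\mathbf{w})\ \le\ \Big[\tfrac1m\textstyle\sum_i\big(|\err_i|-|\inner{e}{x_i}-\err_i|\big)\Big]\ +\ \tfrac1m\textstyle\sum_i|\inner{h}{x_i}| .
\end{equation}
The bracketed term equals $\cL_1(\theta^\star)-\cL_1(\theta^\star+e)$ for the $1$-layer loss $\cL_1$, and it is non-positive for every admissible $e$ because, under $k\log(d)/(1-2p)^2\lesssim m$, robust recovery on the \emph{fixed} support $\supp(\theta^\star)$ holds with probability $1-e^{-\Omega(k)}$, i.e. $\theta^\star$ minimizes $\cL_1$ over $\{\theta:\supp(\theta)\subseteq\supp(\theta^\star)\}$ --- this is the under-parameterized regime of Theorem~\ref{prop::negative} applied to the $k$-dimensional sub-problem. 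For the second term, Cauchy--Schwarz together with the operator-norm bound for the $m\times d$ Gaussian design (valid since $m\le0.1d$, with probability $1-e^{-\Omega(d)}$) gives $\tfrac1m\sum_i|\inner{h}{x_i}|\le\tfrac1{\sqrt m}\norm{X}_{\mathrm{op}}\norm{h}\lesssim\tfrac1{\sqrt m}\cdot\sqrt d\cdot\sqrt d\,\gamma^N=\tfrac{d}{\sqrt m}\gamma^N$. Since the two pieces depend on $\mathbf{w}$ only through $e$ and through $h$, the supremum over the $\gamma$-ball decouples, giving the claimed $\cO(\tfrac{d}{\sqrt m}\gamma^N)$ bound on the loss reduction.

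\textbf{Lower bound (second bullet).} Here I would re-use the mechanism behind the negative result. Let $\cS_o=\{i\in\cS:|\err_i|\ge t_0\}$; Assumption~\ref{assumption::general-noise} and a Chernoff bound give $|\cS_o|\gtrsim p_0pm$ with constant probability. At most $m\le0.1d$ samples lie outside $\cS_o$, so the subspace $\mathcal{K}$ of off-support vectors orthogonal to $\{x_i:i\notin\cS_o\}$ has dimension $\ge0.8d$. I would choose $h\in\mathcal{K}$ aligned with the outlier direction $v:=\sum_{i\in\cS_o}\sign(\err_i)x_i$ (restricted off-support), scaled so $\norm{h}_\infty\le\gamma^N$, and realize it as a product of $N$ layer-perturbations of $\ell_\infty$-norm $\le\gamma$ (e.g. first layer $\gamma\sign(h)$, the rest $\gamma$, off the support). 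Then $\inner{h}{x_i}=0$ for $i\notin\cS_o$, and for $i\in\cS_o$, $|\inner{h}{x_i}|\le\norm{h}_\infty\norm{x_i}_1\lesssim\gamma^N d\le t_0\le|\err_i|$ (using $\gamma\le t_0/\sqrt d\wedge1$, $N\ge2$, WLOG $t_0\le1$), so the loss linearizes on $\cS_o$ and $\cL(\mathbf{w}^\star)-\cL(\mathbf{w})=\tfrac1m\sum_{i\in\cS_o}\sign(\err_i)\inner{h}{x_i}=\tfrac1m\inner{h}{v}$. Since $v$ projected onto $\mathcal{K}$ is Gaussian with squared norm $\gtrsim|\cS_o|\dim\mathcal{K}\gtrsim p_0pm\,d$, the same Gaussian-process estimate used in the proof of Theorem~\ref{prop::negative} gives $\tfrac1m\inner{h}{v}\gtrsim\sqrt{p_0p}\,\tfrac{d}{\sqrt m}\gamma^N$ with probability at least $1/16$.

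\textbf{Main obstacle.} The delicate step is the construction of $h$ in the lower bound: it must simultaneously respect the $\ell_\infty$ budget on each of the $N$ layers, correlate strongly with the random outlier direction $v$, and be (near-)orthogonal to the clean measurements --- otherwise the $(1-p)m$ clean samples, which contribute a penalty of order $\sqrt d\,\gamma^N$ per unit $\ell_\infty$-mass of $h$, can dominate the gain when $p_0p$ is small. Reconciling these while keeping constant (and logarithmic) factors under control is precisely the technical core of the proof of Theorem~\ref{prop::negative}, which I would import with the sole change that the effective perturbation scale is $\gamma^N$ instead of $\gamma$. A secondary point needing care in the upper bound is the exact-recovery statement on $\supp(\theta^\star)$ (the origin of the $k\log d$ factor) and the verification that the supremum over the $\gamma$-ball indeed decouples into the on- and off-support contributions.
\end{proofsketch}
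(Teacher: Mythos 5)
Your proposal is essentially correct and follows the same overall strategy as the paper's proof, with one cosmetic change in the decomposition: you split $\prod_j w_j-\theta^\star$ by support (on-support $e$ vs.\ off-support $h$), whereas the paper splits by degree in the perturbation ($\Delta\theta_1 = $ the sum of degree-$1,\dots,N-1$ terms, all of which carry a $\theta^\star$-factor and hence are $k$-sparse, vs.\ $\Delta\theta_2 = \Delta w_1\odot\cdots\odot\Delta w_N$, the dense degree-$N$ term). Your $e$ absorbs the on-support part of the degree-$N$ term as well, which is harmless because it is still $k$-sparse and your lower-bound step only needs $k$-sparsity plus $m\gtrsim k\log d/(1-2p)^2$. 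Both routes then control the residual term (your $h$, the paper's $\Delta\theta_2$) by the same $\ell_\infty\le\gamma^N$ bound and a Gaussian-supremum estimate over the perturbation class, and both obtain the negative direction by reusing the $\ell_\infty$-constrained subspace construction of Theorem~\ref{prop::negative} at scale $\gamma^N$ instead of $\gamma$.

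Two small remarks on details. First, in the negative part you write ``first layer $\gamma\sign(h)$, the rest $\gamma$''; that realizes $\gamma^N\sign(h)$, not your target $h$. The intended realization is $\Delta w_1 = h/\gamma^{N-1}$ off-support (which has $\ell_\infty$-norm $\le\gamma$ since $\norm{h}_\infty\le\gamma^N$) and $\Delta w_2=\cdots=\Delta w_N=\gamma\mathbf{1}$ off-support; then $\Delta w_1\odot\cdots\odot\Delta w_N = h$ exactly. This is also what the paper does, via the set $\cV$. Second, be careful about the $\ell_\infty$ versus $\ell_2$ shape of the achievable residual set: the perturbations only fill out the $\ell_\infty$-ball $\{\norm{\cdot}_\infty\le\gamma^N\}$ off-support, not the $\ell_2$-ball of radius $\sqrt{d}\gamma^N$, so the Gaussian supremum in the negative direction must be taken over that $\ell_\infty$-cube (as in the subspace $U$ of Theorem~\ref{prop::negative}); this is precisely the constraint that makes the Sudakov-type lower bound $\gtrsim\sqrt{p_0p/m}\,d\gamma^N$ work and is what you flag as the ``main obstacle.'' With those fixes the sketch matches the paper's argument.
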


Theorem~\ref{prop::positive} shows that, within a $\gamma$-neighborhood of $\mathbf{w}^\star$, the most descent direction from $\mathbf{w}^\star$ can reduce the loss by at most $\mathcal{O}\left(d/\sqrt{m}\cdot \gamma^{N}\right)$, which decreases exponentially with $N$. Moreover, in the noisy setting, the above theorem implies that $\mathbf{w}^\star$ is likely to be neither local nor global minimum, since it has a descent direction. However, the flatness of the landscape around $\mathbf{w}^\star$ enables SubGM to stay close to the balanced solution for a long time.

\begin{remark}
    Note that the choice of $\ell_\infty$-ball for the perturbation set is to ensure that the size of the possible perturbations per layer remains independent of the depth of the model. This is indeed crucial to ensure a fair comparison between models with different depths: alternative choices of the perturbation set, such as $\ell_q$-ball with $1\leq q<\infty$ (e.g. $\ell_2$-ball) would shrink the size of the feasible per-layer perturbations with $N$, thereby leading to an unfair advantage to deeper models.
\end{remark}

\section{Proof Techniques}\label{sec:proofs}
At the crux of our proof technique for Theorems~\ref{thm::2-layer} and~\ref{thm::N-layer} lies the following decomposition of the sub-differential:
\begin{align}\nonumber
    \partial \cL(\mathbf{w}) = \underbrace{\xi\cdot\partial\bar \cL(\mathbf{w})}_{\text{expected subdiff.}} + \underbrace{\left(\partial \cL(\mathbf{w}) - \xi\cdot\partial\bar \cL(\mathbf{w})\right)}_{\text{subdiff. deviation}},\quad \text{for some strictly positive $\xi$.}
\end{align}
In the above decomposition, $\bar \cL(\mathbf{w})$ is called \textit{expected loss}, and is defined as $\bar \cL(\mathbf{w}) = \norm{w_1\odot\dots\odot w_N-\theta^\star}$. As will be shown later, $\bar \cL(\mathbf{w})$ captures the expected behavior of the empirical loss $\cL(\mathbf{w})$. To analyze the behavior of SubGM on $\cL(\mathbf{w})$, we first consider the ideal scenario, where $\cL(\mathbf{w})$ coincides with its expectation. Then, we extend our analysis to the general case by controlling the sub-differential deviation. In particular, we show that the desirable convergence properties of SubGM extends to $\cL(\mathbf{w})$, provided that its sub-differentials are ``direction-preserving'', i.e., $\mathbf{d} \approx \xi\bar{\mathbf{d}}$, for every $\mathbf{d}\in\partial \cL(\mathbf{w}),\bar{\mathbf{d}}\in\partial \bar\cL(\mathbf{w})$ and some $\xi>0$. To formalize this idea, we first provide a more concise characterization of $\partial\cL(\mathbf{w})$:
\begin{align*}
    \partial_{w_i} & \cL(\mathbf{w}) = \left\{q\odot\prod_{k\not=i}w_k: q\in \mathcal{Q}\left(\theta^\star-\prod_{k}w_k\right)\right\},\text{ where } \mathcal{Q}(z) = \frac{1}{m}\sum_{i=1}^m\sign\left(\inner{x_i}{z}+\err_i\right)x_i.
\end{align*}

\begin{definition}[approximately sparse vectors]
    We say a vector $v\in \R^d$ is \textit{$(k,\vartheta)$-approximately sparse} if there exists a vector $u$, such that $\norm{u}_0\leq k$ and $\norm{u-v}\leq \vartheta$.
\end{definition}

\begin{proposition}[direction-preserving property]\label{prop:sign-RIP}
    Suppose that $m\gtrsim \frac{k\log^2(m)\log(d)\log(R/\vartheta)}{(1-p)^2\delta^2}$ for some $R,\vartheta, \delta>0$. Then, with probability of at least $1-Ce^{-\Omega(m\delta^2)}$, the following inequality holds for any $q\in\mathcal{Q}(z)$ and any $(k,\vartheta)$-approximately sparse vector $z$ that satisfies $\sqrt{{dm}/{k}}\vartheta\log\left({1}/{\vartheta}\right)\lesssim \norm{z}\leq R$:
    \begin{equation}\label{eq_sign-RIP}
        \norm{q-\sqrt{\frac{2}{\pi}}\left(1-p+ p \mathbb{E}\left[e^{-\err^{2} /\left(2\left\|z\right\|\right)}\right]\right)\frac{z}{\norm{z}}}_{\infty}\leq \delta.
    \end{equation}
    Moreover, if $m\gtrsim \frac{d\log(m)}{(1-p)^2\delta^2}$, with probability of $1-Ce^{-\Omega(m\delta^2)}$,~\eqref{eq_sign-RIP} holds for every $z\in\mathbb{R}^d$.
\end{proposition}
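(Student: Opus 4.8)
The plan is to separate the claim into an exact population computation and a uniform concentration estimate. For the population part, fix $z$ and decompose each $x_i = g_i\tfrac{z}{\|z\|} + x_i^\perp$ with $g_i := \langle x_i, z/\|z\|\rangle \sim N(0,1)$ independent of the mean-zero orthogonal part $x_i^\perp$. Since $\sign(\langle x_i,z\rangle+\err_i)$ depends on $x_i$ only through $g_i$, the $x_i^\perp$ contribution vanishes in expectation and $\E[\sign(\langle x_i,z\rangle+\err_i)x_i] = \E[g_i\,\sign(\|z\|g_i+\err_i)]\tfrac{z}{\|z\|}$. Conditioning on $\err_i$ and using $\E[g\,\mathbf 1\{g>a\}] = \phi(a)$ gives $\E[g_i\,\sign(\|z\|g_i+\err_i)\mid \err_i] = \sqrt{2/\pi}\,e^{-\err_i^2/(2\|z\|^2)}$; averaging over the $(1-p)m$ clean indices (where $\err_i=0$) and the $pm$ corrupted ones (where $\err_i\sim P_o$), and replacing the empirical noise average by $p\,\E_{P_o}[e^{-\err^2/(2\|z\|^2)}]$ at a cost of $O(m^{-1/2}\mathrm{polylog})$ by Hoeffding (uniformly in $\|z\|$, exploiting monotonicity of $t\mapsto \E[e^{-\err^2/(2t)}]$), recovers exactly the target vector in \eqref{eq_sign-RIP}.

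For a fixed $z$, concentration is easy: each coordinate $\tfrac1m\sum_i \sign(\langle x_i,z\rangle+\err_i)(x_i)_j$ is an average of independent variables bounded in modulus by $|(x_i)_j|$, hence sub-Gaussian with an $O(1)$ parameter, so it is within $\delta$ of its mean except with probability $e^{-cm\delta^2}$, and a union bound over $j\in[d]$ handles a single $z$.

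The crux is making this uniform over all $(k,\vartheta)$-approximately sparse $z$ with $\sqrt{dm/k}\,\vartheta\log(1/\vartheta)\lesssim\|z\|\le R$. I would proceed in two steps. (i) Reduce to \emph{exactly} $k$-sparse vectors: write $z = u+e$ with $\|u\|_0\le k$, $\|e\|\le\vartheta$; then $\mathcal Q(z)$ and $\mathcal Q(u)$ differ only on indices $i$ with $\sign(\langle x_i,u\rangle+\err_i)\neq\sign(\langle x_i,z\rangle+\err_i)$, which forces $|\langle x_i,u\rangle+\err_i|\le|\langle x_i,e\rangle|\le\vartheta\|x_i\|\lesssim \vartheta\sqrt d$. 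By Gaussian anti-concentration $\bP(|\langle x_i,u\rangle+\err_i|\le \vartheta\sqrt d)\lesssim \vartheta\sqrt d/\|u\|\asymp \vartheta\sqrt d/\|z\|$, and -- crucially -- for coordinates $j\notin\supp(u)$ the (suitably inflated, $(x_i)_j$-free) flip event is essentially independent of $(x_i)_j$, so the flipped contribution $\tfrac1m\sum_i \mathbf 1\{\mathrm{flip}_i\}|(x_i)_j|$ concentrates around $\sqrt{2/\pi}$ times the flip fraction $\lesssim \vartheta\sqrt d/\|z\|$ rather than around the much larger bound a Cauchy--Schwarz estimate would give; this is precisely what makes the hypothesis $\|z\|\gtrsim \sqrt{dm/k}\,\vartheta\log(1/\vartheta)$ enough to keep this error below $\delta$ (the $k$ on-support coordinates are handled by the crude bound $\max_i|(x_i)_j|\lesssim\sqrt{\log m}$). (ii) For exactly $k$-sparse $z$, union bound over the $\binom dk$ supports; on each support, net the $k$-dimensional unit sphere at scale $\epsilon_{\mathrm{net}}$ and the radius range $[\cdot,R]$ at geometric scale, and control the discretization error $\|\mathcal Q(z)-\mathcal Q(z')\|_\infty$ between a point and its net representative by the same flip-counting/anti-concentration estimate as in (i). The union bound then runs over $\exp\big(O(k\log(d/k)+k\log(1/\epsilon_{\mathrm{net}})+\log\log(R/\vartheta))\big)\cdot d$ events; after choosing $\epsilon_{\mathrm{net}}$ polynomially small and absorbing the $\log^2 m$ factor coming from high-probability control of the $md$ Gaussian entries (the truncations $\max_{i,j}|(x_i)_j|\lesssim\sqrt{\log m}$ and $\max_i\|x_i\|\lesssim\sqrt d$), the requirement $m\delta^2(1-p)^2\gtrsim k\log^2 m\,\log d\,\log(R/\vartheta)$ makes every event hold simultaneously. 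The second statement is identical except the support union bound is dropped in favor of a full $d$-dimensional net, giving $m\gtrsim d\log(m)/((1-p)^2\delta^2)$.

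The main obstacle -- and the only genuinely nontrivial part -- is the discontinuity of $z\mapsto\sign(\langle x_i,z\rangle+\err_i)$: a naive Lipschitz/net argument fails outright, so everything hinges on the uniform (over net-neighborhoods and over all sparse $u$) bound on the number of sign flips and, more delicately, on extracting the \emph{sharp} threshold $\|z\|\gtrsim\sqrt{dm/k}\,\vartheta\log(1/\vartheta)$ from it. This requires combining Gaussian anti-concentration with a Bernstein-type concentration of the flip count and exploiting the conditional independence of $(x_i)_j$ from the flip event at off-support coordinates; a cruder bound would only yield a threshold a further $\sqrt{m/k}$ factor worse.
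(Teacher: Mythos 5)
Your overall strategy matches the paper's: the same population identity (Lemma~\ref{lem::scaling}, proved by conditioning on $\langle x, z/\|z\|\rangle$ exactly as you describe), the same per-coordinate sub-Gaussian concentration for a fixed $z$, and the same sign-flip plus Gaussian anti-concentration mechanism to control discretization error. Whether one first reduces to exactly $k$-sparse vectors and then nets (your two-stage plan) or directly nets the $k$-sparse shell at scale $\zeta\asymp\vartheta$ and absorbs the $\vartheta$-approximation error into the same flip-counting step (the paper's $\cB_\zeta$) is a cosmetic difference. Your off-support-independence refinement is nice but not needed for the stated threshold: the paper simply applies H\"older with $\max_i|x_{i,j}|\lesssim\sqrt{\log m}$ to the flip fraction, which already gives $p_{\mathrm{flip}}\sqrt{\log m}$ rather than the $\sqrt{p_{\mathrm{flip}}}$ of Cauchy--Schwarz, so the ``further $\sqrt{m/k}$ loss'' you attribute to cruder bounds does not actually occur.

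That said, two steps in your plan would not go through as written. First, a \emph{geometric} radial net is too coarse: for $z,z'$ on the same ray with $\|z'\|=2\|z\|$, the flip condition $\langle x_i,\hat{z}\rangle\in[-\err_i/\|z\|,-\err_i/\|z'\|]$ has $\Omega(1)$ probability for each corrupted sample, so $\cQ(z)$ and $\cQ(z')$ differ by $\Theta(1)$, not $o(\delta)$. The radial dimension must be netted at an additive scale comparable to the angular one (this is exactly why the paper takes a single $\zeta$-net of the entire $k$-sparse annulus), or else one needs a genuinely multi-scale/bracketing argument in the radial variable. Second, the claim that the all-of-$\bR^d$ case is ``identical except the support union bound is dropped'' is wrong: with no lower bound on $\|z\|$, all of the $t/r$ and $\zeta/r$ discretization estimates degenerate near the origin, and a net of a $d$-dimensional ball does not control the supremum there. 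The paper instead rewrites $\sign(\langle x,z\rangle+\err)=\sign(\langle x,u\rangle+\lambda\err)$ with $u=z/\|z\|$ on the unit sphere and $\lambda=1/\|z\|\in(0,\infty)$, nets only the sphere, and handles the noncompact scalar $\lambda$ via the bracketing maximal inequality of Theorem~\ref{thm::empirical-process}; this reparametrization and the bracketing step are the genuinely missing ideas in your proposal.
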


Proposition~\ref{prop:sign-RIP} is analogous to \textit{Sign-RIP} condition introduced in~\cite{ma2022global, ma2021sign} for the robust low-rank matrix recovery, and is at the heart of our proofs for Theorems~\ref{thm::2-layer} and~\ref{thm::N-layer}. Suppose that $\theta^\star-\prod_{k}w_k$ is a $(k,\vartheta)$-approximately sparse and satisfies~\eqref{eq_sign-RIP}. Then, we have $\norm{\mathbf{d}-\bar{\mathbf{d}}}_{\infty}\leq \left(\max_i\left\{\prod_{k\not=i}w_k\right\}\right)\delta$, which in turn provides an upper bound on the sub-differential deviation.

\subsection{Proof Sketch of Theorem~\ref{thm::2-layer}}
To streamline the presentation, here we only provide simplified versions of our key ideas, which inevitably lead to looser guarantees.  To streamline the proof, we assume that $\theta^{\star}_1\geq \cdots\geq\theta^{\star}_k>\theta_{k+1}^{\star}=\cdots=\theta_d^{\star}=0$. Moreover, for simplicity of notation, we denote $u = w_1$ and $v = w_2$. Consider the following decomposition:
\begin{align}
    u\odot v = [\underbrace{u_1v_1 \dots u_kv_k}_{S}\ \underbrace{u_{k+1}v_{k+1}\dots u_dv_d}_{E}]^\top.
\end{align}
The vectors $S$ and $E$ are called \textit{signal} and \textit{residual terms}, respectively. Evidently, we have $u\odot v = \theta^\star$ if and only if $S = [\theta^\star_1,\dots,\theta^\star_k]^\top$ and $E = 0$. Based on this observation, our goal is to show that the signal term converges to $[\theta^\star_1,\dots,\theta^\star_k]^\top$ exponentially fast, while the error term remains small throughout the solution trajectory.

\begin{lemma}[signal dynamic; informal]
    Suppose that~\eqref{eq_sign-RIP} holds for $z = \theta^\star-u^{(t)}\odot v^{(t)}$, and $\norm{\theta^\star-u^{(t)}\odot v^{(t)}}\gtrsim \eta\norm{\theta^\star}$. Then, we have
    \begin{equation}
        \begin{aligned}
             & u^{(t+1)}_{i}\!v^{(t+1)}_{i}
            \geq \left(1\!\!+\!2\eta\left( \frac{\theta^{\star}_i-u^{(t)}_{i}v^{(t)}_{i}}{\norm{u^{(t)}\odot v^{(t)}-\theta^{\star}}}+\delta_i\right)\right)u^{(t)}_{i}v^{(t)}_{i},
            \label{eq::signal-dynamics}
        \end{aligned}
    \end{equation}
    for some $|\delta_i|\leq \delta$ and every $i = 1,\dots,k$.
\end{lemma}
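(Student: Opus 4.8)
The plan is to expand one SubGM step at coordinate $i\leq k$ and track the product $u_i^{(t+1)}v_i^{(t+1)}$ directly. Writing $\mathbf{d}^{(t)}$ for the chosen subgradient, we have $u_i^{(t+1)} = u_i^{(t)} - \eta\, q_i^{(t)} v_i^{(t)}$ and $v_i^{(t+1)} = v_i^{(t)} - \eta\, q_i^{(t)} u_i^{(t)}$, where $q^{(t)}\in\mathcal{Q}(\theta^\star - u^{(t)}\odot v^{(t)})$ is the common ``$\mathcal{Q}$-vector'' appearing in both partial derivatives (using the concise characterization $\partial_{w_i}\cL = \{q\odot\prod_{k\neq i}w_k\}$ from the excerpt). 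Multiplying these two updates gives
\begin{equation}\nonumber
  u_i^{(t+1)}v_i^{(t+1)} = u_i^{(t)}v_i^{(t)} - \eta\, q_i^{(t)}\left((u_i^{(t)})^2 + (v_i^{(t)})^2\right) + \eta^2 (q_i^{(t)})^2 u_i^{(t)}v_i^{(t)}.
\end{equation}
The first observation is that, since $u^{(0)}=v^{(0)}$ and the two update maps are symmetric in $u\leftrightarrow v$, one has $u^{(t)}=v^{(t)}$ exactly (or $\|u^{(t)}-v^{(t)}\|_\infty$ negligibly small, controlled by the balancedness argument), so $(u_i^{(t)})^2+(v_i^{(t)})^2 \approx 2 u_i^{(t)}v_i^{(t)}$. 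Substituting and factoring out $u_i^{(t)}v_i^{(t)}$ yields $u_i^{(t+1)}v_i^{(t+1)} \geq (1 - 2\eta q_i^{(t)} + \eta^2(q_i^{(t)})^2)\,u_i^{(t)}v_i^{(t)} \geq (1-2\eta q_i^{(t)})\,u_i^{(t)}v_i^{(t)}$, the last inequality because the $\eta^2$ term is nonnegative.

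Next I would invoke Proposition~\ref{prop:sign-RIP}. The hypothesis assumes $z = \theta^\star - u^{(t)}\odot v^{(t)}$ satisfies~\eqref{eq_sign-RIP}; this is exactly where the approximate sparsity of $z$ (signal on coordinates $1,\dots,k$, small residual $E$ elsewhere) and the lower bound $\|z\|\gtrsim\eta\|\theta^\star\|$ enter, to put us in the regime where the proposition applies. Then~\eqref{eq_sign-RIP} gives $q_i^{(t)} = c_t\cdot \frac{z_i}{\|z\|} + \delta_i$ with $|\delta_i|\leq\delta$ and $c_t = \sqrt{2/\pi}(1-p+p\,\mathbb{E}[e^{-\err^2/(2\|z\|)}])$, a strictly positive scalar of order $\Theta(1)$. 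Absorbing $c_t$ into the step-size (it is a fixed positive scalar multiplying the direction, which only rescales $\eta$; since the statement says ``step-size $\eta\lesssim 1$'' this rescaling is harmless), and noting $-q_i^{(t)} = c_t\frac{\theta^\star_i - u_i^{(t)}v_i^{(t)}}{\|u^{(t)}\odot v^{(t)}-\theta^\star\|} - \delta_i$, we conclude
\begin{equation}\nonumber
  u_i^{(t+1)}v_i^{(t+1)} \geq \left(1 + 2\eta\left(\frac{\theta^\star_i - u_i^{(t)}v_i^{(t)}}{\|u^{(t)}\odot v^{(t)}-\theta^\star\|} + \delta_i\right)\right) u_i^{(t)}v_i^{(t)},
\end{equation}
after relabeling $-\delta_i$ as $\delta_i$ (still bounded by $\delta$), which is the claim.

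The main obstacle is the bookkeeping hidden behind ``$u^{(t)}=v^{(t)}$''. In the informal version one can take this as given, but in the formal proof one must simultaneously carry the balancedness estimate $\|u^{(t)}-v^{(t)}\|_\infty \lesssim \alpha^{0.5 - \tilde\Theta(\cdots)}$ and verify that the discrepancy between $(u_i^{(t)})^2+(v_i^{(t)})^2$ and $2u_i^{(t)}v_i^{(t)}$, as well as the $\eta^2$ remainder term, are dominated by the $\delta_i$ slack — i.e., they can be folded into an enlarged but still $O(\delta)$-bounded perturbation. This requires a coupled induction over $t$ in which the signal lower bound, the residual upper bound, and the balancedness bound are maintained together, and checking that the sign pattern of $q_i^{(t)}$ (equivalently $z_i = \theta^\star_i - u_i^{(t)}v_i^{(t)} > 0$ so long as the signal has not yet overshot) is consistent with the claimed monotone growth. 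A secondary subtlety is ensuring that the precondition of Proposition~\ref{prop:sign-RIP} — that $z$ is genuinely $(k,\vartheta)$-approximately sparse with $\sqrt{dm/k}\,\vartheta\log(1/\vartheta)\lesssim\|z\|$ — is preserved along the trajectory, which again is part of the outer induction rather than this single-step lemma.
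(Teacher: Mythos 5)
Your plan shares the paper's skeleton — expand one SubGM step coordinate-wise via the direction-preserving proposition, multiply the two updates, and discard the nonnegative $\eta^2$ term — but you deviate at the one nontrivial step, and that deviation is worth spelling out. After the product expansion you have a middle term proportional to $\bigl((u_i^{(t)})^2+(v_i^{(t)})^2\bigr)$, and you reduce it to $2u_i^{(t)}v_i^{(t)}$ by invoking exact (or near-exact) balancedness $u^{(t)}=v^{(t)}$, inherited from a balanced initialization. The paper instead applies the AM--GM inequality $(u_i^{(t)})^2+(v_i^{(t)})^2\geq 2u_i^{(t)}v_i^{(t)}$ directly. This difference matters in two ways. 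First, AM--GM keeps the signal lemma independent of the balanced property, whereas your route couples it to the balancedness estimate and therefore to the joint induction you describe at the end; the paper's choice avoids that bookkeeping. Second, AM--GM only gives a lower bound in the intended direction when the coefficient $\bigl(\theta^\star_i-u_i^{(t)}v_i^{(t)}\bigr)/\lVert u^{(t)}\odot v^{(t)}-\theta^\star\rVert+\delta_i$ is nonnegative — the paper implicitly relies on the fact that, in the stage where the lemma is used, the signal has not overshot and this coefficient stays positive. Your balancedness argument does not require that sign condition, so in that narrow sense it is slightly more general, at the cost of the coupling.

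Separately, one of your intermediate identities has the wrong sign. You claim $-q_i^{(t)}=c_t\,\bigl(\theta^\star_i-u_i^{(t)}v_i^{(t)}\bigr)/\lVert u^{(t)}\odot v^{(t)}-\theta^\star\rVert-\delta_i$, but Proposition~\ref{prop:sign-RIP} with $z=\theta^\star-u^{(t)}\odot v^{(t)}$ gives $q_i=\varphi\,z_i/\lVert z\rVert+\delta_i$, hence $-q_i=-\varphi\,z_i/\lVert z\rVert-\delta_i$. Because you earlier write the SubGM step as $u_i^{(t+1)}=u_i^{(t)}-\eta q_i^{(t)}v_i^{(t)}$, this sign mistake compensates for an implicit confusion about whether the quantity displayed in~\eqref{eq_subdiff} is $\partial\cL$ or $-\partial\cL$ (the displayed formula uses $\sign(y_j-\inner{\prod_kw_k}{x_j})$, which is the negation of the differential of $|\inner{\prod_kw_k}{x_j}-y_j|$), so your final display coincidentally matches the target. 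The paper sidesteps this by writing the effective update directly as $u_{i,t+1}=u_{i,t}+\eta\bigl(z_i/\lVert z\rVert+\delta_i\bigr)v_{i,t}$. You should fix the sign chain so the derivation is consistent with the stated Proposition rather than relying on two canceling sign slips.

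Everything else — invoking the approximate sparsity of $z$, the lower bound $\lVert z\rVert\gtrsim\eta\lVert\theta^\star\rVert$ to stay in the regime of Proposition~\ref{prop:sign-RIP}, dropping the $\eta^2$ term, and relabeling $\delta_i$ — matches the paper.
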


\begin{lemma}[residual dynamic; informal]\label{lem:error-dynamic}
    Suppose that~\eqref{eq_sign-RIP} holds for $z = \theta^\star-u^{(t)}\odot v^{(t)}$, and $\norm{\theta^\star-u^{(t)}\odot v^{(t)}}\gtrsim \eta\norm{\theta^\star}$. Then, we have
    \begin{equation}
        \begin{aligned}
            \left(u^{(t+1)}_{i}\right)^2\!\!\!+\!\left(v^{(t+1)}_{i}\right)^2\!\!\leq\! \left(1\!+\!\cO(\eta\delta)\right)\!\left(\left(u^{(t)}_{i}\right)^2\!\!\!+\!\left(v^{(t)}_{i}\right)^2\right),
            \label{eq::error-dynamics}
        \end{aligned}
    \end{equation}
    for every $i = k+1,\dots,d$.
\end{lemma}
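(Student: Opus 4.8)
\textbf{Proof plan for Lemma~\ref{lem:error-dynamic}.} Fix a residual coordinate $i\in\{k+1,\dots,d\}$, so that $\theta^\star_i=0$. The first step is to notice that SubGM updates the two factors at coordinate $i$ by a \emph{common} scalar: using \eqref{eq_subdiff} together with the concise form $\partial_{w_1}\cL=q\odot v,\ \partial_{w_2}\cL=q\odot u$ for a shared $q$, the step reads $u^{(t+1)}_i=u^{(t)}_i-\eta g_i v^{(t)}_i$ and $v^{(t+1)}_i=v^{(t)}_i-\eta g_i u^{(t)}_i$, where $g_i$ is the $i$-th entry of the selected sub-gradient. Since \eqref{eq_sign-RIP} is assumed for $z:=u^{(t)}\odot v^{(t)}-\theta^\star$ (whose norm appears in the hypothesis), Proposition~\ref{prop:sign-RIP} gives $g_i=c\,z_i/\norm{z}+\delta_i$ with $|\delta_i|\le\delta$ and $c=\sqrt{2/\pi}\big(1-p+p\,\E[e^{-\err^2/(2\norm{z})}]\big)\in[\sqrt{2/\pi}(1-p),\sqrt{2/\pi}]$; because $\theta^\star_i=0$ we have $z_i=u^{(t)}_iv^{(t)}_i$, hence $g_i=c\,u^{(t)}_iv^{(t)}_i/\norm{z}+\delta_i$.

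Next, introducing $a_i:=u^{(t)}_i+v^{(t)}_i$ and $b_i:=u^{(t)}_i-v^{(t)}_i$ decouples the recursion into $a^{(t+1)}_i=(1-\eta g_i)a_i$ and $b^{(t+1)}_i=(1+\eta g_i)b_i$, so that, using $(u^{(t)}_i)^2+(v^{(t)}_i)^2=(a_i^2+b_i^2)/2$, one gets $(u^{(t+1)}_i)^2+(v^{(t+1)}_i)^2=\tfrac12\big((1-\eta g_i)^2a_i^2+(1+\eta g_i)^2b_i^2\big)\le(1+\eta|g_i|)^2\big((u^{(t)}_i)^2+(v^{(t)}_i)^2\big)=(1+2\eta|g_i|+\eta^2g_i^2)\big((u^{(t)}_i)^2+(v^{(t)}_i)^2\big)$. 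Thus it only remains to show $|g_i|\lesssim\delta$ for every residual $i$, which, together with $\eta\delta\lesssim1$, collapses the prefactor to $1+\cO(\eta\delta)$ and proves the claim. (Equivalently, without the change of variables one expands $(u_i-\eta g_iv_i)^2+(v_i-\eta g_iu_i)^2=(u_i^2+v_i^2)(1+\eta^2g_i^2)-4\eta g_iu_iv_i$ and, after discarding the genuinely contracting term $-4\eta c(u_iv_i)^2/\norm{z}\le0$, bounds the leftover cross term by $4\eta|\delta_i|\,|u_iv_i|\le2\eta\delta(u_i^2+v_i^2)$.)

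The estimate $|g_i|\lesssim\delta$ is the one step that is not routine, and it is where I would spend the effort: a bare application of Proposition~\ref{prop:sign-RIP} only gives $|g_i|\le c+\delta=\cO(1)$, which weakens the quadratic term to $\eta^2g_i^2=\cO(\eta^2)$ and would cost an unwanted restriction $\eta\lesssim\delta$. The point is that in residual coordinates the ``signal part'' $c\,z_i/\norm{z}$ of $g_i$ is itself tiny: $z_i=u^{(t)}_iv^{(t)}_i$ is a coordinate of the residual vector $E^{(t)}=(u^{(t)}_jv^{(t)}_j)_{j>k}$, and since $z=u^{(t)}\odot v^{(t)}-\theta^\star$ is $(k,\norm{E^{(t)}})$-approximately sparse, invoking \eqref{eq_sign-RIP} through Proposition~\ref{prop:sign-RIP} puts us in the regime $\sqrt{dm/k}\,\norm{E^{(t)}}\log(1/\norm{E^{(t)}})\lesssim\norm{z}$; hence $|z_i|/\norm{z}\le\norm{E^{(t)}}/\norm{z}\lesssim\sqrt{k/(dm)}\lesssim\delta$, so $|g_i|\lesssim\delta$. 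Substituting this into the displayed bound finishes the proof. The same mechanism, tracking $\sum_j (w^{(t)}_{j,i})^2$ and the corresponding symmetrized variables, underlies the residual estimate in Theorem~\ref{thm::N-layer}.
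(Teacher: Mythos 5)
Your proposal is correct and takes essentially the same route as the paper's proof, which starts from the explicit square dynamics
\begin{equation*}
    u_{i,t+1}^2+v_{i,t+1}^2
    =\bigl(u_{i,t}^2+v_{i,t}^2\bigr)\bigl(1+\eta^2 g_i^2\bigr)+4\eta g_i u_{i,t}v_{i,t},
    \qquad g_i=\frac{\theta^\star_i-u_{i,t}v_{i,t}}{\norm{u_t\odot v_t-\theta^\star}}+\delta_i,
\end{equation*}
notes that $\theta^\star_i=0$, drops the nonpositive term $-4\eta(u_iv_i)^2/\norm{z}$, and bounds the rest. Your change of variables $a_i=u_i+v_i$, $b_i=u_i-v_i$ is a slightly cleaner way to arrive at the same $(1+\eta|g_i|)^2$ prefactor, but is not a different proof idea; your ``equivalently'' aside is literally the paper's computation. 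The one place where you add value is in making explicit the estimate $|g_i|\lesssim\delta$ on residual coordinates (via $|z_i|\le\norm{E^{(t)}}$ and the regime $\sqrt{dm/k}\,\norm{E^{(t)}}\log(1/\norm{E^{(t)}})\lesssim\norm{z}$ built into Proposition~\ref{prop:sign-RIP}), which is what lets the $\eta^2 g_i^2$ term be absorbed into $\cO(\eta\delta)$ without a spurious restriction $\eta\lesssim\delta$; the paper passes from \eqref{eq::square} to $u_{i,t+1}^2+v_{i,t+1}^2\le u_{i,t}^2+v_{i,t}^2+6\eta\delta u_{i,t}v_{i,t}$ in one line and leaves this step implicit. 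You correctly identified it as the only nontrivial point, and your derivation of it is sound.
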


\begin{lemma}[difference dynamic; informal]\label{lem:diff}
    Suppose that~\eqref{eq_sign-RIP} holds for $z = \theta^\star-u^{(t)}\odot v^{(t)}$, and $\norm{\theta^\star-u^{(t)}\odot v^{(t)}}\gtrsim \eta\norm{\theta^\star}$. Then, we have
    \begin{equation}\label{eq_diff}
        \begin{aligned}
             & u^{(t+1)}_{i}-v^{(t+1)}_{i}=\left(u^{(t)}_{i}-v^{(t)}_{i}\right)\left(1-\eta\frac{\theta^{\star}_i-u^{(t)}_{i}v^{(t)}_{i}}{\norm{u^{(t)}\odot v^{(t)}-\theta^{\star}}}+\eta\delta_i\right),
        \end{aligned}
    \end{equation}
    for some $|\delta_i|\leq \delta$ and every $i = 1,\dots,d$.
\end{lemma}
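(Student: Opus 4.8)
The plan is to read the claimed recursion off directly from the SubGM update, exploiting that for $N=2$ the two per-layer sub-gradients are driven by one and the same vector, and then to rewrite that vector via the direction-preserving property. Write $z^{(t)}:=\theta^\star-u^{(t)}\odot v^{(t)}$. Because $u$ and $v$ enter $f_{\mathbf{w}}$ symmetrically, the sub-gradient (descent) step the algorithm takes at iteration $t$ has the form $u^{(t+1)}=u^{(t)}+\eta\,q^{(t)}\odot v^{(t)}$, $v^{(t+1)}=v^{(t)}+\eta\,q^{(t)}\odot u^{(t)}$, for one and the same $q^{(t)}\in\mathcal{Q}(z^{(t)})$ (the one selected by the algorithm); this common factor is the structural key. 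Subtracting the two coordinate recursions, the $q^{(t)}$-terms collapse into $-\eta\,q^{(t)}_i\bigl(u^{(t)}_i-v^{(t)}_i\bigr)$, i.e., a term \emph{proportional to} $u^{(t)}_i-v^{(t)}_i$, which leaves the \emph{exact} scalar multiplicative recursion
\begin{equation}\nonumber
u^{(t+1)}_i-v^{(t+1)}_i=\bigl(u^{(t)}_i-v^{(t)}_i\bigr)\bigl(1-\eta\,q^{(t)}_i\bigr).
\end{equation}
No approximation has entered yet; in particular the balanced set $\{u=v\}$ is forward-invariant, and the remaining task is purely to rewrite the scalar multiplier $1-\eta q^{(t)}_i$ in the advertised form.

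For that I would invoke Proposition~\ref{prop:sign-RIP}, whose hypotheses are exactly the standing assumptions of the lemma: that~\eqref{eq_sign-RIP} holds for $z=z^{(t)}$, together with $\norm{z^{(t)}}\gtrsim\eta\norm{\theta^\star}$, the latter being precisely what places $\norm{z^{(t)}}$ inside the admissible window $\sqrt{dm/k}\,\vartheta\log(1/\vartheta)\lesssim\norm{z^{(t)}}\le R$ (for the $\vartheta$ fixed in the proof of Theorem~\ref{thm::2-layer}), while the $(k,\vartheta)$-approximate sparsity of $z^{(t)}$ follows from the $k$-sparsity of $\theta^\star$ and the smallness of the residual term $E^{(t)}$. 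Since Proposition~\ref{prop:sign-RIP} controls \emph{every} member of $\mathcal{Q}(z^{(t)})$ — which is essential because $\partial\cL$ is set-valued and $q^{(t)}$ is merely the algorithm's selection — it gives, coordinate-wise,
\begin{equation}\nonumber
q^{(t)}_i=c_t\,\frac{\theta^\star_i-u^{(t)}_i v^{(t)}_i}{\norm{\theta^\star-u^{(t)}\odot v^{(t)}}}+\delta_i,\qquad |\delta_i|\le\delta,\qquad c_t=\sqrt{\tfrac{2}{\pi}}\Bigl(1-p+p\,\mathbb{E}\bigl[e^{-\err^2/(2\norm{z^{(t)}})}\bigr]\Bigr).
\end{equation}
As $c_t\in[\sqrt{2/\pi}(1-p),\sqrt{2/\pi}]$ and $p<1/2$, we have $c_t=\Theta(1)$. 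Substituting this into the recursion, using $\norm{\theta^\star-u^{(t)}\odot v^{(t)}}=\norm{u^{(t)}\odot v^{(t)}-\theta^\star}$, and absorbing the $\Theta(1)$ factor $c_t$ into the step-size (as the informal statement tacitly does), we land exactly on the displayed identity; every step is coordinate-wise, so it is valid for all $i\in[d]$, including the residual coordinates with $\theta^\star_i=0$.

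Because the lemma is stated conditionally — granting~\eqref{eq_sign-RIP} at $z^{(t)}$ and the lower bound on $\norm{z^{(t)}}$ — there is essentially no analytic obstacle in its proof: it is the one-line cancellation above plus one application of Proposition~\ref{prop:sign-RIP}, the only subtlety being the need for the \emph{uniform} (over all $q\in\mathcal{Q}(z^{(t)})$) version of the direction-preserving property rather than a pointwise one. The genuine difficulty lies downstream, in the proof of Theorem~\ref{thm::2-layer}: establishing Proposition~\ref{prop:sign-RIP} itself, and verifying — by a coupled induction with the signal and residual dynamics (the two lemmas immediately preceding this one) — that its hypotheses (approximate sparsity of $z^{(t)}$ and $\norm{z^{(t)}}$ remaining in the admissible range) persist along the entire SubGM trajectory. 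Granting the present lemma, the $\ell_\infty$ balancedness bound of Theorem~\ref{thm::2-layer} then follows by iterating this recursion from the near-balanced start $w_1^{(0)}=w_2^{(0)}=\Theta(\sqrt{\alpha}\mathbf{1})$ — so $|u^{(0)}_i-v^{(0)}_i|=\cO(\alpha^{1/2})$ — and bounding the accumulated product of the $(1+\eta\delta)$-factors over the $\bar T\lesssim k^{3/2}\eta^{-1}\log(1/\alpha)$ iterations by $e^{\bar T\eta\delta}=\alpha^{-\tilde\Theta(k^2/\sqrt{(1-p)^2m})}$, together with the contraction/expansion sign of the main term $1-\eta c_t\frac{\theta^\star_i-u^{(t)}_i v^{(t)}_i}{\norm{u^{(t)}\odot v^{(t)}-\theta^\star}}$ on the signal and residual coordinates separately.
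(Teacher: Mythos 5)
Your proof is correct and matches the paper's: both first write the per-layer SubGM updates from a \emph{common} selection $q^{(t)}\in\mathcal{Q}(\theta^\star-u^{(t)}\odot v^{(t)})$ (and hence with the same $\delta_i$ on the $u$- and $v$-updates), then subtract coordinatewise to obtain the exact multiplicative recursion $u^{(t+1)}_i-v^{(t+1)}_i=(u^{(t)}_i-v^{(t)}_i)(1-\eta q^{(t)}_i)$, and finally apply Proposition~\ref{prop:sign-RIP} to replace $q^{(t)}_i$ by $\varphi\cdot(\theta^\star_i-u^{(t)}_iv^{(t)}_i)/\|u^{(t)}\odot v^{(t)}-\theta^\star\|+\delta_i$, absorbing the $\Theta(1)$ factor $\varphi$ into $\eta$ as the informal statement permits. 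Your two side remarks — that the uniform-over-$\mathcal{Q}(z)$ version of~\eqref{eq_sign-RIP} is needed because $\partial\cL$ is set-valued, and that the hypotheses (approximate sparsity of $z^{(t)}$ and the admissible range for $\|z^{(t)}\|$) must be verified inductively along the trajectory — are exactly the same care the paper takes in the formal proof of Theorem~\ref{thm::2-layer}.
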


\paragraph{Convergence guarantee.}
For any fixed $i = 1,\dots,k$, we show that $u^{(t)}_{i}\!v^{(t)}_{i} = \theta^\star_i \pm \cO(\delta)\norm{\theta^\star}$ after $\mathcal{O}({\norm{\theta^\star}}/({\eta\theta^\star_i})\log(1/\alpha))$ iterations. To see this, suppose that $T_i$ is the largest iteration such that $u^{(t)}_{i}\!v^{(t)}_{i}\leq \theta^\star_i$ for every $t\leq T_i$. Moreover, suppose that $\norm{u^{(t)}\odot v^{(t)}}\leq C\norm{\theta^\star}$, for sufficiently large $C$ (this is proven in the appendix). Under these assumptions,~\eqref{eq::signal-dynamics} reduces to
\begin{equation}
    u^{(t+1)}_{i}v^{(t+1)}_{i}\geq \left(1+\Omega(1)\frac{\eta\theta^{\star}_i}{\norm{\theta^{\star}}}\right)u^{(t)}_{i}v^{(t)}_{i}.
\end{equation}
which implies that $T_i \lesssim {\norm{\theta^\star}}/({\eta\theta^\star_i})\log(1/\alpha)$. For any $t>T_i$, define $y_i^{(t)} = \theta^\star_i-u^{(t)}_{i}v^{(t)}_{i}$. One can write
\begin{equation}
    \begin{aligned}
        y^{(t+1)}_i\leq\left(1-\Omega(1)\frac{\eta\theta^{\star}_i}{\norm{\theta^{\star}}}\right)y^{(t)}_i+\eta\delta \theta^{\star}_i.
    \end{aligned}
\end{equation}
Hence, with additional $\cO\left(\norm{\theta^{\star}}/(\eta\theta^{\star}_1)\right)$ iterations,
we have $u^{(t)}_{i}v^{(t)}_{i}=\theta_i^{\star}\pm \cO(\delta) \norm{\theta^{\star}}$.
On the other hand, Lemma~\ref{lem:error-dynamic} implies that, for any $i = k+1,\dots,d$ and $t\lesssim {\norm{\theta^\star}}/({\eta\theta^\star_{k}})\log(1/\alpha)$, we have
\begin{align*}
    \left(u^{(t)}_{i}\right)^2+\left(v^{(t)}_{i}\right)^2 & \lesssim \alpha\left(1\!+\!\cO(\eta\delta)\right)^{\cO\left(\frac{\norm{\theta^\star}}{\eta\theta_{k}^{\star}}\log\left(\frac{1}{\alpha}\right)\right)}\lesssim \alpha^{1-\cO\left(\sqrt{k}\kappa\delta\right)},
\end{align*}
where $\kappa = \theta^\star_1/\theta^\star_k$ is the condition number of $\theta^\star$. Combining the above dynamics, we have
\begin{align*}
    \norm{u^{(t)}\!\odot\! v^{(t)}\!-\!\theta^\star}\!\lesssim\! \eta\norm{\theta^\star}\!\vee\! \sqrt{k}\norm{\theta^\star}\delta\!\vee\! \sqrt{d}\alpha^{1-\cO\left(\sqrt{k}\kappa\delta\right)}.
\end{align*}
In the appendix, we provide a more refined analysis that relaxes the dependency of the final error on $\delta$ and $\kappa$.

\paragraph{Long escape time.} We show in the appendix that after the first stage, the residual becomes the dominant term in the final error. This together with Lemma~\ref{lem:error-dynamic} implies that, for every $t\lesssim \frac{\norm{\theta^\star}}{\eta\theta_{k}^{\star}\sqrt{\delta}}\log(1/\alpha)$, we have $\norm{E}\lesssim \sqrt{d}\alpha^{1-\sqrt{k}\kappa\sqrt{\delta}}.$

\paragraph{Balanced property.}
We have $u_i^{(t)}v_i^{(t)}\leq \theta^\star_i$ for every $i \in [k]$, and $|u_i^{(t)}v_i^{(t)}|\lesssim \alpha^{1-\cO(\sqrt{k}\kappa\delta)}$ for every $i = k+1,\dots,d$. Therefore, Lemma~\ref{lem:diff} can be invoked to verify $\left|u_i^{(t+1)}-v_i^{(t+1)}\right|\leq \left(1+\cO(\eta\delta)\right)\left|u_i^{(t)}-v_i^{(t)}\right|.$

This in turn leads to
\begin{align*}
    \left|u_i^{(t)}-v_i^{(t)}\right| & \lesssim\sqrt{\alpha}\left(1+\cO(\eta\delta)\right)^{\cO\left(\frac{\norm{\theta^\star}}{\eta\theta_{k}^{\star}}\log\left(\frac{1}{\alpha}\right)\right)}\lesssim \alpha^{0.5-\cO(\sqrt{k}\kappa\delta)}.
\end{align*}

\begin{figure*}[t]
    \begin{center}
        \subfloat[\footnotesize 2-layer matrix recovery]{
            {\includegraphics[width=0.32\linewidth]{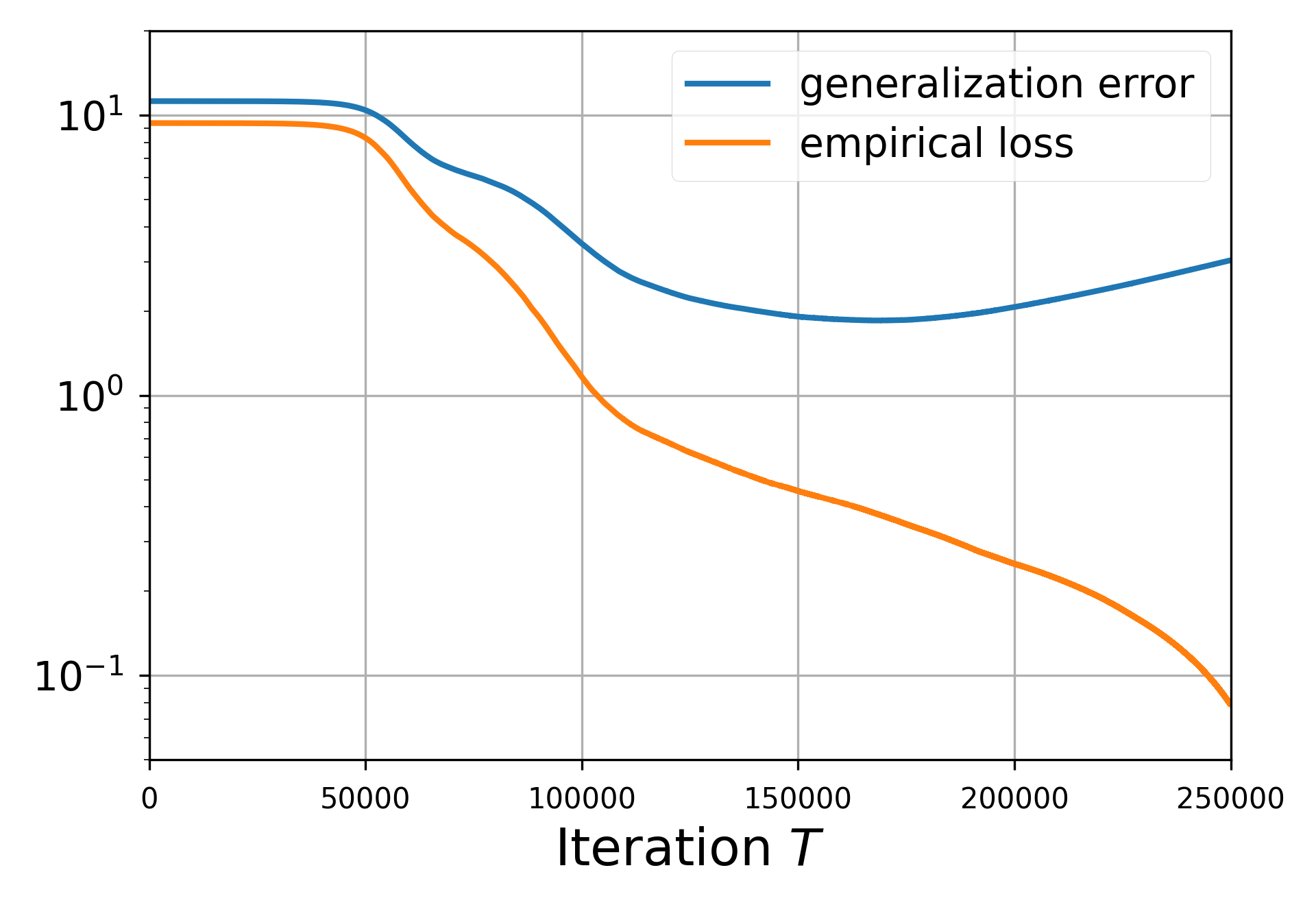}}\label{fig::2-layer-matrix-recovery}}
        \subfloat[\footnotesize 3-layer matrix recovery]{
            {\includegraphics[width=0.32\linewidth]{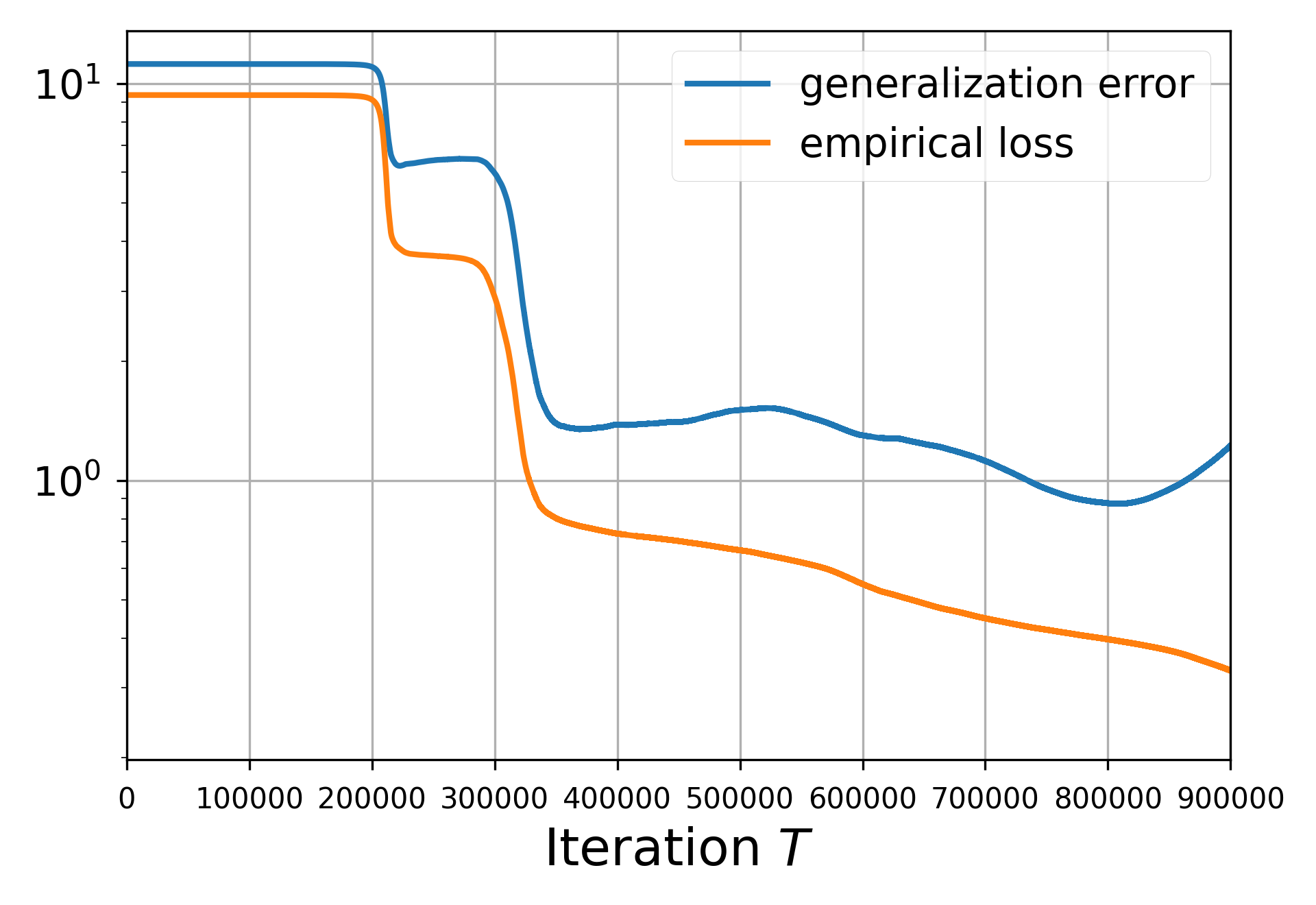}}\label{fig::3-layer-matrix-recovery}}
        \subfloat[\footnotesize 4-layer matrix recovery]{
            {\includegraphics[width=0.32\linewidth]{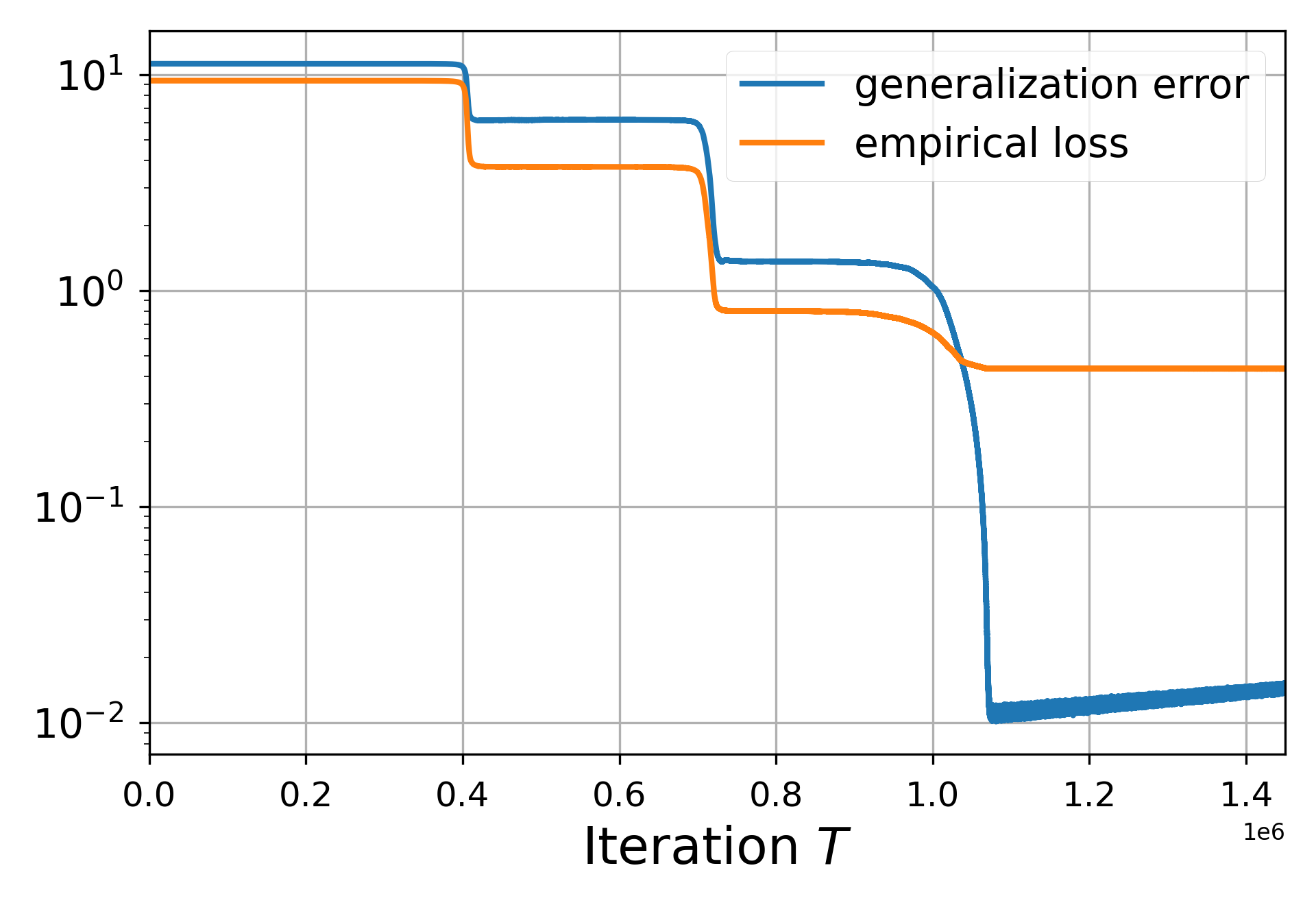}}\label{fig::4-layer-matrix-recovery}}\\\vspace{-2mm}
        \subfloat[\footnotesize 2-layer ReLU network]{
            {\includegraphics[width=0.32\linewidth]{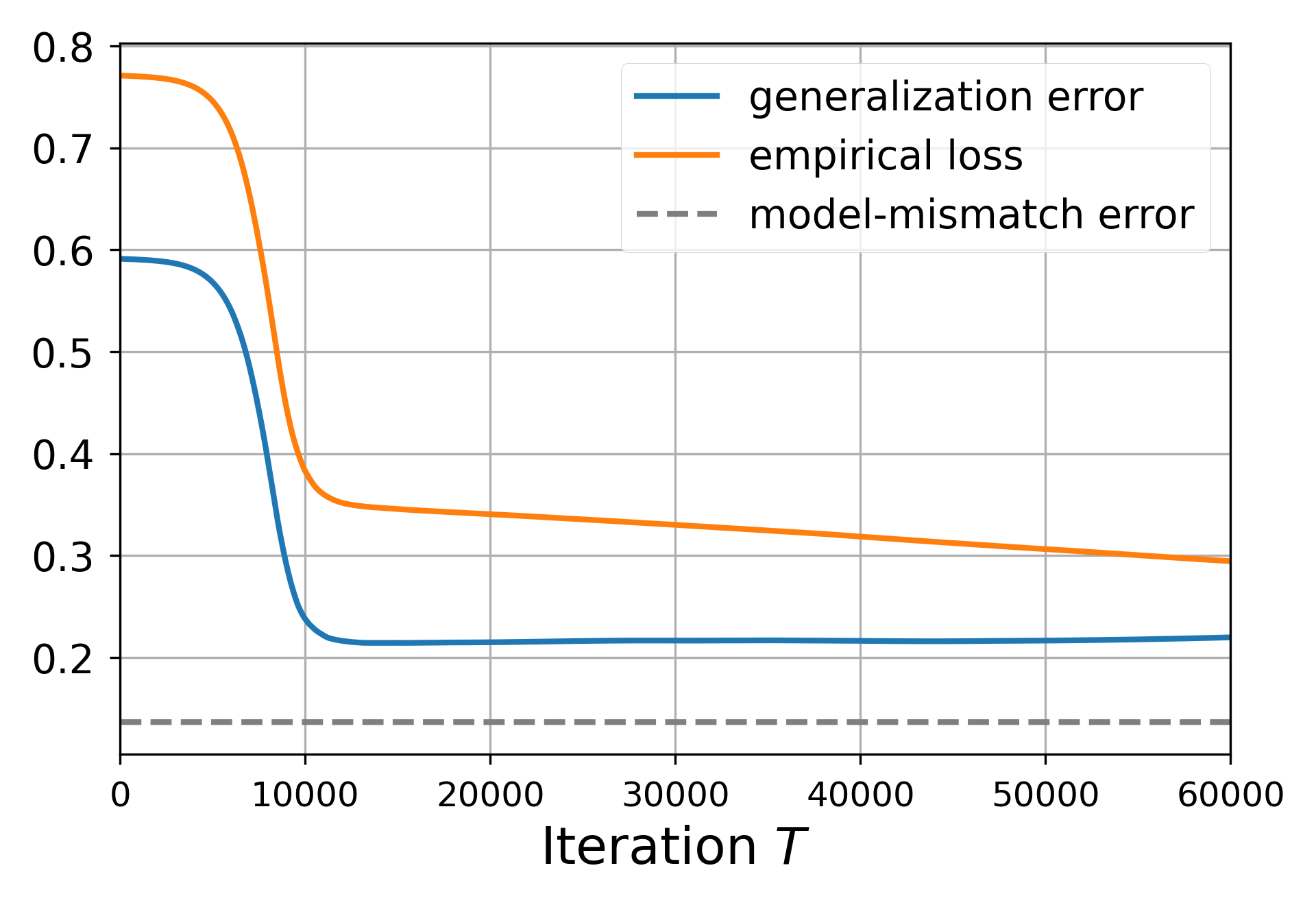}}\label{fig::2-nn}}
        \subfloat[\footnotesize 3-layer ReLU network]{
            {\includegraphics[width=0.32\linewidth]{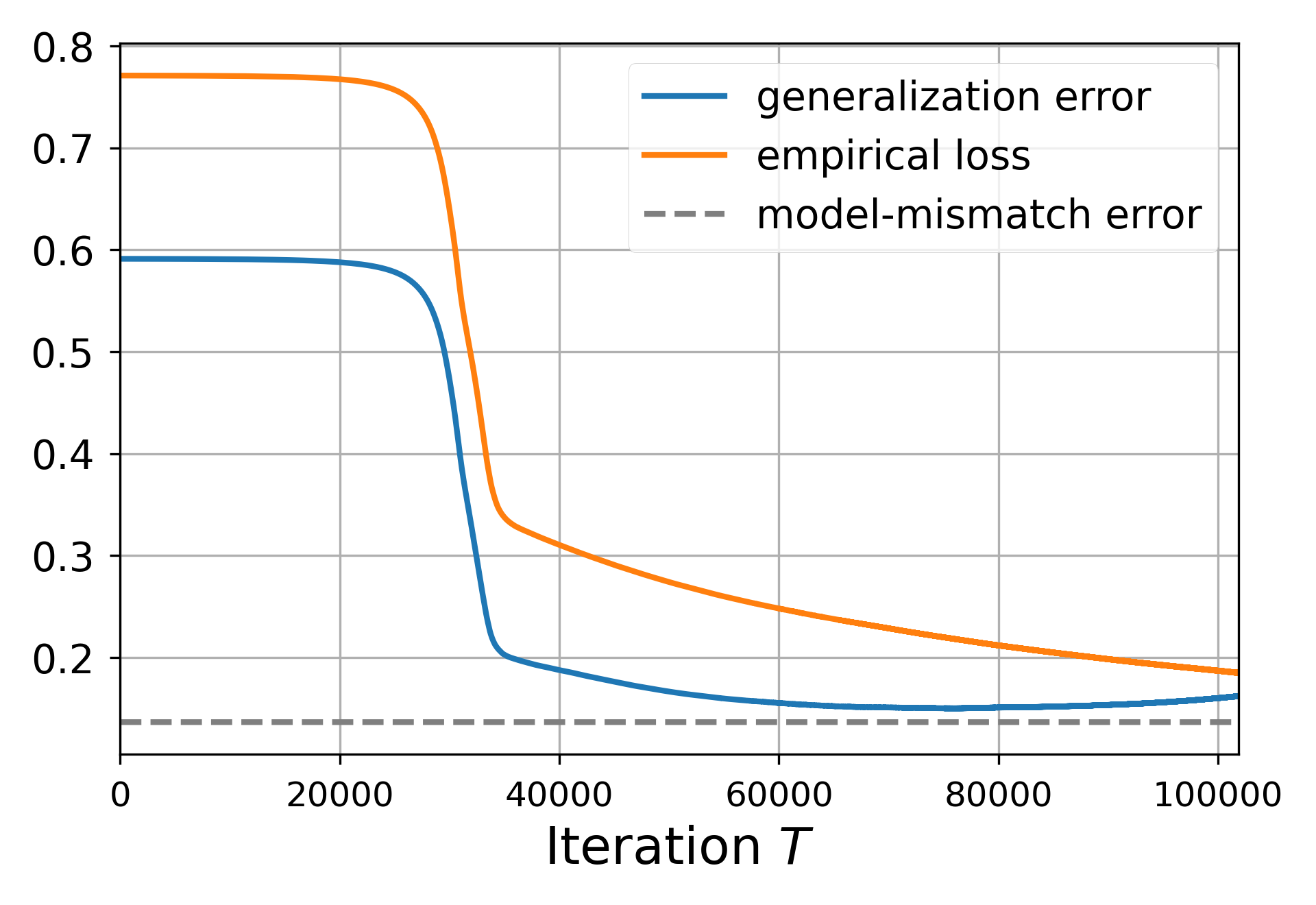}}\label{fig::3-layer-nn}}
        \subfloat[\footnotesize 4-layer ReLU network]{
            {\includegraphics[width=0.32\linewidth]{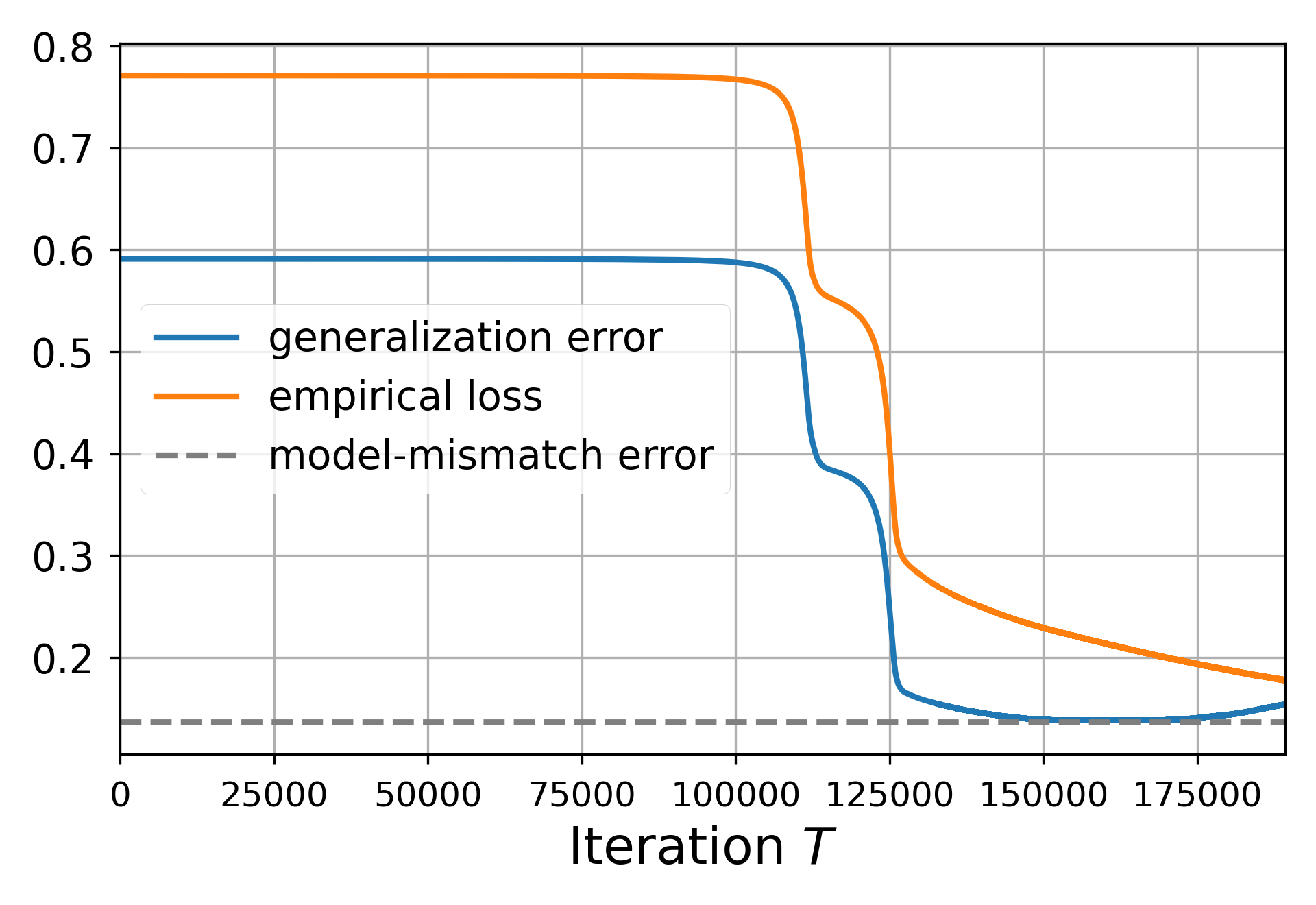}}\label{fig::4-layer-nn}}
    \end{center}\vspace{-3mm}
    \caption{\footnotesize \textbf{Deep matrix recovery (first row).} The ground truth $X^{\star}\in \R^{20\times 20}$ with $\rank(X^{\star})=3$ is chosen randomly. The elements of the measurement matrices are selected from $\cN(0,1)$, and the sample size is set to $m=180$. The corruption probability is set to $p=0.05$ with distribution $\cN(0, 100)$. We use SubGM with step-size $\eta=0.001$ and Gaussian initialization with an initialization scale $\alpha=1\times 10^{-3}$.
    \textbf{ReLU models (second row).} The samples are chosen from $y_i=\sin(\theta^{\star\top}x_i)+\err_i$ where $\theta^{\star}\in \R^{50}$ is randomly generated with $\norm{\theta^{\star}}_0=2$, $x_i\sim \cN(0,I_{50})$, and $\err_i\sim \cN(0, 25)$ with corruption probability $p=0.05$. The sample size is set to be $m=1500$. We use SubGM with step-size $\eta=0.001$ and Gaussian initialization $\cN(0, \alpha^{2/N}/d)$.
    }
    \label{fig::application}
\end{figure*}

\section{Numerical Experiments: Beyond Linear Regression}

In this section, we empirically verify that the benefits of depth extend to the robust variants of deep matrix recovery and ReLU networks with $\ell_1$-loss.

\paragraph{Deep Matrix Recovery.}
In low-rank matrix recovery, the goal is to recover a low-rank matrix $X^\star\in\R^{d\times d}$, from a limited number of noisy measurements of the form $y_i = \inner{A_i}{X^\star}+\err_i$.
To recover $X^\star$, we consider a deep factorized model of the form $W_1W_2\dots W_N$, where $W_i\in\R^{d\times d}$ for $i=1,\dots,N$, and minimize the $\ell_1$-loss $(1/m) \sum_{i=1}^{m}\left|y_i-\inner{A_i}{W_1W_{2}\cdots W_N}\right|$ via SubGM.
When $N=2$, the above model reduces to the famous Burer-Monteiro approach~\cite{burer2003nonlinear}. We assume that $5\%$ of the measurements are grossly corrupted with noise. The first row of Figure~\ref{fig::application} shows the performance of SubGM on 2-, 3-, and 4-layer models. It can be seen that the 4-layer model outperforms shallower models, achieving a generalization error that is proportional to the step-size.

\paragraph{Deep ReLU Network on Synthetic Dataset.} As another experiment, we analyze the effect of depth on the performance of SubGM with ReLU networks and $\ell_1$-loss. Given an input $x\in \R^d$, the output of an $N$-layer ReLU network is defined as $f_{\mathbf{W}}(x)= W_{N} \sigma\left(W_{N-1} \cdots \sigma\left(W_{1} x\right) \cdots\right)$,
where $W_{1} \in \R^{m \times d}, W_{2}, \cdots, W_{N-1} \in \R^{m \times m}$, and $W_{N} \in \mathbb{R}^{1 \times m}$. Moreover, $\sigma(x)=\max \{0, x\}$ is the ReLU activation function. Given the true function $f^\star(x) = \sin(\theta^{\star\top}x)$, our goal is to train a ReLU model to approximate $f^\star$ as accurately as possible. To this goal, we minimize the $\ell_1$-loss $(1/m) \sum_{i=1}^{m}\left|y_i-f_{\mathbf{W}}(x_i)\right|$.
The second row of Figure~\ref{fig::application} illustrates the performance of SubGM. It is worth noting that, unlike robust linear regression and deep matrix recovery, there always exists a non-diminishing model-mismatch error between the true and considered ReLU model (shown as a dashed line). Nonetheless, SubGM can achieve this model-mismatch error on a 4-layer ReLU model with only 1500 samples, even if $5\%$ of the measurements are corrupted with large noise.

\paragraph{Deep ReLU Network on CIFAR Dataset}
We verify that the desirable performance of SubGM with $\ell_1$-loss can be extended to its stochastic variant with mini-batches on CIFAR-10 and CIFAR-100 \cite{krizhevsky2009learning}, outperforming cross-entropy (CE) loss, which is considered as one of the most suitable loss functions for CIFAR datasets. To show this,
we use standard ResNet architectures \cite{he2016deep} with $\ell_1$-loss and compare it with the cross-entropy loss on noisy CIFAR datasets, where we randomize the labels of $10\%$ of the training dataset. For CIFAR-100 experiment, we use the “loss scaling” trick introduced in \cite{hui2020evaluation}. The training details are deferred to Section~\ref{sec::cifar}. The best test accuracy for both CIFAR-10 and CIFAR-100 is reported in Table~\ref{table::table}. One can see that $\ell_1$-loss outperforms cross-entropy loss significantly, demonstrating that our framework may be extended to more realistic settings. Moreover, we do observe that the deeper model performs better on CIFAR-100, which aligns with our theoretical result. Based on our simulations, an interesting and important future direction would be to study the optimization landscape of $\ell_1$-loss with more general neural network architectures.

\begin{table}[h]
    \centering
    \begin{tabular}{cccc|ccc}
        \toprule
        \multirow{2}{*}{Method} & \multicolumn{3}{c}{CIFAR-10} & \multicolumn{3}{c}{CIFAR-100}                                                                                     \\
        \cmidrule{2-4} \cmidrule{5-7}                                                                                                                                              \\
        {}                      & ResNet-18                    & ResNet-34                     & ResNet-50          & ResNet-18          & ResNet-34          & ResNet-50          \\
        \midrule
        CE loss                 & $91.52\%$                    & $91.53\%$                     & $90.87\%$          & $70.17\%$          & $71.22\%$          & $71.30\%$          \\
        $\ell_1$-loss           & $\mathbf{94.16\%}$           & $\mathbf{93.13\%}$            & $\mathbf{92.68\%}$ & $\mathbf{73.69\%}$ & $\mathbf{74.27\%}$ & $\mathbf{75.19\%}$ \\
        \bottomrule
    \end{tabular}
    \caption{Test accuracy for ResNet-18, 34, 50 on CIFAR-10 and CIFAR-100 datasets with noise.}
    \label{table::table}
\end{table}

\section{Conclusion}
Modern problems in machine learning are naturally nonconvex but can be solved reasonably well in practice. To explain this, a recent body of work has postulated that many optimization problems in machine learning are ``convex-like'', i.e., they are devoid of spurious local minima. Our work shows that such global property is too restrictive to hold  even in the context of linear regression, and instead propose a more refined \textit{trajectory analysis} to better capture the landscape of the problem around the solution trajectory. We show that convex models may be fundamentally ill-suited for linear models, and deeper models--despite their nonconvexity--have provably better optimization landscape around the solution trajectory. Empirically, we show that our analysis may extend beyond linear regression; a formal verification of this conjecture is considered as an enticing challenge for future research.

\section*{Acknowledgements}
We thank Richard Y. Zhang, C\'edric J\'osz, and Tiffany Wu for helpful discussions and feedback. We are also thankful for an anonymous reviewer for pointing out the relationship between the perturbation ball and the depth of linear models.
This research is supported, in part, by NSF Award DMS-2152776, ONR Award N00014-22-1-2127, MICDE Catalyst Grant, MIDAS PODS grant and Startup funding from the University of Michigan.
\bibliography{ref}

\begin{thebibliography}{10}

\bibitem{allen2020backward}
Zeyuan Allen-Zhu and Yuanzhi Li.
\newblock Backward feature correction: How deep learning performs deep
  learning.
\newblock {\em arXiv preprint arXiv:2001.04413}, 2020.

\bibitem{arora2019implicit}
Sanjeev Arora, Nadav Cohen, Wei Hu, and Yuping Luo.
\newblock Implicit regularization in deep matrix factorization.
\newblock {\em Advances in Neural Information Processing Systems},
  32:7413--7424, 2019.

\bibitem{baraniuk2008simple}
Richard Baraniuk, Mark Davenport, Ronald DeVore, and Michael Wakin.
\newblock A simple proof of the restricted isometry property for random
  matrices.
\newblock {\em Constructive Approximation}, 28(3):253--263, 2008.

\bibitem{bartlett2020benign}
Peter~L Bartlett, Philip~M Long, G{\'a}bor Lugosi, and Alexander Tsigler.
\newblock Benign overfitting in linear regression.
\newblock {\em Proceedings of the National Academy of Sciences},
  117(48):30063--30070, 2020.

\bibitem{bertsimas2016best}
Dimitris Bertsimas, Angela King, and Rahul Mazumder.
\newblock Best subset selection via a modern optimization lens.
\newblock {\em The annals of statistics}, 44(2):813--852, 2016.

\bibitem{bhojanapalli2016global}
Srinadh Bhojanapalli, Behnam Neyshabur, and Nathan Srebro.
\newblock Global optimality of local search for low rank matrix recovery.
\newblock {\em arXiv preprint arXiv:1605.07221}, 2016.

\bibitem{bickel2009simultaneous}
Peter~J Bickel, Ya’acov Ritov, and Alexandre~B Tsybakov.
\newblock Simultaneous analysis of lasso and dantzig selector.
\newblock {\em The Annals of statistics}, 37(4):1705--1732, 2009.

\bibitem{bommasani2021opportunities}
Rishi Bommasani, Drew~A Hudson, Ehsan Adeli, Russ Altman, Simran Arora, Sydney
  von Arx, Michael~S Bernstein, Jeannette Bohg, Antoine Bosselut, Emma
  Brunskill, et~al.
\newblock On the opportunities and risks of foundation models.
\newblock {\em arXiv preprint arXiv:2108.07258}, 2021.

\bibitem{burer2003nonlinear}
Samuel Burer and Renato~DC Monteiro.
\newblock A nonlinear programming algorithm for solving semidefinite programs
  via low-rank factorization.
\newblock {\em Mathematical Programming}, 95(2):329--357, 2003.

\bibitem{candes2007dantzig}
Emmanuel Candes and Terence Tao.
\newblock The dantzig selector: Statistical estimation when p is much larger
  than n.
\newblock {\em The annals of Statistics}, 35(6):2313--2351, 2007.

\bibitem{chou2021more}
Hung-Hsu Chou, Johannes Maly, and Holger Rauhut.
\newblock More is less: Inducing sparsity via overparameterization.
\newblock {\em arXiv preprint arXiv:2112.11027}, 2021.

\bibitem{davis2018stochastic}
Damek Davis and Dmitriy Drusvyatskiy.
\newblock Stochastic subgradient method converges at the rate $o(k^{-1/4})$ on
  weakly convex functions.
\newblock {\em arXiv preprint arXiv:1802.02988}, 2018.

\bibitem{ding2021rank}
Lijun Ding, Liwei Jiang, Yudong Chen, Qing Qu, and Zhihui Zhu.
\newblock Rank overspecified robust matrix recovery: Subgradient method and
  exact recovery.
\newblock {\em Advances in Neural Information Processing Systems}, 34, 2021.

\bibitem{du2019width}
Simon Du and Wei Hu.
\newblock Width provably matters in optimization for deep linear neural
  networks.
\newblock In {\em International Conference on Machine Learning}, pages
  1655--1664. PMLR, 2019.

\bibitem{du2018algorithmic}
Simon~S Du, Wei Hu, and Jason~D Lee.
\newblock Algorithmic regularization in learning deep homogeneous models:
  Layers are automatically balanced.
\newblock {\em arXiv preprint arXiv:1806.00900}, 2018.

\bibitem{eldan2016power}
Ronen Eldan and Ohad Shamir.
\newblock The power of depth for feedforward neural networks.
\newblock In {\em Conference on learning theory}, pages 907--940. PMLR, 2016.

\bibitem{fattahi2020exact}
Salar Fattahi and Somayeh Sojoudi.
\newblock Exact guarantees on the absence of spurious local minima for
  non-negative rank-1 robust principal component analysis.
\newblock {\em Journal of machine learning research}, 2020.

\bibitem{friedman2001elements}
Jerome Friedman, Trevor Hastie, Robert Tibshirani, et~al.
\newblock {\em The elements of statistical learning}, volume~1.
\newblock Springer series in statistics New York, 2001.

\bibitem{gissin2019implicit}
Daniel Gissin, Shai Shalev-Shwartz, and Amit Daniely.
\newblock The implicit bias of depth: How incremental learning drives
  generalization.
\newblock {\em arXiv preprint arXiv:1909.12051}, 2019.

\bibitem{gunasekar2018implicit}
Suriya Gunasekar, Jason Lee, Daniel Soudry, and Nathan Srebro.
\newblock Implicit bias of gradient descent on linear convolutional networks.
\newblock {\em arXiv preprint arXiv:1806.00468}, 2018.

\bibitem{haochen2021shape}
Jeff~Z HaoChen, Colin Wei, Jason Lee, and Tengyu Ma.
\newblock Shape matters: Understanding the implicit bias of the noise
  covariance.
\newblock In {\em Conference on Learning Theory}, pages 2315--2357. PMLR, 2021.

\bibitem{he2016deep}
Kaiming He, Xiangyu Zhang, Shaoqing Ren, and Jian Sun.
\newblock Deep residual learning for image recognition.
\newblock In {\em Proceedings of the IEEE conference on computer vision and
  pattern recognition}, pages 770--778, 2016.

\bibitem{hui2020evaluation}
Like Hui and Mikhail Belkin.
\newblock Evaluation of neural architectures trained with square loss vs
  cross-entropy in classification tasks.
\newblock In {\em International Conference on Learning Representations}, 2020.

\bibitem{kawaguchi2016deep}
Kenji Kawaguchi.
\newblock Deep learning without poor local minima.
\newblock {\em arXiv preprint arXiv:1605.07110}, 2016.

\bibitem{krizhevsky2009learning}
Alex Krizhevsky, Geoffrey Hinton, et~al.
\newblock Learning multiple layers of features from tiny images.
\newblock 2009.

\bibitem{li2021implicit}
Jiangyuan Li, Thanh Nguyen, Chinmay Hegde, and Ka~Wai Wong.
\newblock Implicit sparse regularization: The impact of depth and early
  stopping.
\newblock {\em Advances in Neural Information Processing Systems}, 34, 2021.

\bibitem{li2020nonconvex}
Xiao Li, Zhihui Zhu, Anthony Man-Cho~So, and Rene Vidal.
\newblock Nonconvex robust low-rank matrix recovery.
\newblock {\em SIAM Journal on Optimization}, 30(1):660--686, 2020.

\bibitem{li2020towards}
Zhiyuan Li, Yuping Luo, and Kaifeng Lyu.
\newblock Towards resolving the implicit bias of gradient descent for matrix
  factorization: Greedy low-rank learning.
\newblock {\em arXiv preprint arXiv:2012.09839}, 2020.

\bibitem{ma2021sign}
Jianhao Ma and Salar Fattahi.
\newblock Sign-rip: A robust restricted isometry property for low-rank matrix
  recovery.
\newblock {\em arXiv preprint arXiv:2102.02969}, 2021.

\bibitem{ma2022global}
Jianhao Ma and Salar Fattahi.
\newblock Global convergence of sub-gradient method for robust matrix recovery:
  Small initialization, noisy measurements, and over-parameterization.
\newblock {\em arXiv preprint arXiv:2202.08788}, 2022.

\bibitem{mazumder2017subset}
Rahul Mazumder, Peter Radchenko, and Antoine Dedieu.
\newblock Subset selection with shrinkage: Sparse linear modeling when the snr
  is low.
\newblock {\em arXiv preprint arXiv:1708.03288}, 2017.

\bibitem{novak2018sensitivity}
Roman Novak, Yasaman Bahri, Daniel~A Abolafia, Jeffrey Pennington, and Jascha
  Sohl-Dickstein.
\newblock Sensitivity and generalization in neural networks: an empirical
  study.
\newblock {\em arXiv preprint arXiv:1802.08760}, 2018.

\bibitem{raskutti2011minimax}
Garvesh Raskutti, Martin~J Wainwright, and Bin Yu.
\newblock Minimax rates of estimation for high-dimensional linear regression
  over $\ell_q$-balls.
\newblock {\em IEEE transactions on information theory}, 57(10):6976--6994,
  2011.

\bibitem{safran2021optimization}
Itay Safran and Jason~D Lee.
\newblock Optimization-based separations for neural networks.
\newblock {\em arXiv preprint arXiv:2112.02393}, 2021.

\bibitem{telgarsky2015representation}
Matus Telgarsky.
\newblock Representation benefits of deep feedforward networks.
\newblock {\em arXiv preprint arXiv:1509.08101}, 2015.

\bibitem{telgarsky2016benefits}
Matus Telgarsky.
\newblock Benefits of depth in neural networks.
\newblock In {\em Conference on learning theory}, pages 1517--1539. PMLR, 2016.

\bibitem{van2007deterministic}
Sara Van~de Geer.
\newblock The deterministic lasso.
\newblock Seminar f{\"u}r Statistik, Eidgen{\"o}ssische Technische Hochschule
  (ETH) Z{\"u}rich, 2007.

\bibitem{van1996weak}
Aad~W Van Der~Vaart, Adrianus~Willem van~der Vaart, Aad van~der Vaart, and Jon
  Wellner.
\newblock {\em Weak convergence and empirical processes: with applications to
  statistics}.
\newblock Springer Science \& Business Media, 1996.

\bibitem{vaskevicius2019implicit}
Tomas Vaskevicius, Varun Kanade, and Patrick Rebeschini.
\newblock Implicit regularization for optimal sparse recovery.
\newblock {\em Advances in Neural Information Processing Systems},
  32:2972--2983, 2019.

\bibitem{viens2007supremum}
Frederi~G Viens and Andrew~B Vizcarra.
\newblock Supremum concentration inequality and modulus of continuity for
  sub-nth chaos processes.
\newblock {\em Journal of Functional Analysis}, 248(1):1--26, 2007.

\bibitem{wainwright2019high}
Martin~J Wainwright.
\newblock {\em High-dimensional statistics: A non-asymptotic viewpoint},
  volume~48.
\newblock Cambridge University Press, 2019.

\bibitem{woodworth2020kernel}
Blake Woodworth, Suriya Gunasekar, Jason~D Lee, Edward Moroshko, Pedro
  Savarese, Itay Golan, Daniel Soudry, and Nathan Srebro.
\newblock Kernel and rich regimes in overparametrized models.
\newblock In {\em Conference on Learning Theory}, pages 3635--3673. PMLR, 2020.

\bibitem{zhao2019implicit}
Peng Zhao, Yun Yang, and Qiao-Chu He.
\newblock Implicit regularization via hadamard product over-parametrization in
  high-dimensional linear regression.
\newblock {\em arXiv preprint arXiv:1903.09367}, 2019.

\end{thebibliography}
\bibliographystyle{plain}


\newpage
\appendix

\section{Additional Experiments}
In this section, we provide additional experiments on the performance of SubGM on deep models. Our goal is to verify our theoretical results and show the benefits of both small initialization and geometric step-size. Moreover, we show that the desirable performance of SubGM can be observed in its stochastic variant, as well as for different architectures of ResNets with $\ell_1$-loss, and more realistic CIFAR-10 dataset. All simulations are run on a desktop computer with an Intel Core i9 3.50 GHz CPU and 128GB RAM. The reported results are for an implementation in Python.
\subsection{Deeper Linear Models}
\begin{figure*}
    \begin{centering}
        \subfloat[4-layer]{
            {\includegraphics[width=0.32\linewidth]{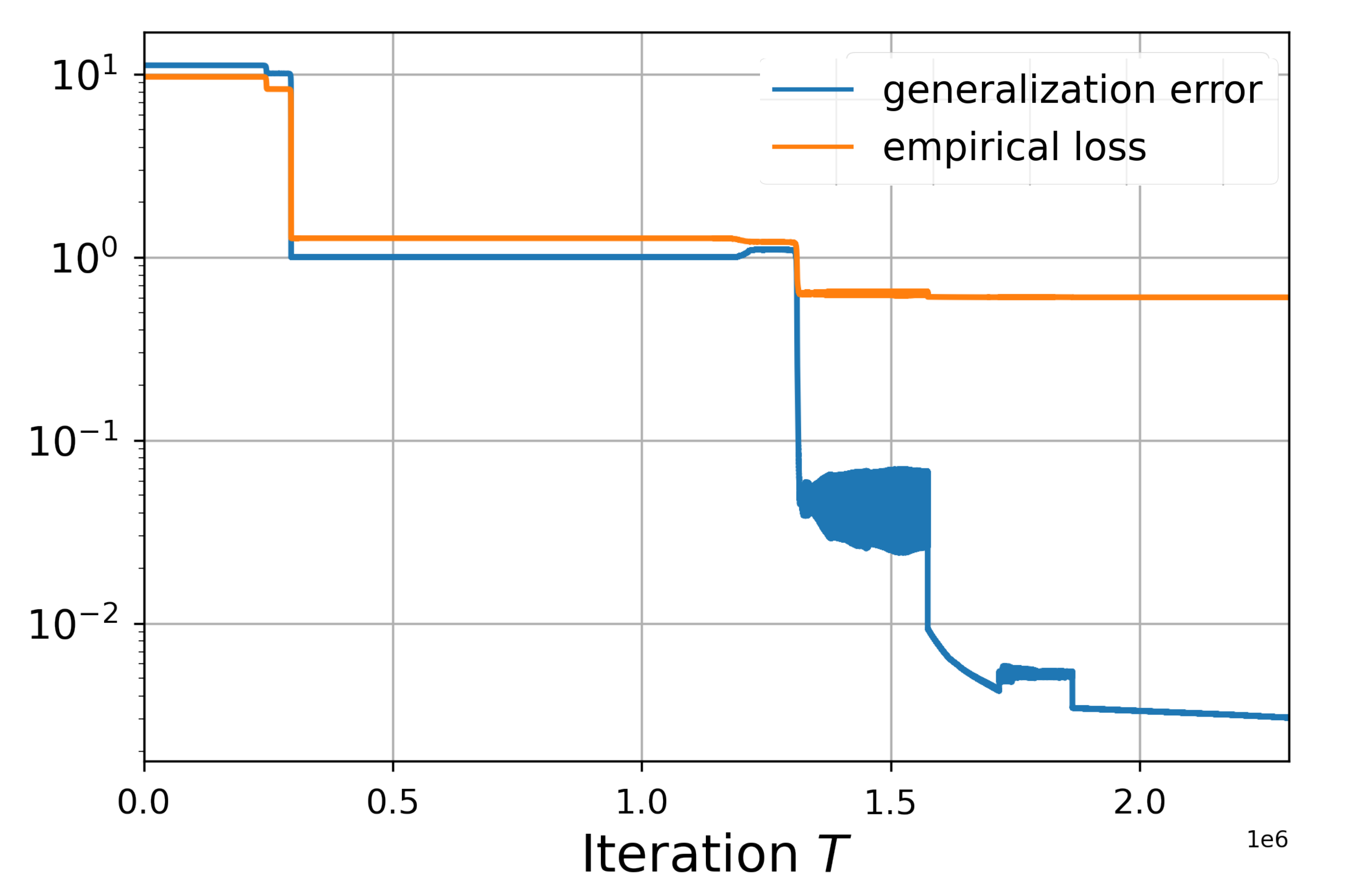}}\label{fig::4-layer}}
        \subfloat[5-layer]{
            {\includegraphics[width=0.32\linewidth]{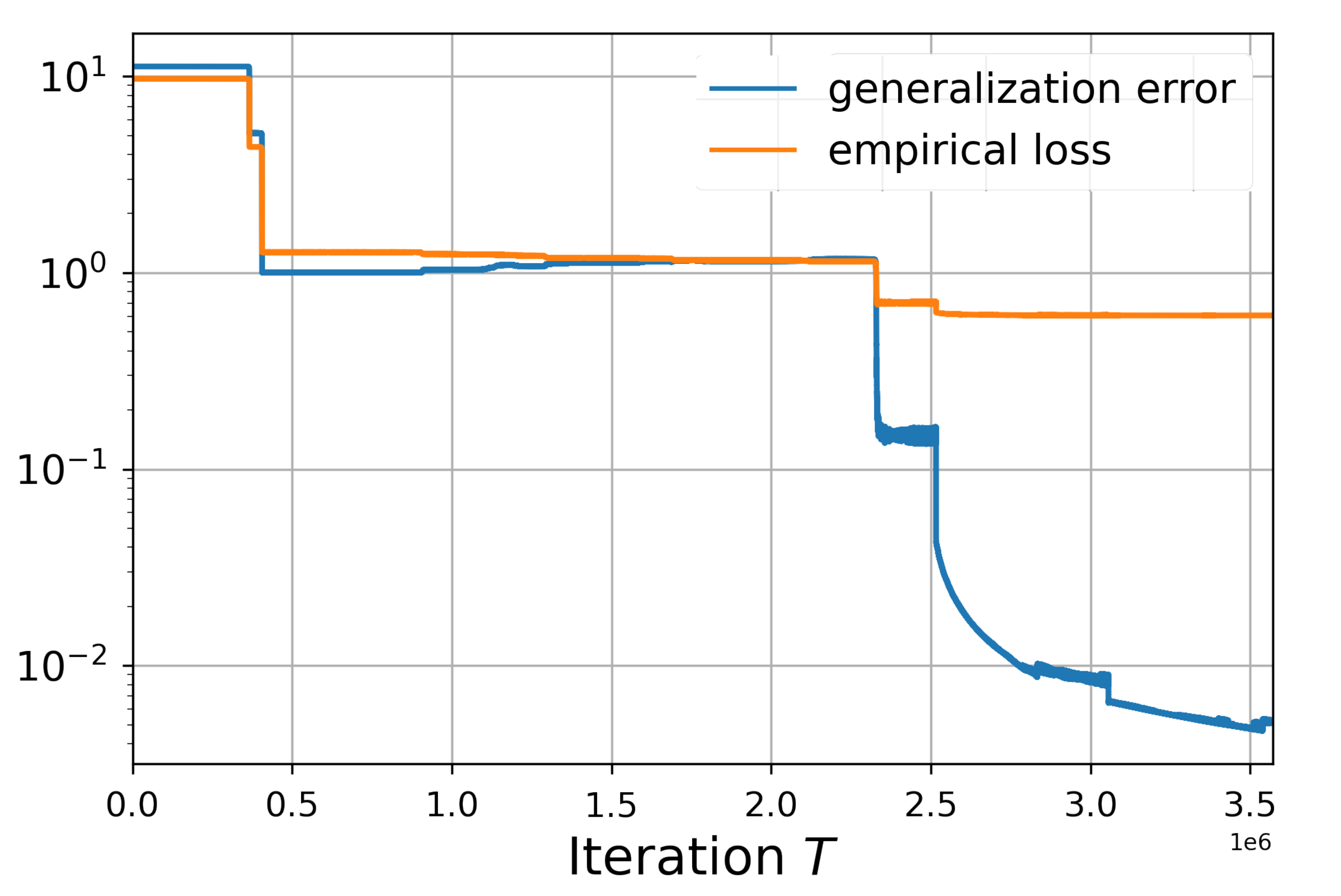}}\label{fig::5-layer}}
        \subfloat[6-layer]{
            {\includegraphics[width=0.32\linewidth]{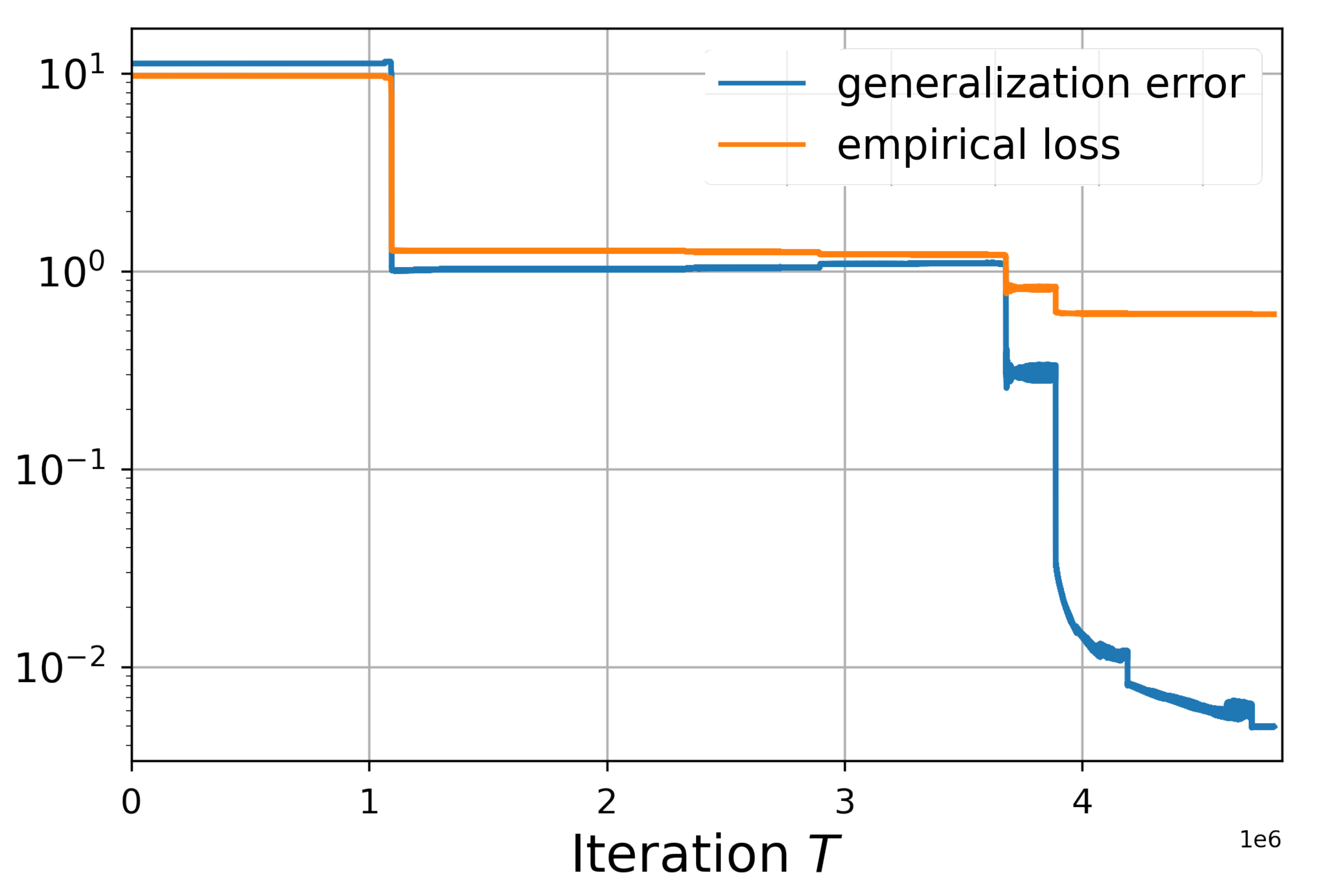}}\label{fig::6-layer}}
    \end{centering}
    \caption{\footnotesize
        The optimization trajectories of deep models ($N=4,5,6$).}
    \label{fig::deeper-model}
\end{figure*}
\begin{figure*}
    \begin{centering}
        \subfloat[2-layer, $\alpha=0.1$]{
            {\includegraphics[width=0.32\linewidth]{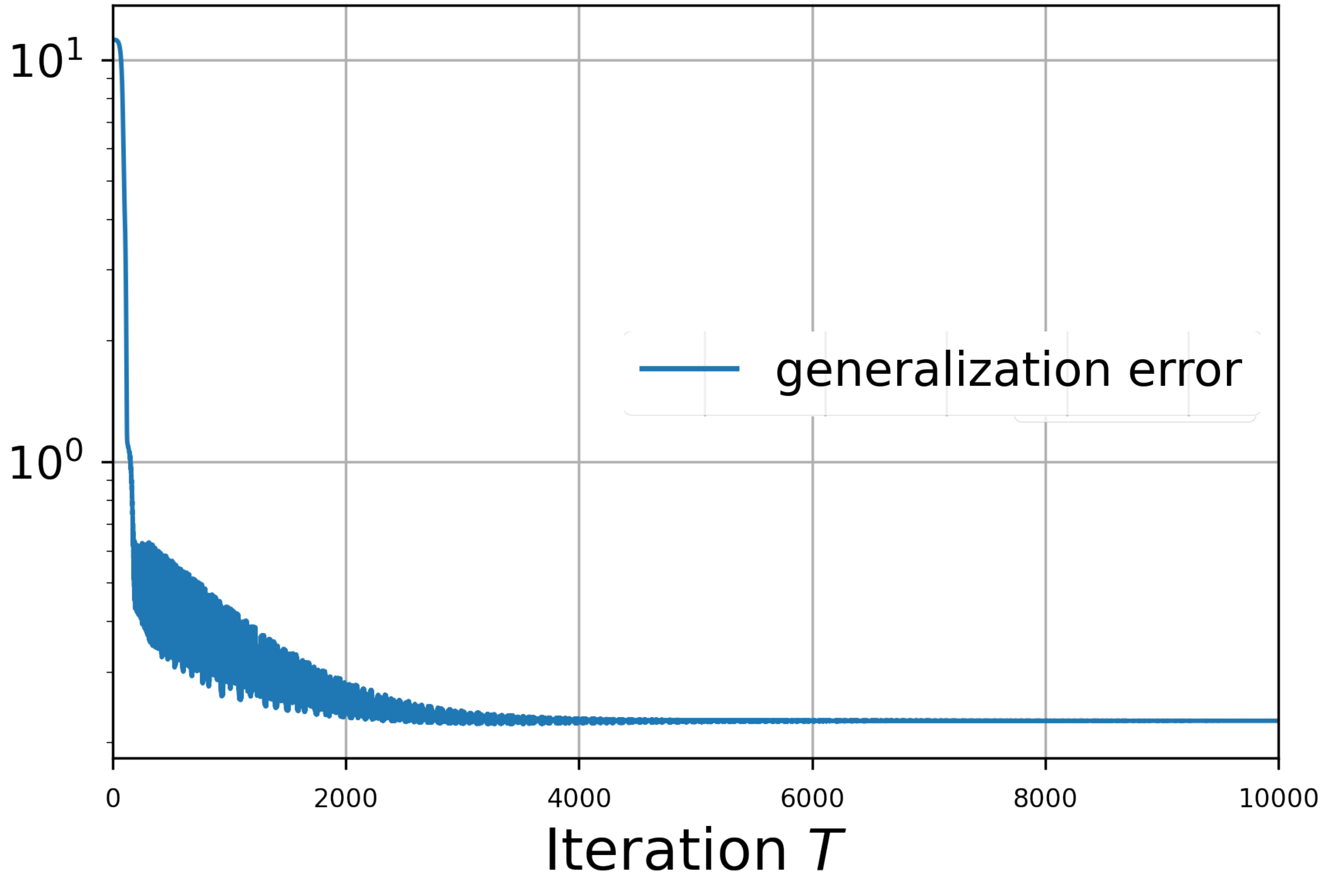}}\label{fig::2-layer-init-1}}
        \subfloat[2-layer, $\alpha=0.01$]{
            {\includegraphics[width=0.32\linewidth]{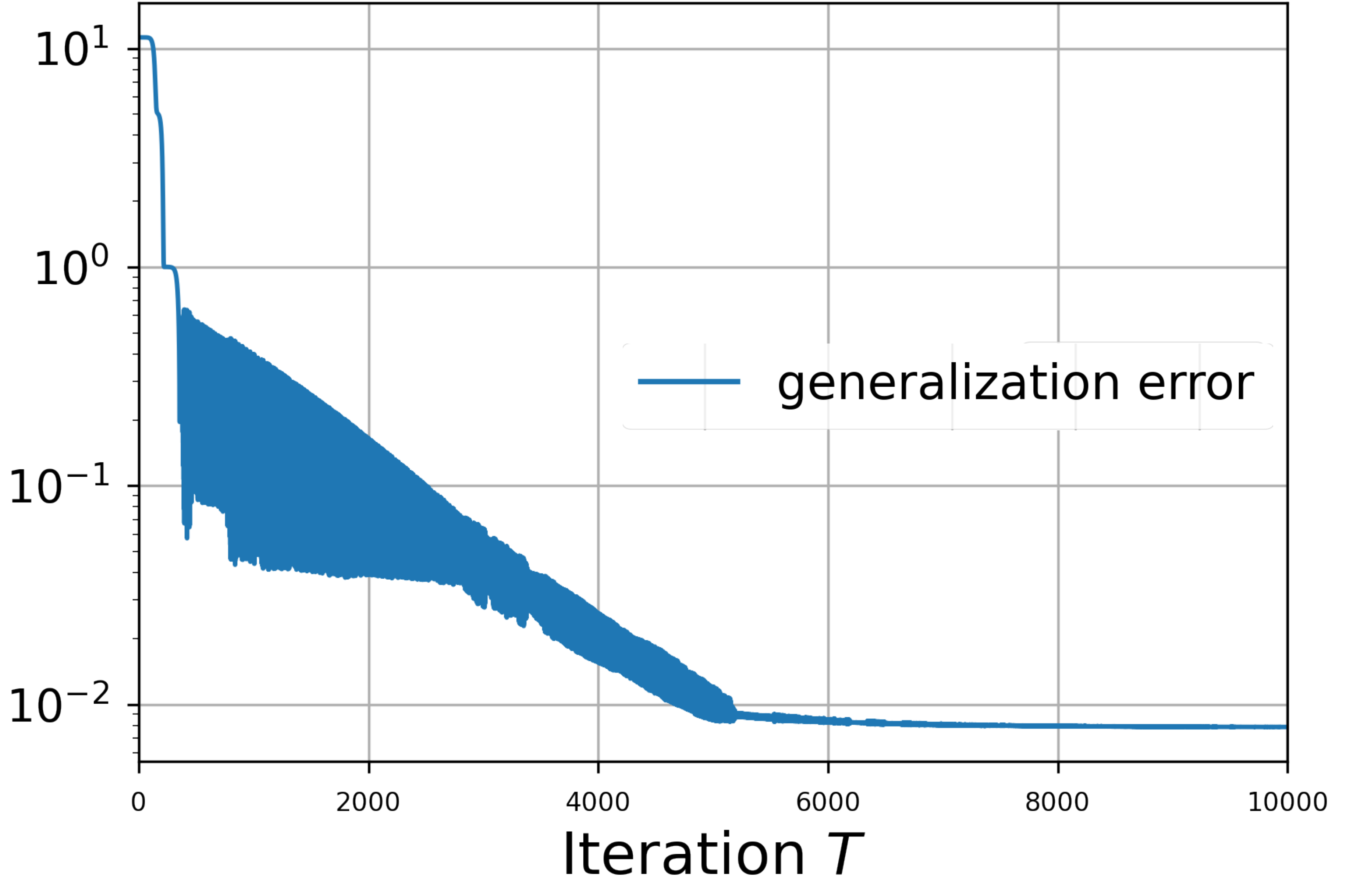}}\label{fig::2-layer-init-2}}
        \subfloat[2-layer, $\alpha=0.001$]{
            {\includegraphics[width=0.32\linewidth]{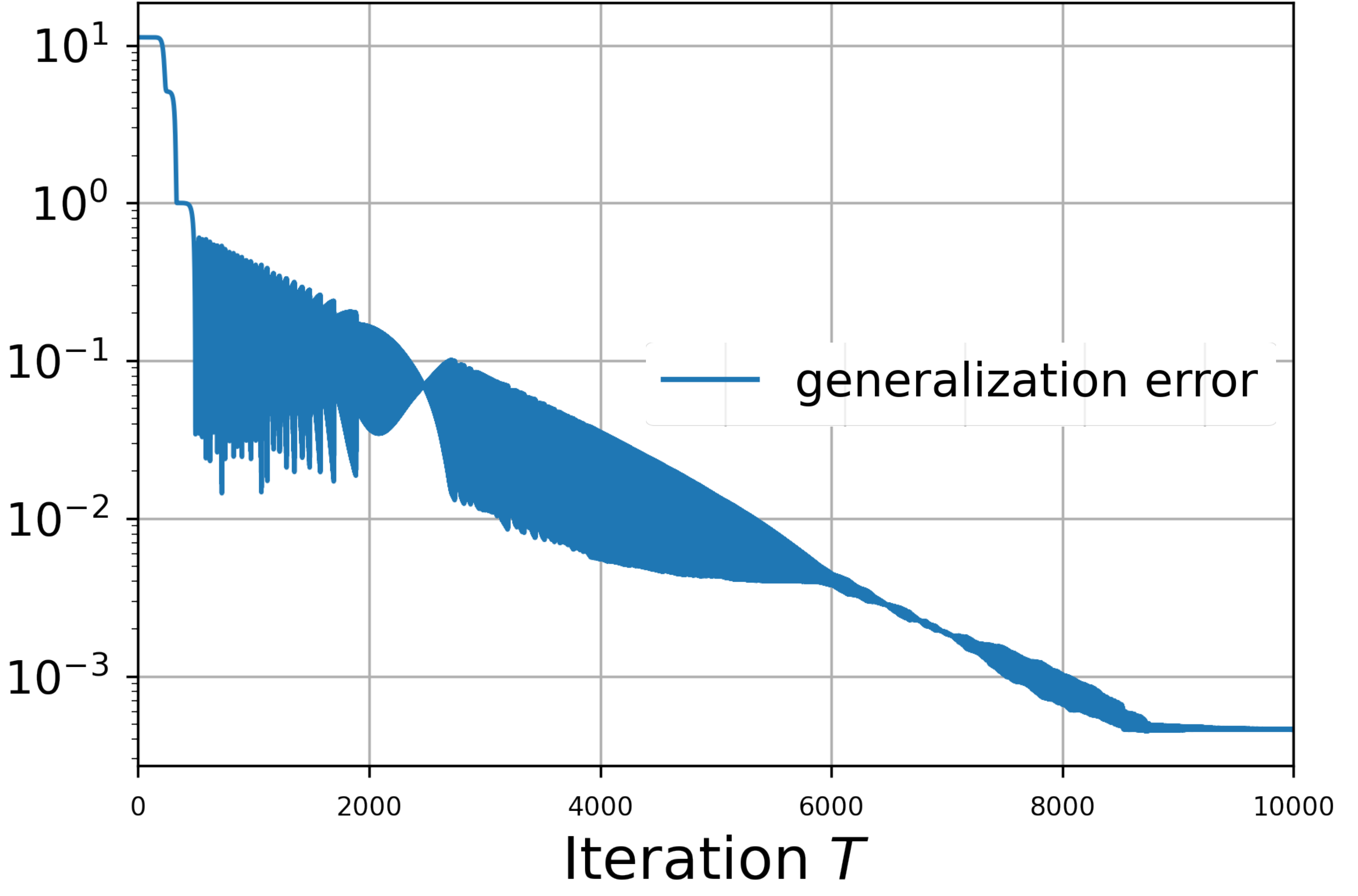}}\label{fig::2-layer-init-3}}\\
        \subfloat[3-layer, $\alpha=0.1$]{
            {\includegraphics[width=0.32\linewidth]{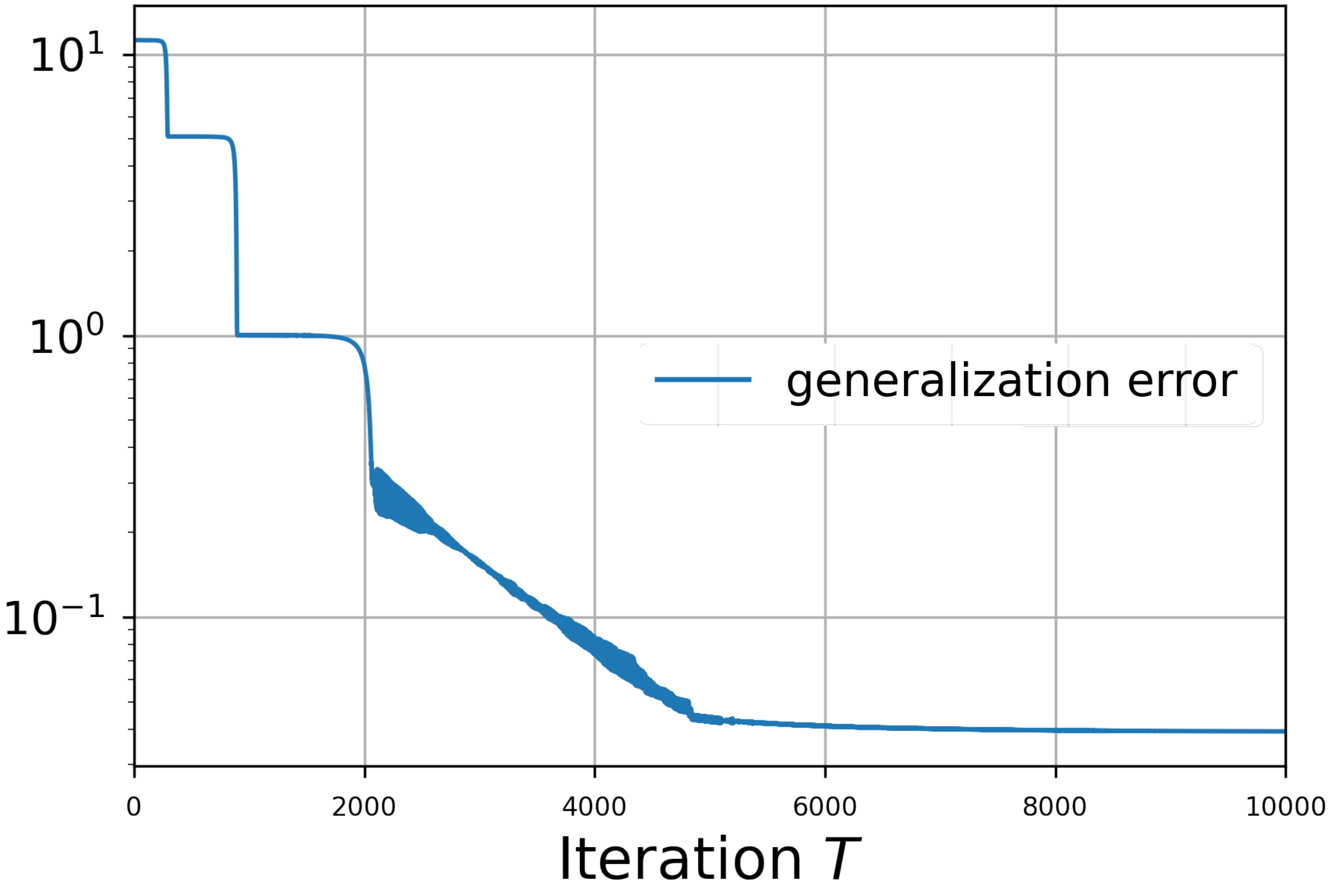}}\label{fig::3-layer-init-1}}
        \subfloat[3-layer, $\alpha=0.01$]{
            {\includegraphics[width=0.32\linewidth]{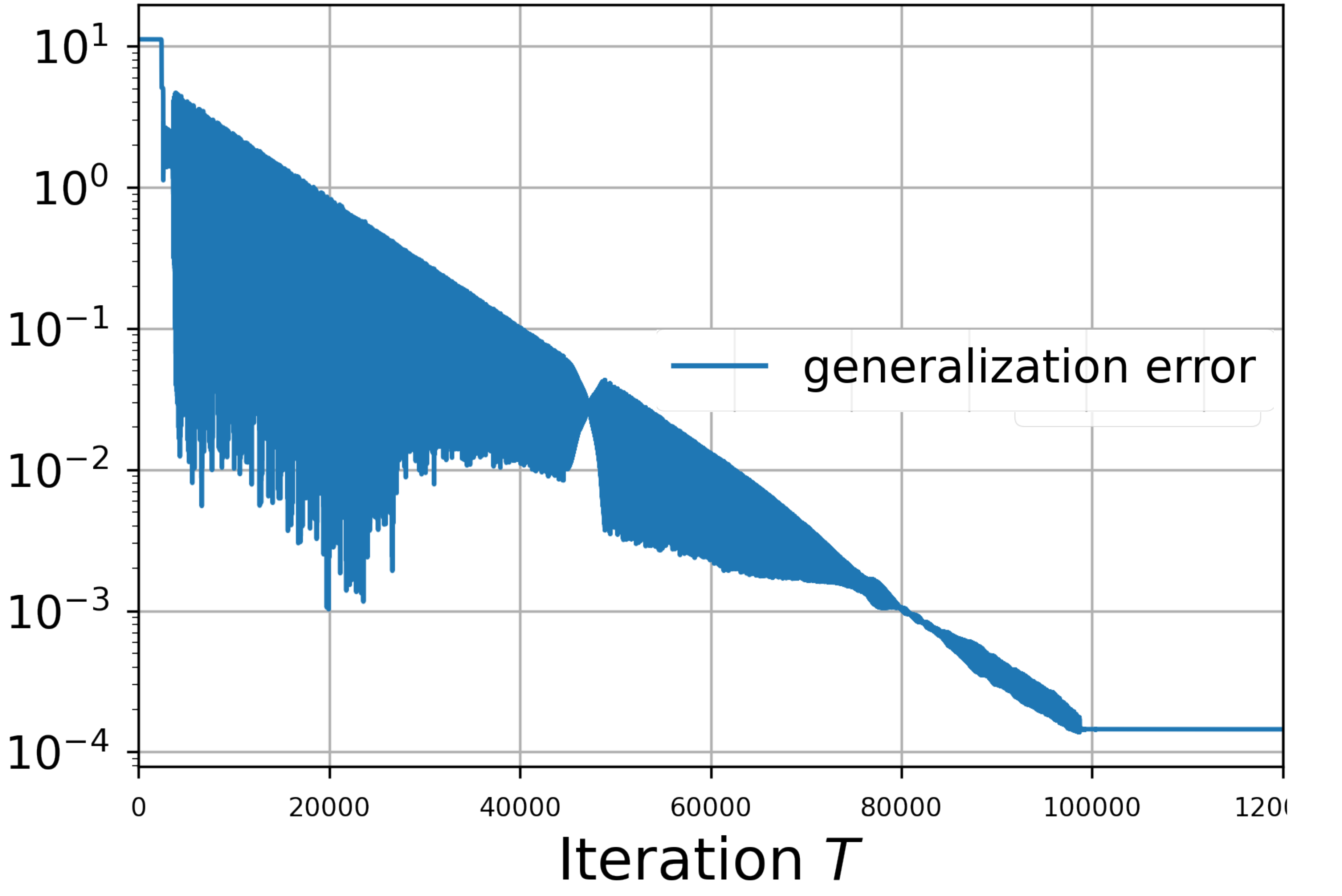}}\label{fig::3-layer-init-2}}
        \subfloat[3-layer, $\alpha=0.001$]{
            {\includegraphics[width=0.32\linewidth]{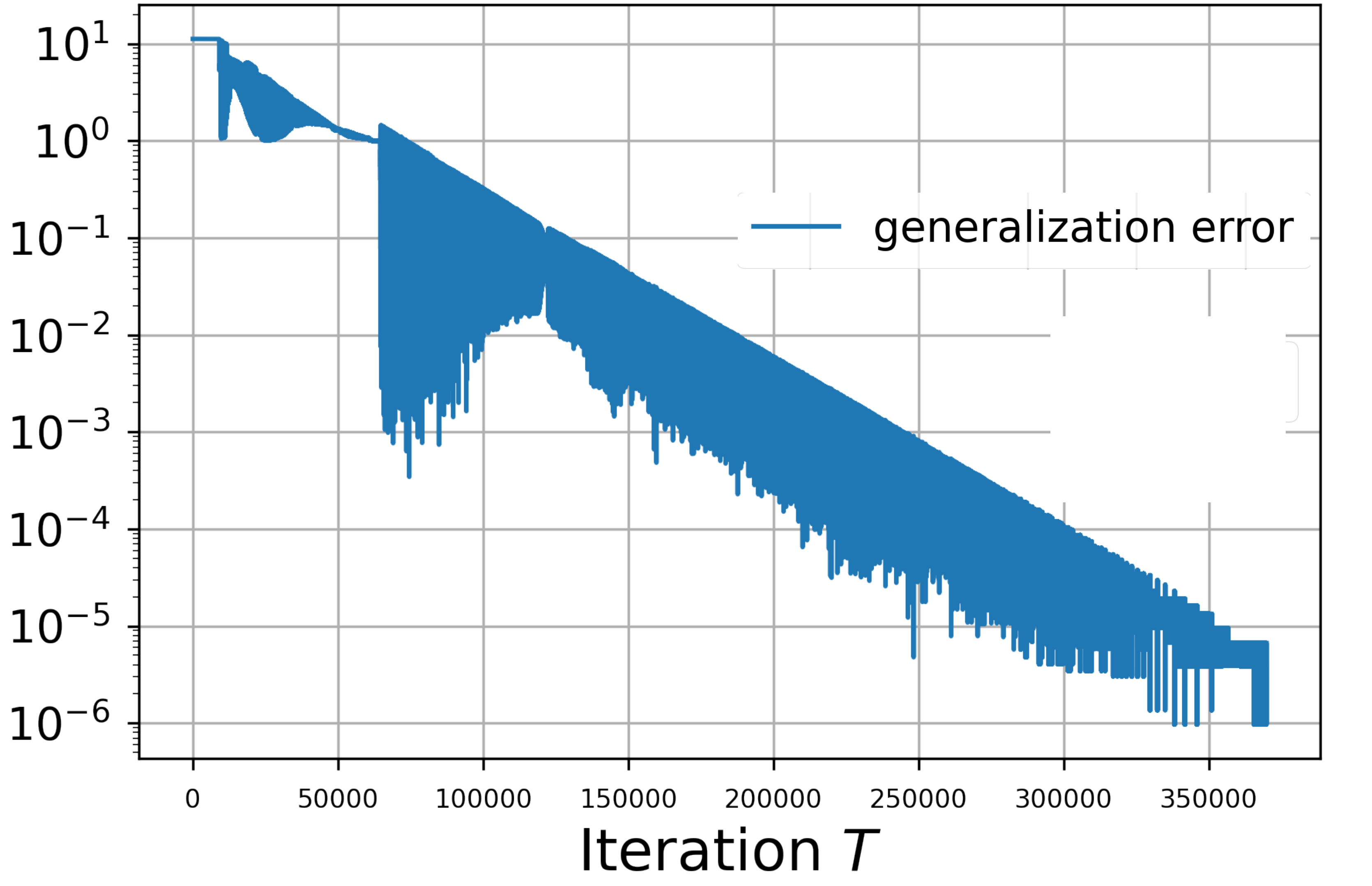}}\label{fig::3-layer-init3}}
    \end{centering}
    \caption{\footnotesize
        The optimization trajectories of 2 and 3-layer models with different initialization size $\alpha=0.1, 0.01, 0.001$ using exponentially decayed step-size.}
    \label{fig::exp}
\end{figure*}
In this experiment, we study the performance of SubGM on deeper models ($N=4,5,6$). To accelerate the training process, we first use a large step-size $\eta_1=1\times 10^{-3}$, and then progressively apply smaller step-sizes $\eta_2=1\times 10^{-4}$ and $\eta_3=1\times 10^{-5}$ as the training loss continues to decay. As shown in Figure~\ref{fig::deeper-model}, deeper models share similar generalization error, outperforming 1- and 2-layer models presented in Figure~\ref{fig::simulation}.
\subsection{Geometric Step-size}
As shown in our simulations, training $N$-layer models may require millions of iterations even on a small synthetic dataset. As proven in Theorem~\ref{thm::N-layer}, the training process may become even slower for deeper models. In this experiment, we explore the performance of geometric (i.e., exponentially decaying) step-size on the same dataset. SubGM with a geometric step-size has been widely used for the optimization of $\ell_1$-loss \cite{li2020nonconvex, ma2021sign}, and more general sharp weakly convex functions \cite{davis2018stochastic}. Figure~\ref{fig::exp} shows that a geometric steps-size can lead to a 1000-fold reduction in the required number of iterations. Moreover, a geometric step-size improves the convergence rate to linear. The theoretical justification of this improvement is left as an enticing challenge for future research. Finally, it can be observed that SubGM with geometric step-size performs surprisingly well on deeper models, achieving a generalization errors in the order of $10^{-6}$. This further supports the benefits of the depth.

\subsection{Experiments on CIFAR Dataset}
\label{sec::cifar}
In this section, we provide the training details for the experiments on both CIFAR-10 and CIFAR-100 where $10\%$ of the training data points are randomly labeled. For CIFAR-100 experiment, we use the “loss scaling” trick introduced in \cite{hui2020evaluation}.
In particular, we denote the neural network by $f_{\theta}:\bR^d\to \bR^C$, where $d$ is the input dimension and $C$ is the number of class. The standard $\ell_1$-loss for the one-hot encoded label vector can be written (at a single point) as
\begin{equation}
    \ell=\left|f_\theta(x)[c]-1\right|+\sum_{c'\neq c}\left|f_\theta(x)[c']\right|.
\end{equation}
Here $c$ is the position of the label and $f_{\theta}(x)[i]$ is the $i$-th coordinate of the prediction. The rescaled $\ell_1$-loss is defined by two parameters $k$ and $M$
as follows:
\begin{equation}
    \ell_{\text{scaling}}=k\cdot\left|f_\theta(x)[c]-M\right|+\sum_{c'\neq c}\left|f_\theta(x)[c']\right|.
\end{equation}

In our simulation, we choose $k=5$, and $M=2$. The evolution of the training and test accuracy for CIFAR-10 and CIFAR-100 with both $\ell_1$ and CE losses are shown in Figures~\ref{fig::cifar-10} and~\ref{fig::cifar-100}.
\begin{figure*}[t]
    \begin{center}
        \subfloat[\footnotesize ResNet-18, $\ell_1$-loss]{
            {\includegraphics[width=0.33\linewidth]{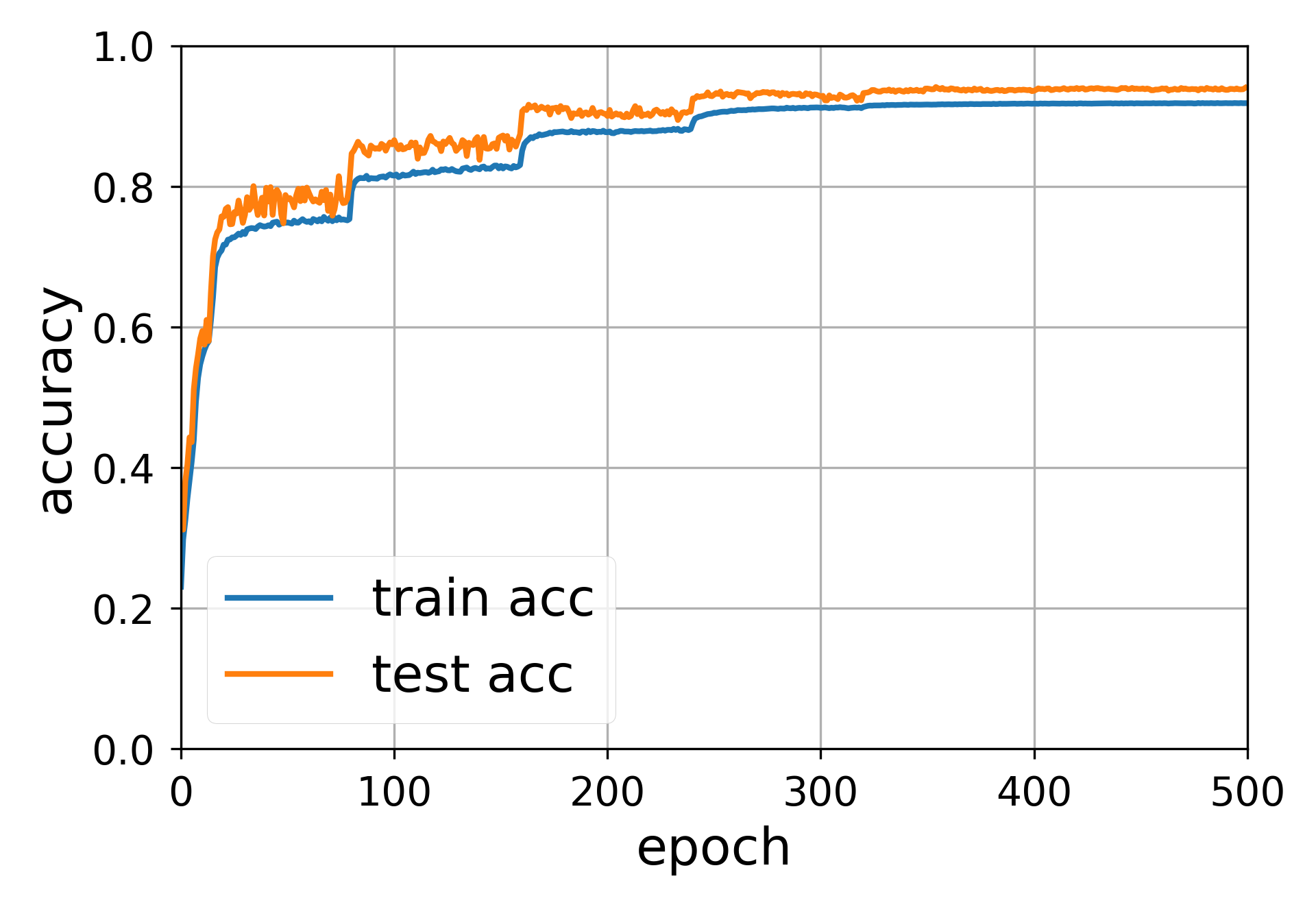}}\label{fig::resnet-18-l1}}
        \subfloat[\footnotesize ResNet-34, $\ell_1$-loss]{
            {\includegraphics[width=0.33\linewidth]{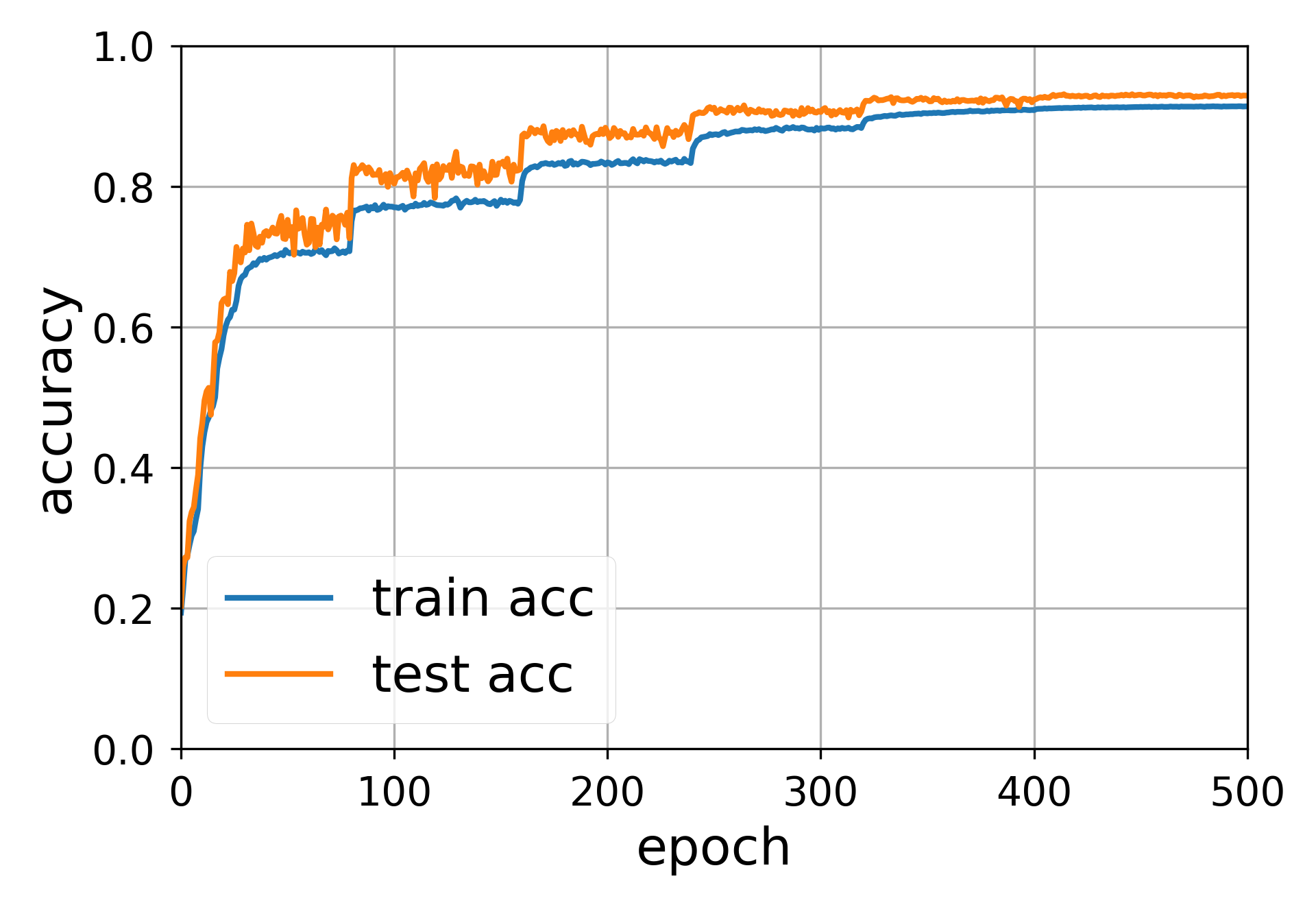}}\label{fig::resnet-34-l1}}
        \subfloat[\footnotesize ResNet-50, $\ell_1$-loss]{
            {\includegraphics[width=0.33\linewidth]{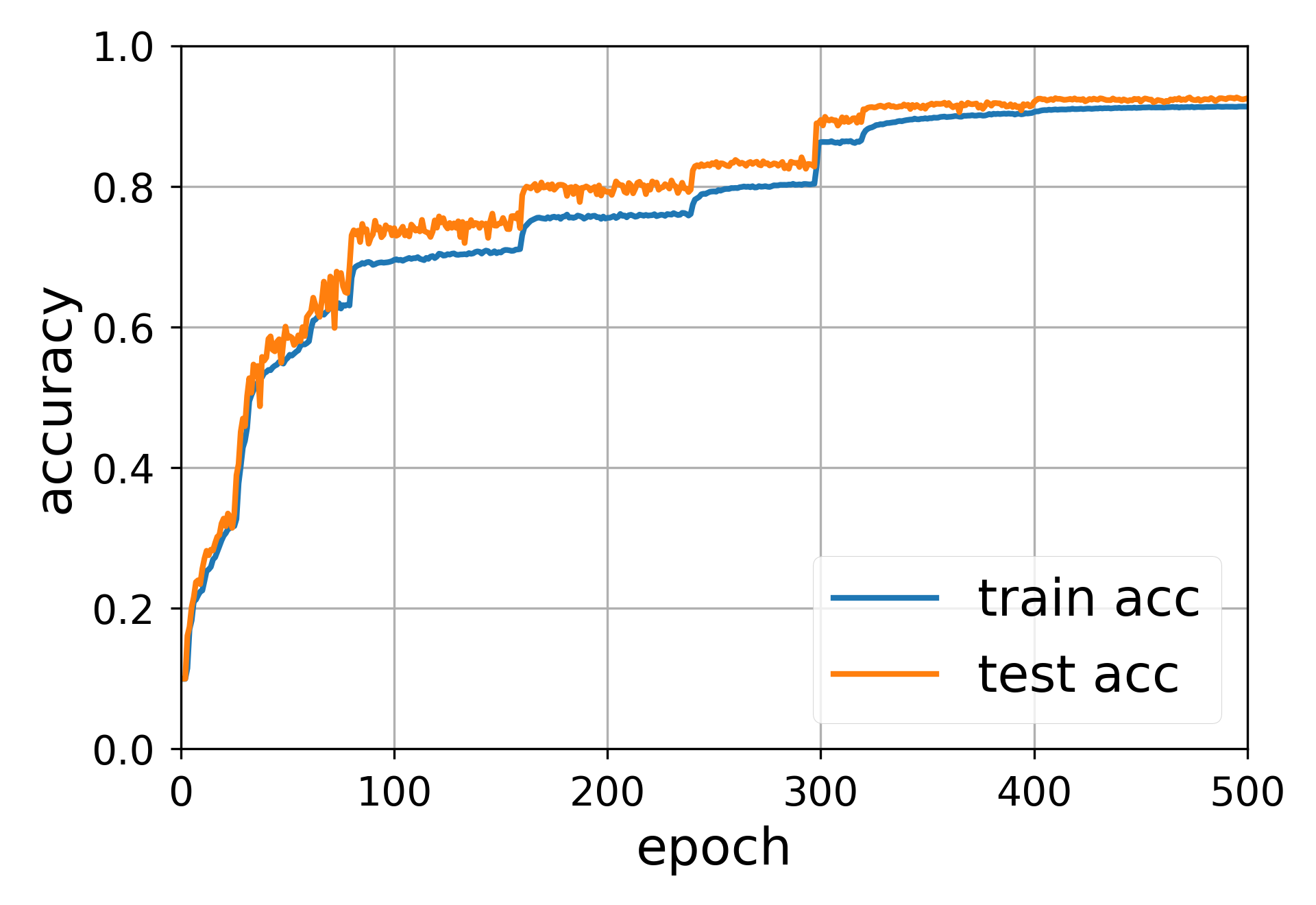}}\label{fig::resnet-50-l1}}\\
        \subfloat[\footnotesize ResNet-18, CE]{
            {\includegraphics[width=0.33\linewidth]{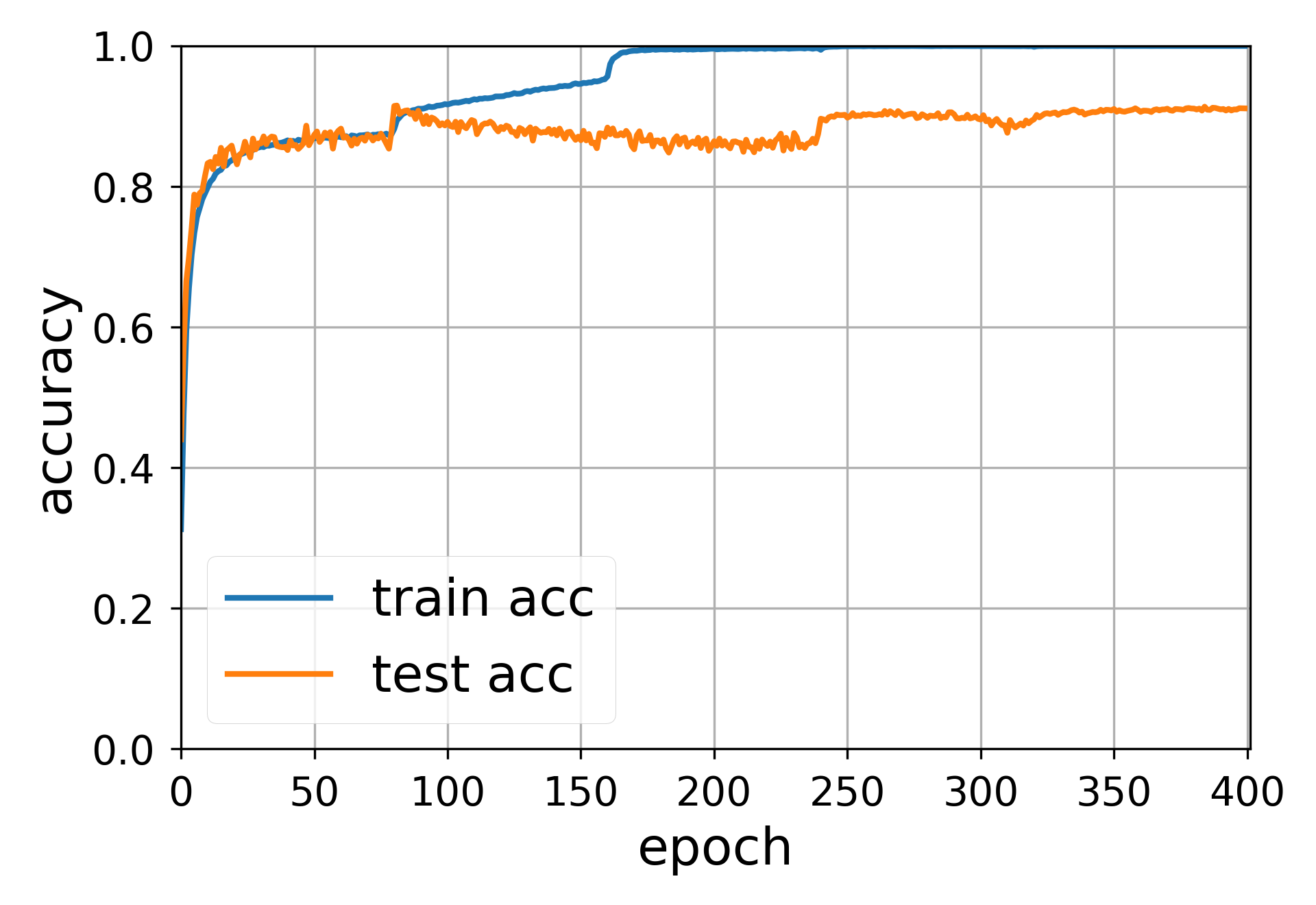}}\label{fig::resnet-18-CE}}
        \subfloat[\footnotesize ResNet-34, CE]{
            {\includegraphics[width=0.33\linewidth]{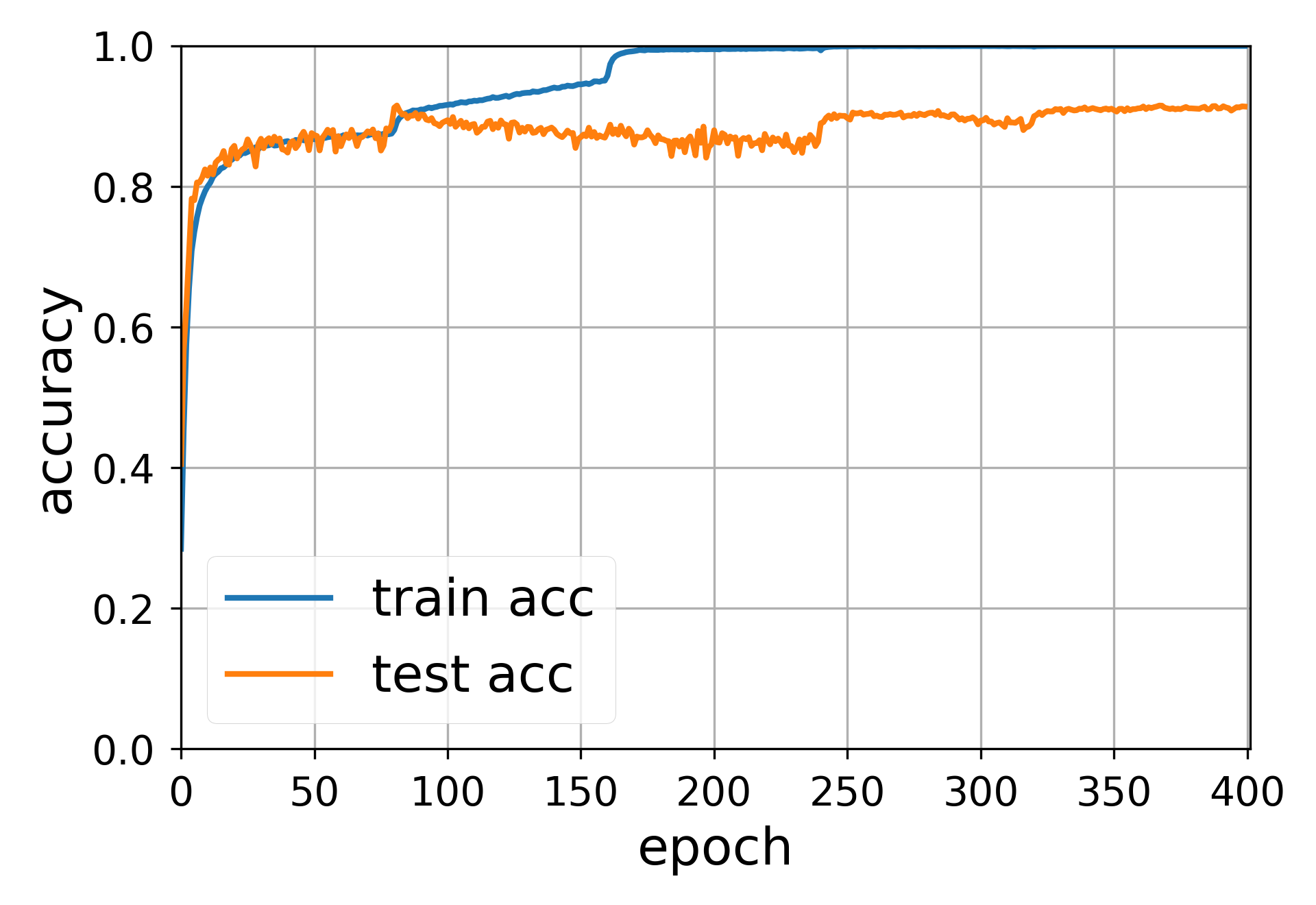}}\label{fig::resnet-34-CE}}
        \subfloat[\footnotesize ResNet-50, CE]{
            {\includegraphics[width=0.33\linewidth]{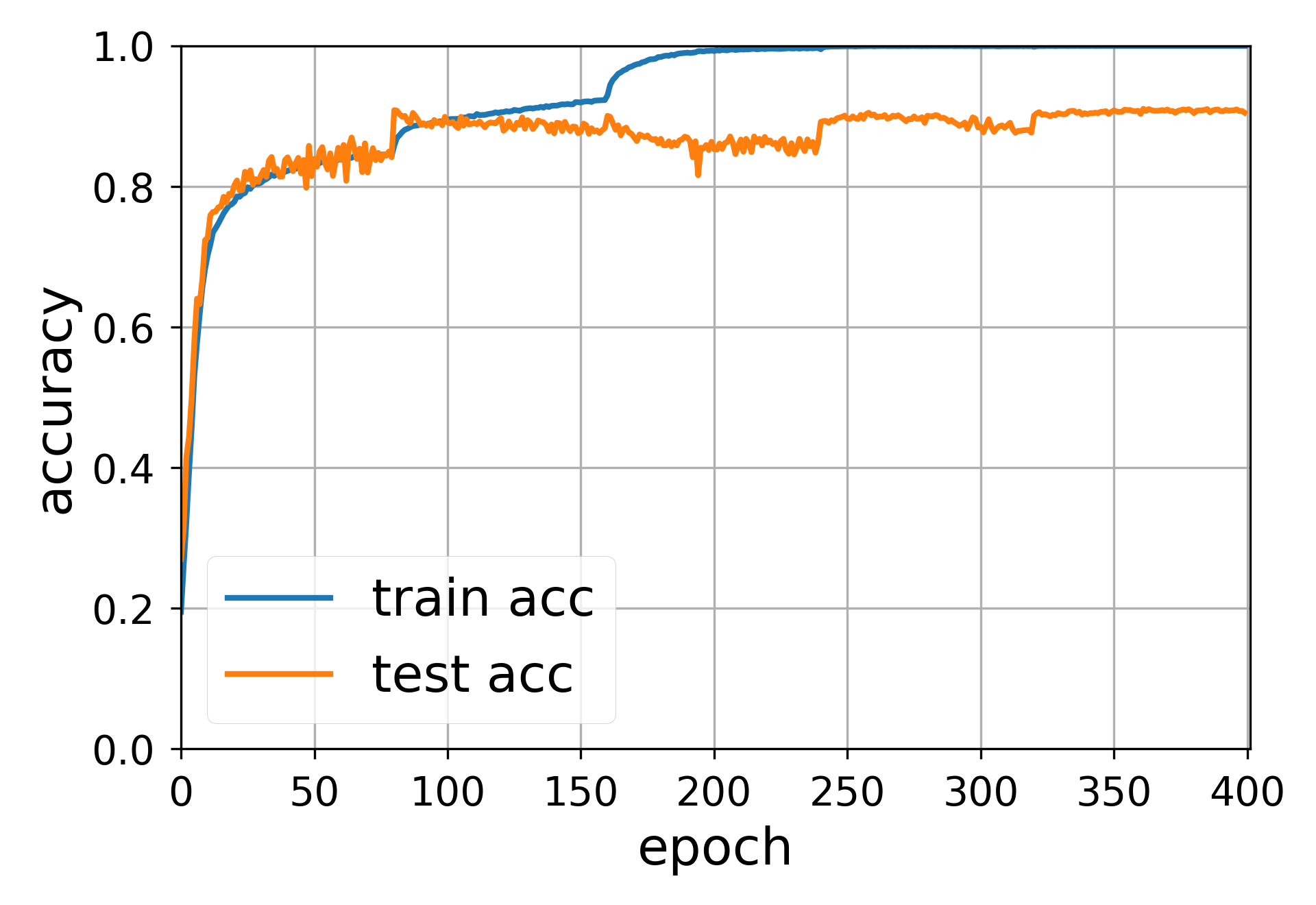}}\label{fig::resnet-50-CE}}
    \end{center}
    \caption{\footnotesize
        We apply ResNet-18, 34, 50 on noisy CIFAR-10 with both $\ell_1$-loss and cross-entropy loss (CE). For the training dataset, we randomly choose $10\%$ samples and replace their labels with uniform random labels. We use SGD with initial learning rate $\eta=0.1$, momentum $0.9$, batch size $B=32$. For every $80$ epochs, we decay the learning rate by a factor $0.33$. We use standard data augmentation. The initialization is set by default in PyTorch.
    }
    \label{fig::cifar-10}
\end{figure*}

\begin{figure*}[t]
    \begin{center}
        \subfloat[\footnotesize ResNet-18, $\ell_1$-loss]{
            {\includegraphics[width=0.33\linewidth]{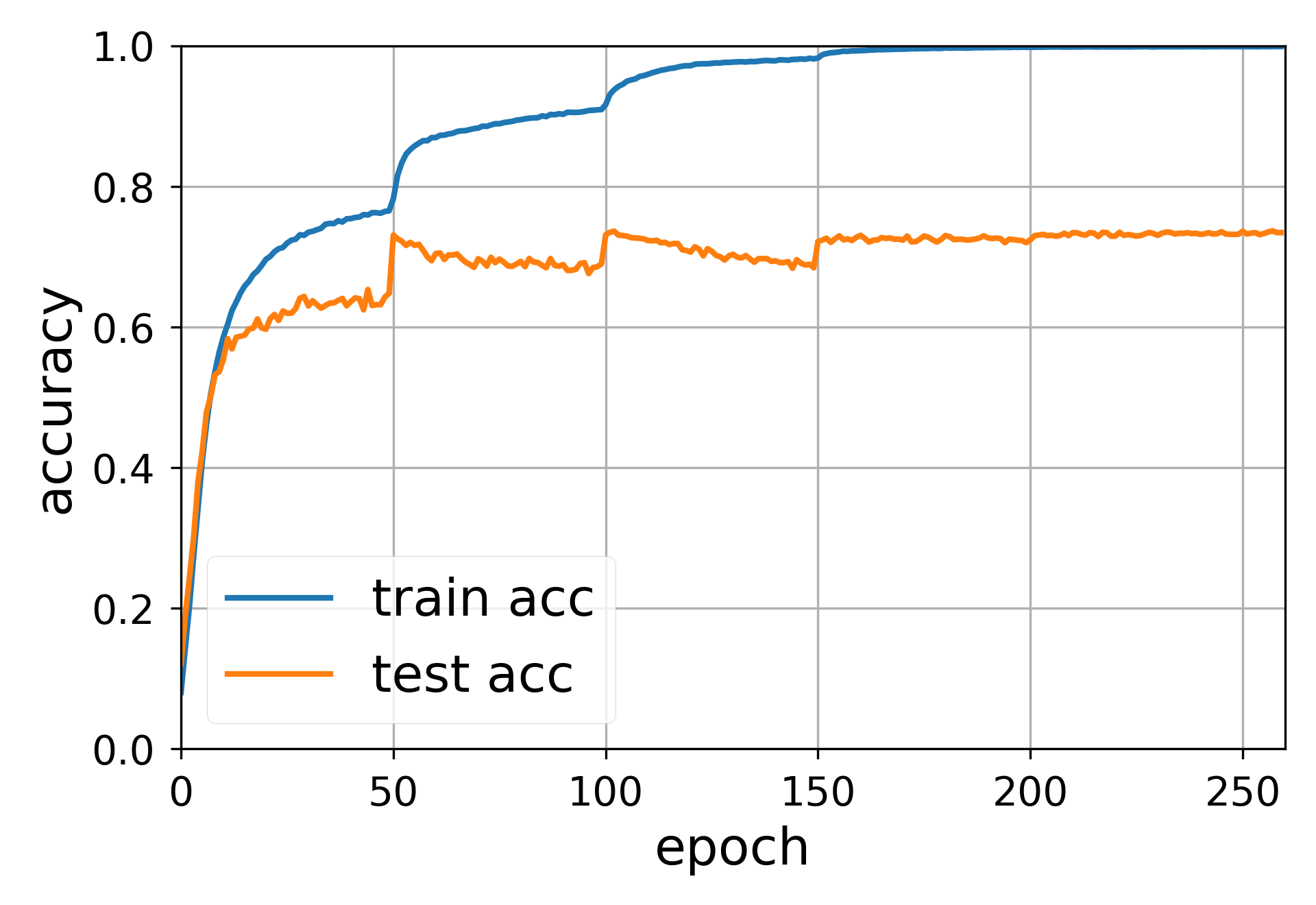}}\label{fig::resnet-18-l1-100}}
        \subfloat[\footnotesize ResNet-34, $\ell_1$-loss]{
            {\includegraphics[width=0.33\linewidth]{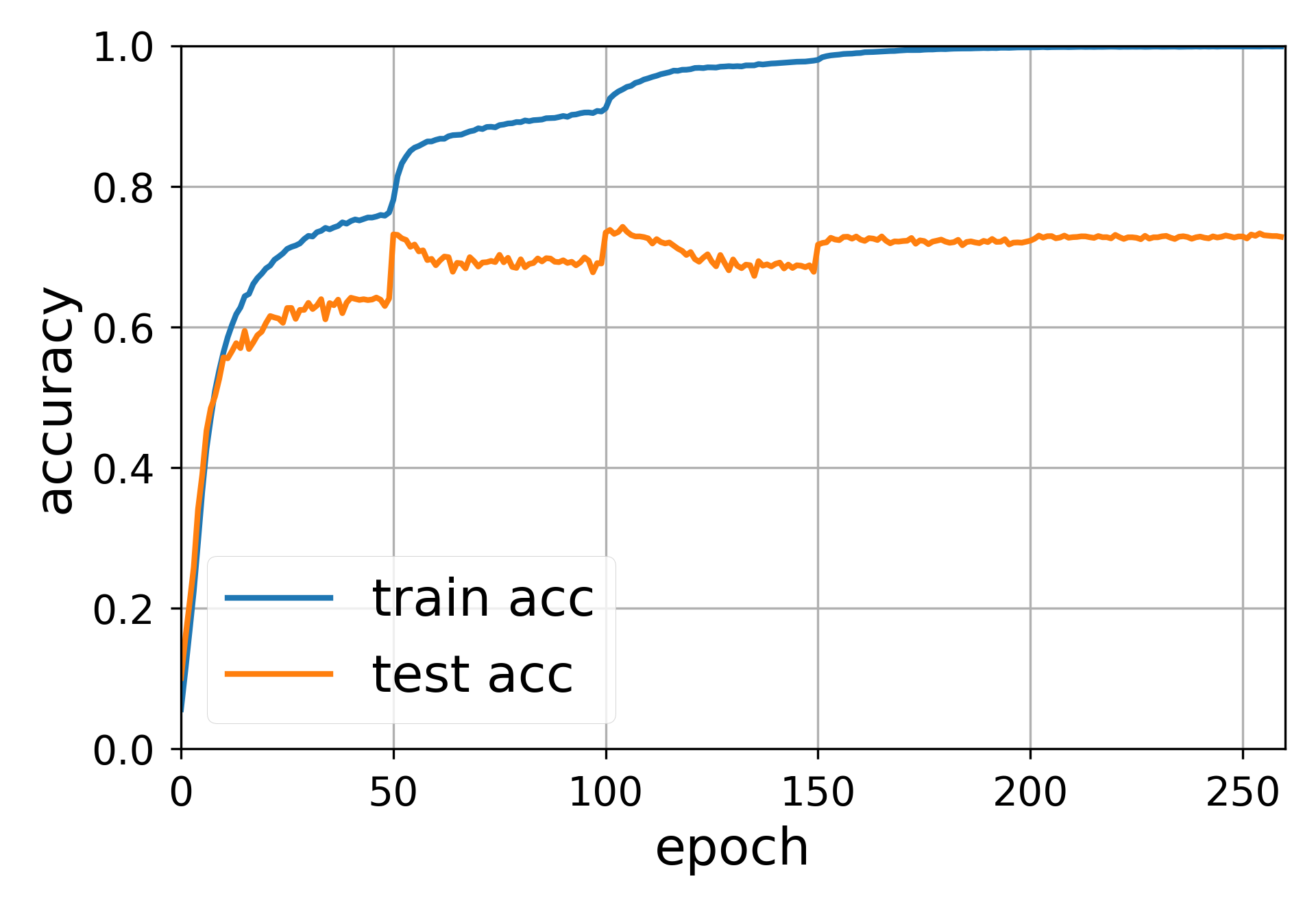}}\label{fig::resnet-34-l1-100}}
        \subfloat[\footnotesize ResNet-50, $\ell_1$-loss]{
            {\includegraphics[width=0.33\linewidth]{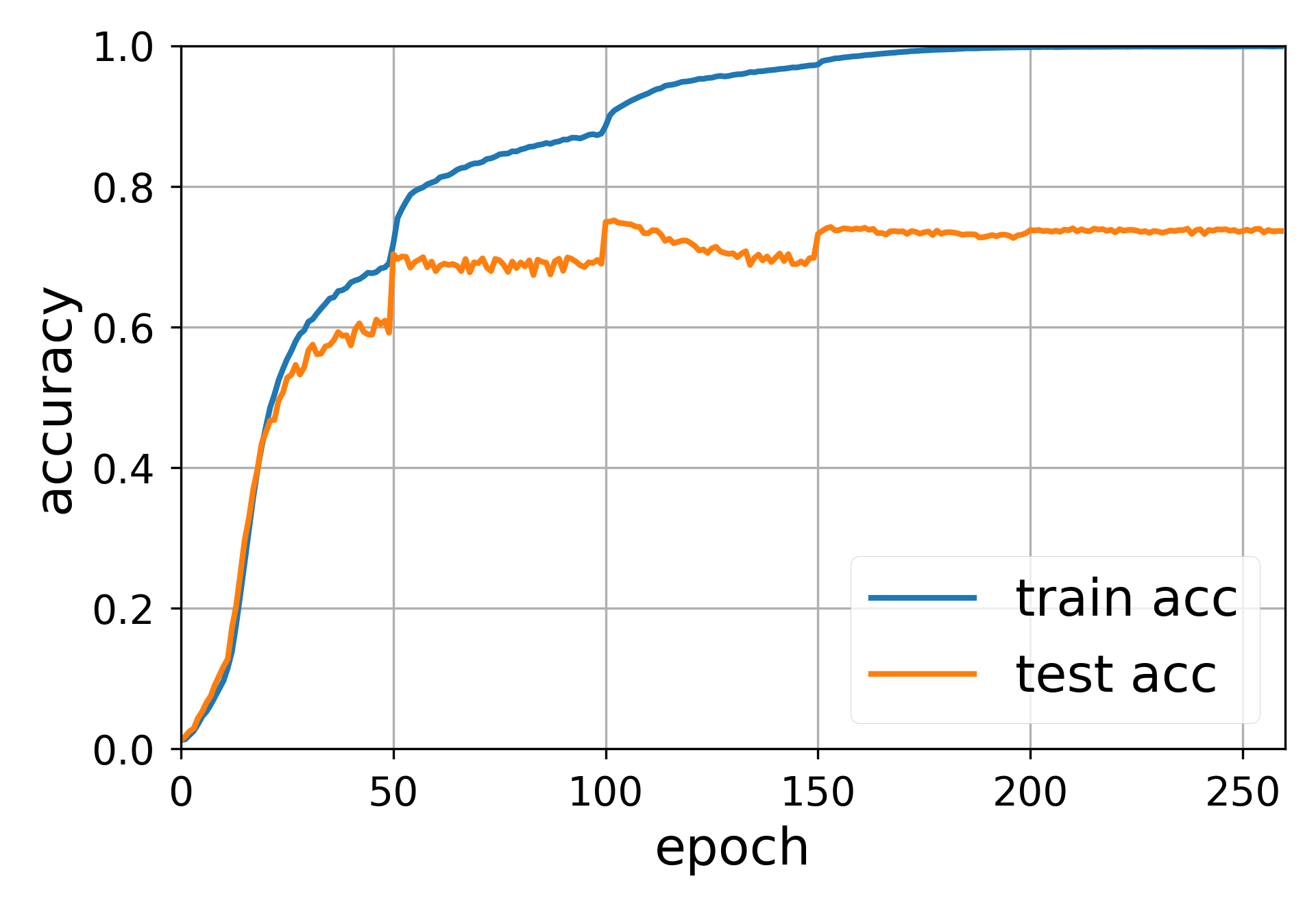}}\label{fig::resnet-50-l1-100}}\\
        \subfloat[\footnotesize ResNet-18, CE]{
            {\includegraphics[width=0.33\linewidth]{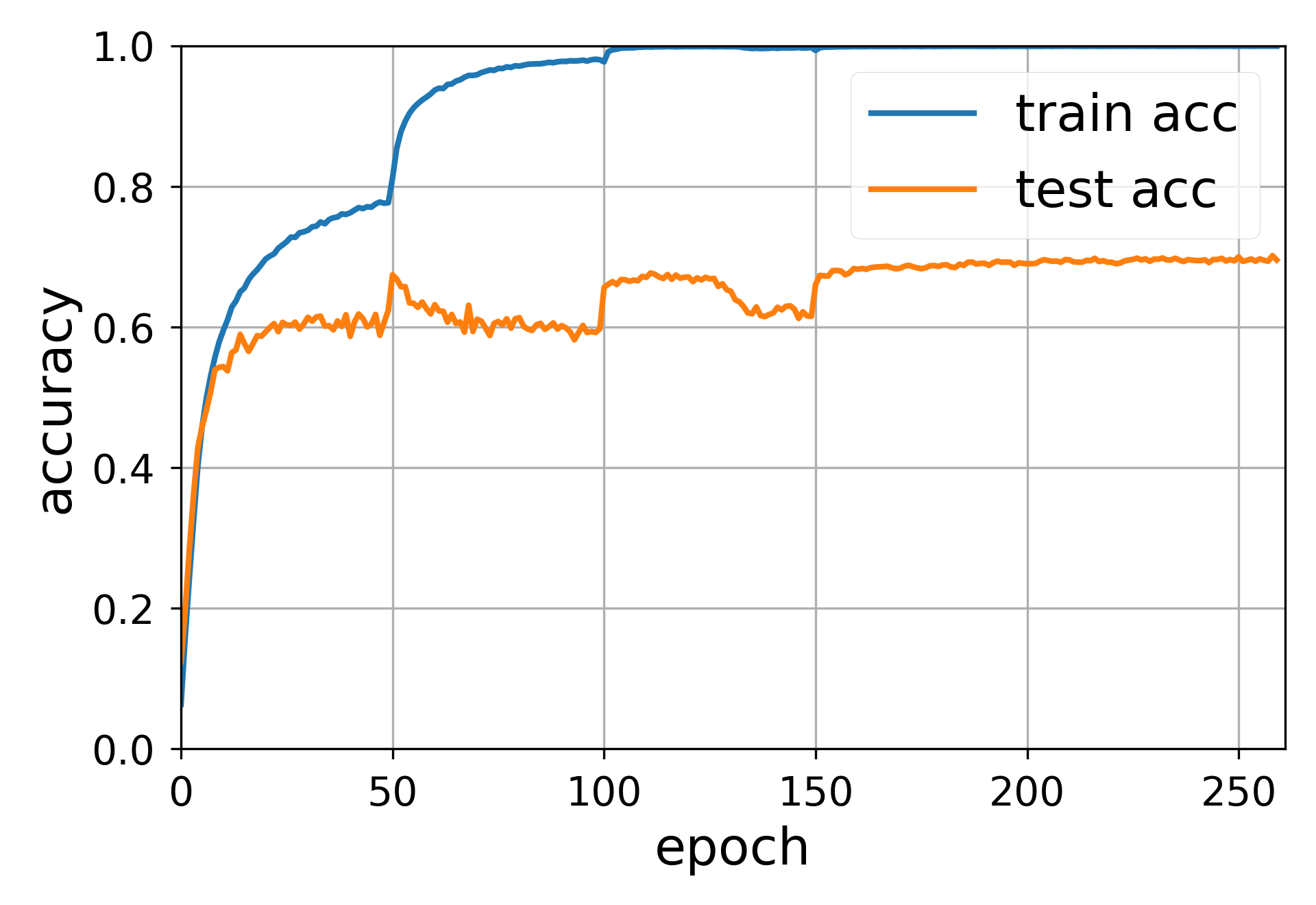}}\label{fig::resnet-18-CE-100}}
        \subfloat[\footnotesize ResNet-34, CE]{
            {\includegraphics[width=0.33\linewidth]{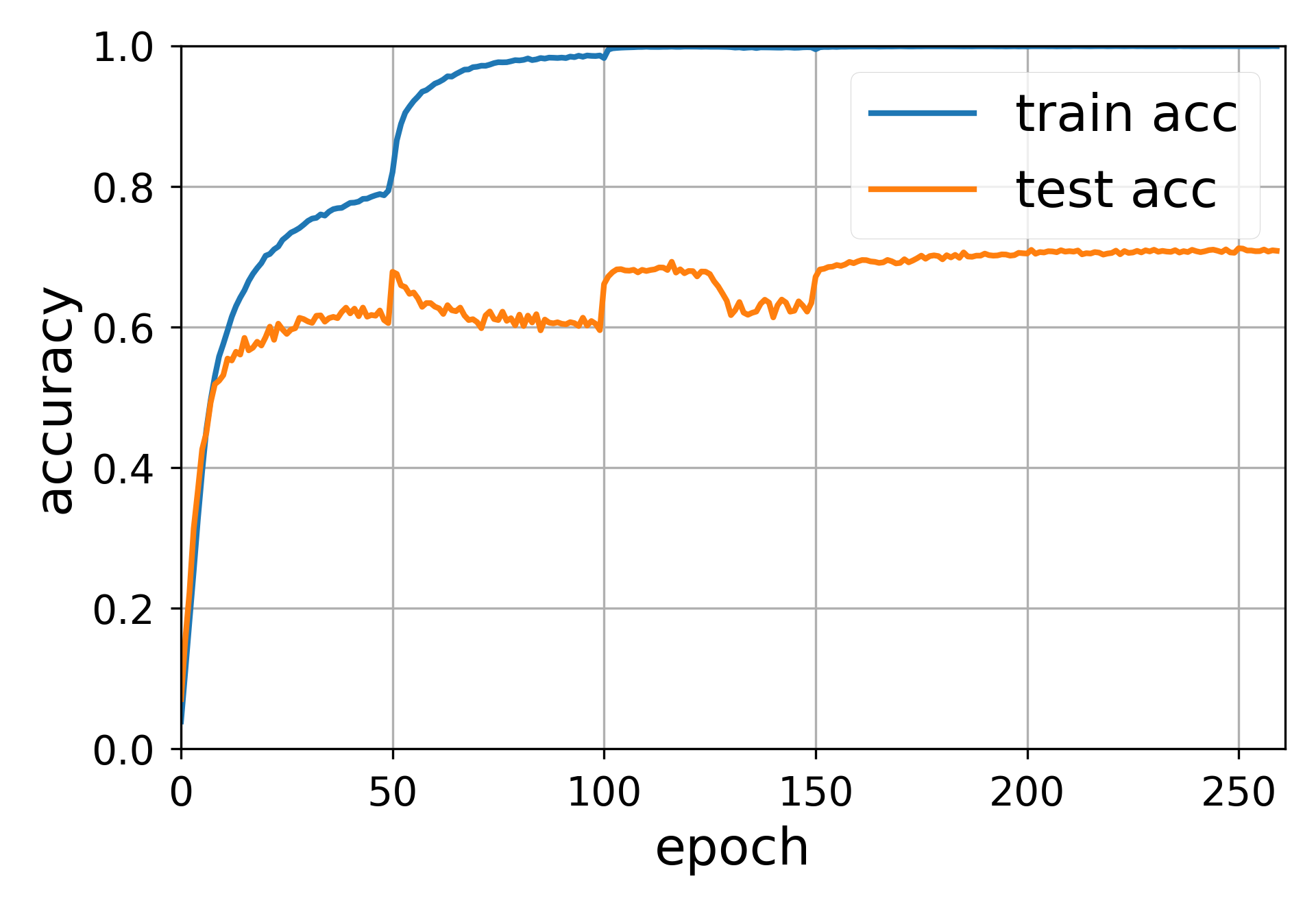}}\label{fig::resnet-34-CE-100}}
        \subfloat[\footnotesize ResNet-50, CE]{
            {\includegraphics[width=0.33\linewidth]{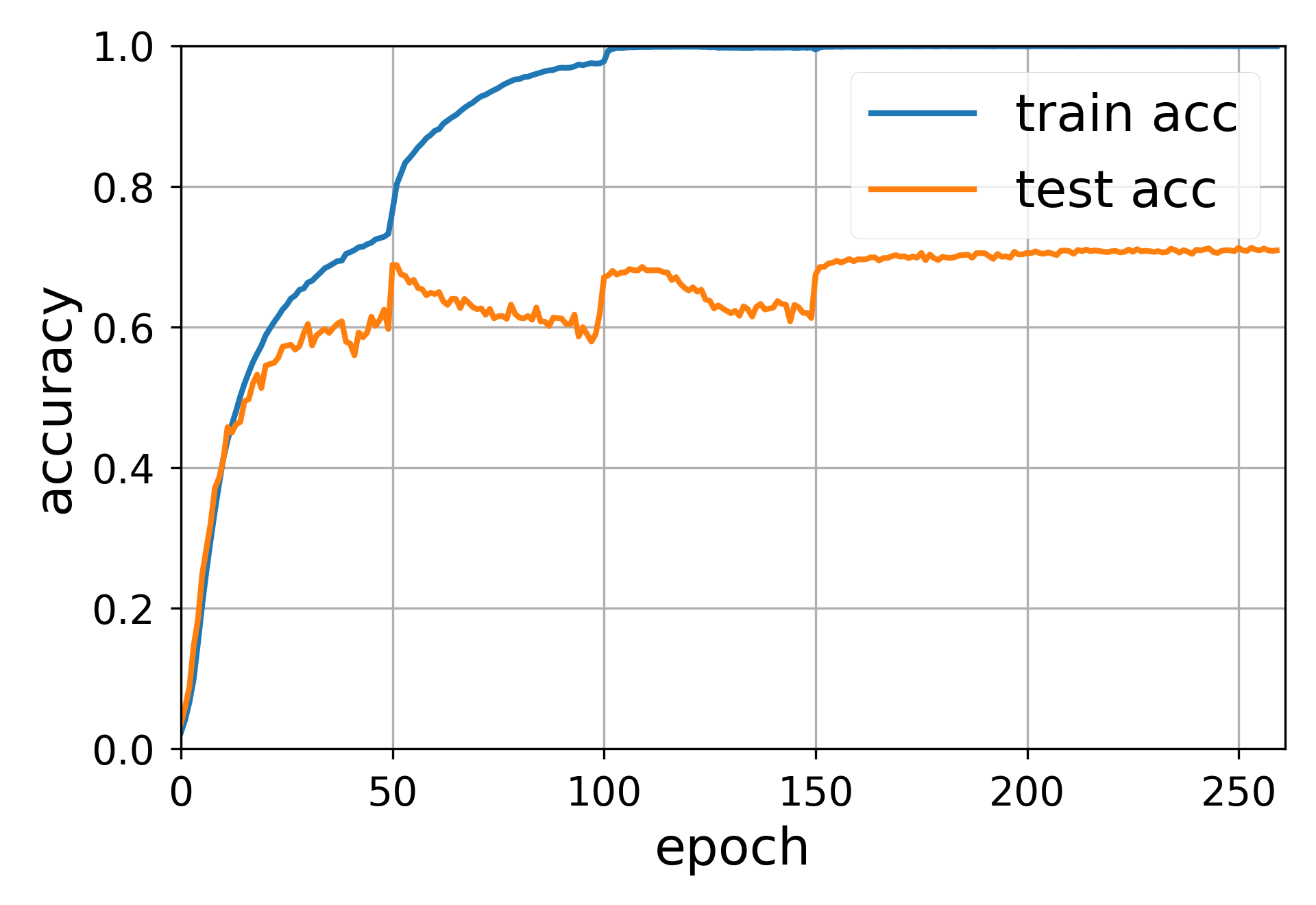}}\label{fig::resnet-50-CE-100}}
    \end{center}
    \caption{\footnotesize
        We apply ResNet-18, 34, 50 on noisy CIFAR-100 with both $\ell_1$-loss and cross-entropy loss (CE). The data generator and optimizer are the same as those in CIFAR-10 experiment. For each trial, we run $260$ epochs and decay the learning rate with factor $0.33$ for each $50$ epochs.}
    \label{fig::cifar-100}
\end{figure*}

\section{Proofs of Landscape Analysis}
\subsection{Proof of Theorem~\ref{prop::negative}}
\subsubsection*{Over-parameterized Regime}
\label{sec::proof-negative}
We first provide the proof for the 1-layer model. The values of $\cL\left(\theta^{\star}\right),\cL\left(\theta^{\star}+\Delta\theta\right)$ are provided as
\begin{equation}
    \cL\left(\theta^{\star}\right)=\frac{1}{m}\sum_{i\in \cS} |\err_i|, \quad \cL\left(\theta^{\star}+\Delta\theta\right)=\frac{1}{m}\sum_{i\in \bar \cS}\left|\inner{x_i}{\Delta\theta}\right|+\frac{1}{m}\sum_{i\in \cS}|\inner{x_i}{\Delta\theta}-\err_i|.
\end{equation}
Here $\cS\in [m]: \{1,2,\dots,m\}$ is the support of the noise vector $\err=[\err_1,\cdots,\err_m]^{\top}$, and $\bar \cS=[m]-\cS$. Hence, we have
\begin{equation}
    \cL\left(\theta^{\star}+\Delta\theta\right)-\cL\left(\theta^{\star}\right)=\frac{1}{m}\sum_{i\in \cS} \left(|\inner{x_i}{\Delta\theta}-\err_i|-|\err_i|\right)+\frac{1}{m}\sum_{i\in \bar \cS}\left|\inner{x_i}{\Delta\theta}\right|.
\end{equation}
Define the subspace
\begin{equation}
    U:=\left\{u\in \bR^d:\norm{u}_{\infty}\leq \gamma,\inner{u}{x_i}=0,\forall i\in \bar \cS, \text{ and }\inner{x_i}{u}\err_i>0, |\inner{x_i}{u}|\leq |\err_i|,\forall i\in \cS\right\}.
\end{equation}
Note that, since $m\leq 0.1d$, the number of constraints in $U$ is upper bounded by $2m\leq 0.2d$. Therefore, $\dim (U)\geq 0.8d$. On the other hand, since $U\subset \{u\in \bR^d:\norm{u}_{\infty}\leq \gamma\}$, we have
\begin{equation}
    \begin{aligned}
        \inf_{\norm{\Delta\theta}_{\infty}\leq\gamma}\{\cL\left(\theta^{\star}+\Delta\theta\right)-\cL\left(\theta^{\star}\right)\} & \leq \inf_{\Delta\theta\in U}\{\cL\left(\theta^{\star}+\Delta\theta\right)-\cL\left(\theta^{\star}\right)\}\leq \inf_{u\in U}-\frac{1}{m}\sum_{i\in \cS}|\inner{x_i}{u}|,
    \end{aligned}
\end{equation}
\begin{sloppypar}
    \noindent where the last inequality is due to our definition of the set $U$. Define the event $\cE:=\{\text{There are at least } p_0pm \text{ elements of } |\err| \text{ larger than } t_0\}$. Define $\cM_0 = \{i: |\err_i|\geq t_0\}$. Using the tail bound of binomial distribution, we have $\bP(E)\geq \frac{1}{4}$. Conditioned on the event $\cE$, we have $m_0=|\cM_0|\geq p_0pm$ and
\end{sloppypar}
\begin{equation}
    \inf_{u\in U}-\frac{1}{m}\sum_{i\in \cS}|\inner{x_i}{u}|\leq -\sup_{u\in U}\frac{1}{m}\sum_{i\in\cM_0}|\inner{x_i}{u}|.
\end{equation}
Hence, it suffices to provide a lower bound for $\sup_{u\in U}\frac{1}{m}\sum_{i\in\cM_0}|\inner{x_i}{u}|$. To this goal, we define $V:=\left\{u\in \bR^d:\norm{u}_{\infty}\leq \gamma,\inner{u}{x_i}=0,\forall i\in \bar \cS, \text{ and }\inner{x_i}{u}\err_i>0,\forall i\in \cS\right\}$. Hence, we have
\begin{equation}
    \begin{aligned}
        \sup_{u\in U}\frac{1}{m}\sum_{i\in\cM_0}|\inner{x_i}{u}| & \geq\sup_{u\in V}\frac{1}{m}\sum_{i\in\cM_0}|\inner{x_i}{u}|\wedge t_0                                                                                                                                                   \\
                                                                 & \geq \sup_{u\in V}\frac{1}{m}\sum_{i\in\cM_0}|\inner{x_i}{u}|\mathbbm{1}\left(\left|\inner{x_i}{u}\right|\leq t_0\right)                                                                                                 \\
                                                                 & \geq \underbrace{\sup_{u\in V}\frac{1}{m}\sum_{i\in\cM_0}|\inner{x_i}{u}|}_{(A)}-\underbrace{\sup_{u\in V}\frac{1}{m}\sum_{i\in\cM_0}|\inner{x_i}{u}|\mathbbm{1}\left(\left|\inner{x_i}{u}\right|\geq t_0\right)}_{(B)},
    \end{aligned}
\end{equation}
where the first inequality follows from the definition of $U,V$, and $\mathcal{M}_0$. With probability at least $1-\delta$ where $\log(1/\delta)\lesssim d$, we have
\begin{equation}
    \begin{aligned}
        (A) & \geq \sup_{u\in V}\frac{1}{m}\sum_{i\in\cM_0}\inner{x_i}{u}                                                                                                               \\
            & =\frac{m_0}{m}\sup_{u\in V}\frac{1}{m_0}\sum_{i\in\cM_0}\inner{x_i}{u}                                                                                                    \\
            & \stackrel{(a)}{\geq} \frac{m_0}{m}\left(\bE\left[\sup_{u\in V}\frac{1}{m_0}\sum_{i\in\cM_0}\inner{x_i}{u}\right]-\sqrt[]{d}\gamma\sqrt{\frac{\log(1/\delta)}{m_0}}\right) \\
            & \stackrel{(b)}{\gtrsim} \frac{m_0}{m}\sqrt[]{d}\gamma\left(\sqrt{\frac{d}{m_0}}-\sqrt{\frac{\log(1/\delta)}{m_0}}\right)                                                  \\
            & \gtrsim\sqrt{\frac{p_0p}{m}}d\gamma.
    \end{aligned}
\end{equation}
Here in (a), we used Theorem 3.1 in \cite{viens2007supremum}, and in (b) we used Sudakov's inequality. To provide a bound for (B), we first use Hölder's inequality to write
\begin{equation}
    \begin{aligned}
        (B) & \leq \sup_{u\in V}\frac{1}{m}\left(\sum_{i\in\cM_0}|\inner{x_i}{u}|\right)\sup_{i\in \cM_0}\mathbbm{1}\left(\left|\inner{x_i}{u}\right|\geq t_0\right) \\
            & \leq \sup_{u\in V}\frac{1}{m}\left(\sum_{i\in\cM_0}|\inner{x_i}{u}|\right)\sup_{i\in\cM_0}\mathbbm{1}\left(\norm{x_i}\gamma\geq t_0\right).
    \end{aligned}
\end{equation}
For the second part, we have
\begin{equation}
    \begin{aligned}
        \bP\left(\sup_{i\in\cM_0}\mathbbm{1}\left(\norm{x_i}\gamma\geq t_0\right)=1\right) & \leq \bP\left(\mathbbm{1}\left(\sup_{i\in\cM_0}\norm{x_i}\gamma\geq t_0\right)=1\right) \\
                                                                                           & =\bP\left(\sup_{i\in\cM_0}\norm{x_i}\gamma\geq t_0\right)                               \\
                                                                                           & \leq m_0\bP\left(\norm{x_i}\gamma\geq t_0\right)                                        \\
                                                                                           & \lesssim e^{\log(m_0)-C\left(\frac{t_0}{\gamma}\right)^2}                               \\
                                                                                           & \leq e^{-\Omega(d)}.
    \end{aligned}
\end{equation}
provided that $\gamma \lesssim \frac{t_0}{\sqrt{d}}\wedge 1$. Hence, we have
\begin{equation}
    (B)=0,\qquad \text{ with probability of } e^{-\Omega(d)}.
\end{equation}
Therefore, combining (A) and (B) with the choice of $\delta=\frac{1}{2}$, we have
\begin{equation}
    \inf_{\norm{\Delta\theta}_\infty\leq\gamma}\{\cL\left(\theta^{\star}+\Delta\theta\right)-\cL\left(\theta^{\star}\right)\}\lesssim -\sqrt{\frac{p_0p}{m}}d\gamma,
\end{equation}
with probability of at least $\frac{1}{16}$.

For general $N$-layer models, we consider the true solution $\mathbf{w}\in \cW$ with $w_1=\theta^{\star}$, and $w_2=\cdots w_N=\mathbf{1}$. Moreover, for $\Delta \mathbf{w}$ we choose $\Delta w_2=\cdots \Delta w_N=\mathbf{0}$. It is easy to verify that
\begin{equation}
    \cL(\mathbf{w})-\cL(\mathbf{w}+\Delta \mathbf{w})=\frac{1}{m}\sum_{i\in \cS} \left(|\inner{x_i}{\Delta w_1}-\err_i|-|\err_i|\right)+\frac{1}{m}\sum_{i\in \bar \cS}\left|\inner{x_i}{\Delta w_1}\right|.
\end{equation}
Therefore, an argument similar to 1-layer model can be used to write
\begin{equation}
    \inf_{\mathbf{w}\in\cW}\ \ \inf_{\mathbf{w}':\norm{\mathbf{w}-\mathbf{w}'}_\infty\leq \gamma}\left\{\cL(\mathbf{w})-\cL(\mathbf{w}')\right\}\leq \inf_{\norm{\Delta w_1}_\infty\leq \gamma}\left\{\cL(\mathbf{w})-\cL(\mathbf{w}+\Delta \mathbf{w})\right\}\lesssim  -\sqrt{\frac{p_0p}{m}}d\gamma,
\end{equation}
with probability of $\frac{1}{16}$.

\subsubsection*{Under-parameterized Regime}
Given $\mathbf{w}\in \cW$ and any $\Delta\mathbf{w}$, consider $\mathbf{w}'=\mathbf{w}+\Delta\mathbf{w}$ and define
\begin{equation}
    \Delta \theta = \sum_{i=1}^{N}(\theta^{\star})^{\frac{N-i}{N}}\odot \sum_{j_1,\cdots,j_i} \Delta w_{j_1}\odot\cdots\odot\Delta w_{j_i}.
\end{equation}
We have
\begin{equation}
    \begin{aligned}
        \cL(\mathbf{w})-\cL(\mathbf{w}+\Delta \mathbf{w}) & =\frac{1}{m}\sum_{i\in \cS} \left(|\inner{x_i}{\Delta \theta}-\err_i|-|\err_i|\right)+\frac{1}{m}\sum_{i\in \bar \cS}\left|\inner{x_i}{\Delta \theta}\right| \\
                                                          & \geq \frac{1}{m}\sum_{i\in \bar \cS}\left|\inner{x_i}{\Delta \theta}\right|-\frac{1}{m}\sum_{i\in \cS}\left|\inner{x_i}{\Delta \theta}\right|.
    \end{aligned}
\end{equation}
Hence, it suffices to show that, with probability of $1-e^{-\Omega(d)}$,
\begin{equation}
    \begin{aligned}
         & \inf_{\Delta \theta\in \R^d}\left\{\frac{1}{m}\sum_{i\in \bar \cS}\left|\inner{x_i}{\Delta \theta}\right|-\frac{1}{m}\sum_{i\in \cS}\left|\inner{x_i}{\Delta \theta}\right|\right\}\geq 0.
    \end{aligned}
\end{equation}
Note that the above inequality is invariant with respect to scaling. Hence, it suffices to show that it holds for arbitrary $\Delta \theta\in \bS^{d-1}$ where $\bS^{d-1}:=\{x\in \bR^d:\norm{x}=1\}$ is the standard sphere. Hence, it suffices to show
\begin{equation}
    \begin{aligned}
         & \inf_{\Delta \theta\in \bS^{d-1}}\left\{\frac{1}{m}\sum_{i\in \bar \cS}\left|\inner{x_i}{\Delta \theta}\right|-\frac{1}{m}\sum_{i\in \cS}\left|\inner{x_i}{\Delta \theta}\right|\right\} \\&\geq \inf_{\Delta \theta\in \bS^{d-1}}\left\{\frac{1}{m}\sum_{i\in \bar \cS}\left|\inner{x_i}{\Delta \theta}\right|\right\}-\sup_{\Delta \theta\in \bS^{d-1}}\left\{\frac{1}{m}\sum_{i\in \cS}\left|\inner{x_i}{\Delta \theta}\right|\right\}\geq 0.
    \end{aligned}
\end{equation}

For the first term, applying Lemma~\ref{lem::uniform-concentration-of-absolute-value}, we have that with probability at least $1-e^{-\Omega(d)}$
\begin{equation}
    \inf_{\Delta \theta\in \bS^{d-1}}\frac{1}{m}\sum_{i\in \bar \cS}\left|\inner{x_i}{\Delta \theta}\right|\geq \sqrt{\frac{2}{\pi}}(1-p)-\sqrt{\frac{(1-p)d}{m}}.
\end{equation}
Similarly, for the second part, with probability of at least $1-e^{-\Omega(d)}$, we have
\begin{equation}
    \sup_{\Delta \theta\in \bS^{d-1}}\frac{1}{m}\sum_{i\in \cS}\left|\inner{x_i}{\Delta \theta}\right|\leq \sqrt{\frac{2}{\pi}}p+\sqrt{\frac{pd}{m}}.
\end{equation}
Combining both parts, we have that with probability at least $1-e^{-\Omega(d)}$
\begin{equation}
    \begin{aligned}
        \inf_{\Delta \theta\in \bS^{d-1}}\left\{\frac{1}{m}\sum_{i\in \bar \cS}\left|\inner{x_i}{\Delta \theta}\right|\right\}-\sup_{\Delta \theta\in \bS^{d-1}}\left\{\frac{1}{m}\sum_{i\in \cS}\left|\inner{x_i}{\Delta \theta}\right|\right\} & \geq  \sqrt{\frac{2}{\pi}}\left(1-2p\right)-2\sqrt{\frac{d}{m}}\geq 0.
    \end{aligned}
\end{equation}
The last inequality follows from the fact that $m\gtrsim\frac{d}{(1-2p)^2}$. This completes the proof.$\hfill\square$

\subsection{Proof of Theorem~\ref{prop::positive}}
\label{sec::proof-positive}

Given any $\Delta\mathbf{w}=[\Delta w_1,\cdots,\Delta w_N]^{\top}$, the following equality holds for any point $\mathbf{w}=\mathbf{w}^{\star}+\Delta \mathbf{w}$ where $\mathbf{w}^{\star}=[\sqrt[N]{\theta^{\star}},\cdots,\sqrt[N]{\theta^{\star}}]^{\top}$:
\begin{equation}
    \label{eq::33}
    \begin{aligned}
        (\sqrt[N]{\theta^{\star}}+\Delta w_1)\odot\cdots \odot (\sqrt[N]{\theta^{\star}}+\Delta w_N)-\theta^{\star} & =\underbrace{\sum_{i=1}^{N-1}(\theta^{\star})^{\frac{N-i}{N}}\odot \sum_{j_1,\cdots,j_i} \Delta w_{j_1}\odot\cdots\odot\Delta w_{j_i}}_{:=\Delta \theta_1} \\&\quad+\underbrace{\Delta w_1\odot\cdots\odot\Delta w_N}_{:=\Delta \theta_2}.
    \end{aligned}
\end{equation}
Hence, we have
\begin{equation}
    \cL(\mathbf{w}^{\star})-\cL\left(\mathbf{w}^{\star}+\Delta\mathbf{w}\right)=\frac{1}{m}\sum_{i\in \cS} \left(|\inner{x_i}{\Delta \theta_1+\Delta \theta_2}-\err_i|-|\err_i|\right)+\frac{1}{m}\sum_{i\in \bar \cS}\left|\inner{x_i}{\Delta \theta_1+\Delta \theta_2}\right|.\label{eq_theta12}
\end{equation}

For simplicity, we denote $\Theta_1$ as the set of $\Delta\theta_1$ defined in~\eqref{eq_theta12} with $\norm{\Delta \mathbf{w}}_{\infty}\leq \gamma$. Similarly, $\Theta_2$ is the set of $\Delta\theta_2$ defined in~\eqref{eq_theta12} with $\norm{\Delta \mathbf{w}}_{\infty}\leq \gamma$.
\paragraph{Lower bound.}
To prove the lower bound, one can write
\begin{equation}
    \begin{aligned}
        \cL(\mathbf{w}^{\star})-\cL\left(\mathbf{w}^{\star}+\Delta\mathbf{w}\right) & \geq \frac{1}{m}\sum_{i\in \bar \cS}\left|\inner{x_i}{\Delta \theta_1+\Delta \theta_2}\right|-\frac{1}{m}\sum_{i\in \cS}\left|\inner{x_i}{\Delta \theta_1+\Delta \theta_2}\right|                         \\
                                                                                    & \geq \frac{1}{m}\sum_{i\in \bar \cS}\left|\inner{x_i}{\Delta \theta_1}\right|-\frac{1}{m}\sum_{i\in \cS}\left|\inner{x_i}{\Delta \theta_1}\right|-\frac{1}{m}\sum_{i=1}^{m}|\inner{x_i}{\Delta \theta_2}|
    \end{aligned}
\end{equation}
Hence, we have
\begin{equation}
    \begin{aligned}
        \inf_{\norm{\Delta \mathbf{w}}_{\infty}\leq \gamma}\left\{\cL(\mathbf{w}^{\star})-\cL\left(\mathbf{w}^{\star}+\Delta\mathbf{w}\right)\right\}\geq & \inf_{\Delta\theta_1\in\Theta_1}\left\{\frac{1}{m}\sum_{i\in \bar \cS}\left|\inner{x_i}{\Delta \theta_1}\right|-\frac{1}{m}\sum_{i\in \cS}\left|\inner{x_i}{\Delta \theta_1}\right|\right\} \\
                                                                                                                                                          & -\sup_{\Delta\theta_2\in\Theta_2}\left\{\frac{1}{m}\sum_{i=1}^{m}|\inner{x_i}{\Delta \theta_2}|\right\}.
    \end{aligned}
\end{equation}
First, we bound the term $\inf_{\Delta\theta_1\in\Theta_1}\left\{\frac{1}{m}\sum_{i\in \bar \cS}\left|\inner{x_i}{\Delta \theta_1}\right|-\frac{1}{m}\sum_{i\in \cS}\left|\inner{x_i}{\Delta \theta_1}\right|\right\}$. It is easy to see that the vector $\Delta \theta_1$ is $k$-sparse and has the same sparsity pattern as $\theta^\star$. Therefore, according to Lemma~\ref{lem::uniform-concentration-of-absolute-value}, there exist universal constants $C,c>0$ such that the following hold
\begin{equation}\label{eq_theta1}
    \bP\left(\sup_{\Delta \theta_1\in \Theta_1}\left|\frac{1}{m'\norm{\Delta \theta_1}}\sum_{i\in\bar\cS}|\inner{x_i}{\Delta \theta_1}|-\sqrt[]{\frac{2}{\pi}}\right|\geq C\sqrt[]{\frac{k}{m'}}+\delta\right)\leq e^{-cm'\delta^2},
\end{equation}
and
\begin{equation}\label{eq_theta2}
    \bP\left(\sup_{\Delta \theta_1\in \Theta_1}\left|\frac{1}{m''\norm{\Delta \theta_1}}\sum_{i\in \cS}|\inner{x_i}{\Delta \theta_1}|-\sqrt[]{\frac{2}{\pi}}\right|\geq C\sqrt[]{\frac{k}{m''}}+\delta\right)\leq e^{-cm''\delta^2}.
\end{equation}
Here $m'=(1-p)m$, and $m''=pm$. The inequality~\eqref{eq_theta1} implies that
\begin{equation}
    \begin{aligned}
        \bP\left(\frac{1}{m}\sum_{i\in \bar \cS}\left|\inner{x_i}{\Delta \theta_1}\right|\geq \norm{\Delta \theta_1}\left(\sqrt{\frac{2}{\pi}}(1-p)-C\sqrt{\frac{(1-p)k}{m}}-\delta_1\right),\forall \Delta \theta_1\in\Theta_1\right)\geq 1-e^{-\frac{cm\delta_1^2}{1-p}}.
    \end{aligned}
\end{equation}
Similarly, the inequality~\eqref{eq_theta2} leads to
\begin{equation}
    \bP\left(\frac{1}{m}\sum_{i\in \cS}\left|\inner{x_i}{\Delta \theta_1}\right|\leq \norm{\Delta \theta_1}\left(\sqrt{\frac{2}{\pi}}p+C\sqrt{\frac{pk}{m}}+\delta_2\right),\forall \Delta \theta_1\in\Theta_1\right)\geq 1-e^{-\frac{cm\delta_2^2}{p}}.
\end{equation}
Upon setting $\delta_1=\sqrt[]{\frac{(1-p)k}{m}}$ and $\delta_2=\sqrt[]{\frac{pk}{m}}$, with probability of $1-e^{-\Omega(k)}$, for all $\Delta \theta_1\in\Theta_1$, we have
\begin{equation}
    \frac{1}{m}\sum_{i\in \bar \cS}\left|\inner{x_i}{\Delta \theta_1}\right|-\frac{1}{m}\sum_{i\in \cS}\left|\inner{x_i}{\Delta \theta_1}\right|\geq \sqrt[]{\frac{2}{\pi}}(1-2p)-C'\sqrt[]{\frac{k}{m}}\geq 0.
\end{equation}
In the last inequality we used the assumption $m\gtrsim \frac{k}{(1-2p)^2}$. The above argument implies
\begin{equation}
    \inf_{\Delta \theta_1\in \Theta_1}\left\{\frac{1}{m}\sum_{i\in \bar \cS}\left|\inner{x_i}{\Delta \theta_1}\right|-\frac{1}{m}\sum_{i\in \cS}\left|\inner{x_i}{\Delta \theta_1}\right|\right\}\geq 0,
\end{equation}
\begin{sloppypar}
    \noindent with probability of at least $1-e^{-\Omega(k)}$.
    Now we turn to bound the second part $\sup_{\Delta\theta_2\in\Theta_2}\left\{\frac{1}{m}\sum_{i=1}^{m}|\inner{x_i}{\Delta \theta_2}|\right\}$. To this goal, we first apply Lemma~\ref{lem::uniform-concentration-of-absolute-value}, which leads to
\end{sloppypar}
\begin{equation}
    \bP\left(\sup_{\Delta\theta_2\in\Theta_2}\left|\frac{1}{m\norm{\Delta\theta_2}}\sum_{i=1}^{m}|\inner{x_i}{\Delta \theta_2}|-\sqrt[]{\frac{2}{\pi}}\right|\geq C\sqrt[]{\frac{d}{m}}+\delta\right)\leq e^{-cm\delta^2}.
\end{equation}
Therefore, upon setting $\delta=\sqrt[]{\frac{d}{m}}$, with probability of $1-e^{-\Omega(d)}$, we have
\begin{equation}
    \begin{aligned}
        \sup_{\Delta \theta_2\in \Theta_2}\left\{\frac{1}{m}\sum_{i=1}^{m}|\inner{x_i}{\Delta \theta_2}|\right\} & \leq \sup_{\Delta \theta_2\in \Theta_2}\norm{\Delta\theta_2}\left(\sqrt{\frac{2}{\pi}}+(C+1)\sqrt{\frac{d}{m}}\right)      \\
                                                                                                                 & \lesssim \sqrt{\frac{d}{m}}\sup_{\norm{\Delta \mathbf{w}}_{\infty}\leq \gamma}\norm{\Delta w_1\odot\cdots\odot\Delta w_N}  \\
                                                                                                                 & =\sqrt{\frac{d}{m}}\sup_{|\Delta w_i[j]|\leq \gamma} \sqrt[]{\sum_{j\in [d]} \left(\prod_{i\in [N]}\Delta w_i[j]\right)^2} \\
                                                                                                                 & =\frac{d}{\sqrt[]{m}}\gamma^N.
    \end{aligned}
\end{equation}
Here $\Delta w_i[j]$ is the $j$-th element of $\Delta w_i$. Therefore, we conclude that
\begin{equation}
    \inf_{\norm{\Delta \mathbf{w}}_\infty\leq \gamma}\left\{\cL(\mathbf{w}^{\star})-\cL\left(\mathbf{w}^{\star}+\Delta\mathbf{w}\right)\right\}\gtrsim -\frac{d}{\sqrt[]{m}}\gamma^N,
\end{equation}
with probability of $1-e^{-\Omega(k)}$, thereby completing the proof of the lower bound.
\paragraph{Upper bound.}
To this goal, we first define a restricted set of perturbation vectors
\begin{equation}
    \cV:=\left\{\mathbf{v}:\norm{\mathbf{v}}_{\infty}\leq \gamma, v_{i}[j]=0, \forall i\in [N], j\in \supp(\theta^{\star})\right\},
\end{equation}
where $\supp(\theta^{\star})$ is the support of $\theta^{\star}$. Based on this definition, we have
\begin{equation}
    \begin{aligned}
        \inf_{\norm{\Delta \mathbf{w}}_{\infty}\leq \gamma}\left\{\cL(\mathbf{w}^{\star})-\cL\left(\mathbf{w}^{\star}+\Delta\mathbf{w}\right)\right\} & \leq \inf_{\Delta \mathbf{w}\in \cV}\left\{\cL(\mathbf{w}^{\star})-\cL\left(\mathbf{w}^{\star}+\Delta\mathbf{w}\right)\right\}                                                                                  \\
                                                                                                                                                      & =\inf_{\Delta \mathbf{w}\in \cV}\left\{\frac{1}{m}\sum_{i\in \cS} \left(|\inner{x_i}{\Delta \theta_2}-\err_i|-|\err_i|\right)+\frac{1}{m}\sum_{i\in \bar \cS}\left|\inner{x_i}{\Delta \theta_2}\right|\right\}.
    \end{aligned}
\end{equation}
where $\Delta\theta_2=\Delta w_1\odot\cdots\odot\Delta w_N$ is the same as before. Note that for any $\norm{\Delta\mathbf{w}}_{\infty}\leq\gamma$, we have $\norm{\Delta \theta_2}\leq \sqrt[]{d}\gamma^N$. Moreover, this bound is attainable when $\Delta w_i\equiv [\pm\gamma,\cdots,\pm\gamma]^{\top}$. Hence, we have
\begin{equation}
    \begin{aligned}
         & \inf_{\Delta \mathbf{w}\in \cV}\left\{\frac{1}{m}\sum_{i\in \cS} \left(|\inner{x_i}{\Delta \theta_2}-\err_i|-|\err_i|\right)+\frac{1}{m}\sum_{i\in \bar \cS}\left|\inner{x_i}{\Delta \theta_2}\right|\right\}                            \\
         & \leq \inf_{\norm{\Delta \theta_2}\leq \sqrt[]{d}\gamma^N}\left\{\frac{1}{m}\sum_{i\in \cS} \left(|\inner{x_i}{\Delta \theta_2}-\err_i|-|\err_i|\right)+\frac{1}{m}\sum_{i\in \bar \cS}\left|\inner{x_i}{\Delta \theta_2}\right|\right\}.
    \end{aligned}
\end{equation}
An argument similar to the proof of Theorem~\ref{prop::negative} can be used to show that, with probability of at least $1/16$, we have
\begin{equation}
    \inf_{\norm{\Delta \theta_2}\leq \sqrt[]{d}\gamma^N}\left\{\frac{1}{m}\sum_{i\in S} \left(|\inner{x_i}{\Delta \theta_2}-\err_i|-|\err_i|\right)+\frac{1}{m}\sum_{i\in \bar S}\left|\inner{x_i}{\Delta \theta_2}\right|\right\}\lesssim -\sqrt{\frac{p_0p(d-k)}{m}}\sqrt[]{d}\gamma^N.
\end{equation}
Recalling that $k\ll d$, we have with probability of $1/16$
\begin{equation}
    \inf_{\norm{\Delta \mathbf{w}}_\infty\leq \gamma}\left\{\cL(\mathbf{w}^{\star})-\cL\left(\mathbf{w}^{\star}+\Delta\mathbf{w}\right)\right\}\lesssim -\sqrt{p_0p}\frac{d}{\sqrt[]{m}}\gamma^N.
\end{equation}
This completes the proof.$\hfill\square$


\section{Proofs of Convergence Analysis}
\subsection{Proof of Theorem~\ref{thm::2-layer}}
\label{sec::proof-2-layer}
For simplicity of notation, we denote $u = w_1$ and $v = w_2$. Moreover, without loss of generality, we assume that the elements of $\theta^\star$ are arranged in descending order, i.e., $\theta^{\star}_1\geq \cdots \geq\theta^{\star}_k>\theta^{\star}_{k+1}=\cdots =\theta^{\star}_d=0$, and the initial point satisfies $u_i,v_i=\Theta\left(\sqrt{\alpha}\right), \forall i\in [d]$. Moreover, for $v\in \R^d$, we define $v_{:i}=[v_1,\cdots, v_i]^{\top}$ and $v_{{i:}}=[v_i,\cdots,v_d]^{\top}$. For short, we denote $v_{-i}=v_{i+1:}$. Finally, we define $\kappa=\theta^{\star}_1/\theta^{\star}_k$ as the condition number. Moreover, without loss of generality, we assume that $\theta_1^{\star}\geq 1\geq \theta_k^{\star}$.

First, according to Proposition~\ref{prop:sign-RIP}, the sub-differential of $\cL(u,v)$ is uniformly concentrated around its population gradient. In particular, with probability at least $1-Ce^{-cm\delta^2}$, we have
\begin{equation}
    u_{i,t+1}=u_{i,t}+\eta \frac{\theta^{\star}_i-u_{i,t}v_{i,t}}{\norm{u_t\odot v_t-\theta^{\star}}}v_{i,t}+\eta\delta_iv_{i,t}, \text{ and } \left|\delta_i\right|\leq \delta, \forall i\in [d],
\end{equation}
\begin{equation}
    v_{i,t+1}=v_{i,t}+\eta \frac{\theta^{\star}_i-u_{i,t}v_{i,t}}{\norm{u_t\odot v_t-\theta^{\star}}}u_{i,t}+\eta\delta_iu_{i,t}, \text{ and } \left|\delta_i\right|\leq \delta, \forall i\in [d].
\end{equation}
Hence, we have
\begin{equation}
    u_{i,t+1}v_{i,t+1}=u_{i,t}v_{i,t}+\eta\left( \frac{\theta^{\star}_i-u_{i,t}v_{i,t}}{\norm{u_t\odot v_t-\theta^{\star}}}+\delta_i\right)\left(u_{i,t}^2+v_{i,t}^2\right)+\eta^2\left( \frac{\theta^{\star}_i-u_{i,t}v_{i,t}}{\norm{u_t\odot v_t-\theta^{\star}}}+\delta_i\right)^2u_{i,t}v_{i,t}.
    \label{eq::2-layer-update}
\end{equation}
Moreover, we have
\begin{equation}
    u_{i,t+1}^2+v_{i,t+1}^2=\left(u_{i,t}^2+v_{i,t}^2\right)\left(1+\eta^2\left(\frac{\left(\theta^{\star}_i-u_{i,t}v_{i,t}\right)}{\norm{u_t\odot v_t-\theta^{\star}}}+\delta_i\right)^2\right)+4\eta\left(\frac{\theta^{\star}_i-u_{i,t}v_{i,t}}{\norm{u_t\odot v_t-\theta^{\star}}}+\delta_i\right)u_{i,t}v_{i,t}.
    \label{eq::square}
\end{equation}

\subsubsection*{Signal Dynamics}
We first study the behavior of the signal term $S_t=u_{:k,t}v_{:k,t}$ for the first $k$ components of the model $u_{t}\odot v_{t}$. We divide the dynamics into $k+1$ stages. In the first $k$ stages, each component $u_{i,t}v_{i,t}$ converges to $\theta_i^{\star}$ sequentially. Once all the components are close to the ground truth, the distance between signal term and the ground truth $\norm{S_t-\theta^{\star}_{:k}}$ will further decrease to $\cO\left(\sqrt{d^2m}\alpha^2\vee \eta\theta^{\star}_1\right)$.
\paragraph{Stage 1:} In this stage, the first component $u_1v_1$ grows to $\theta_1-\delta \norm{\theta^{\star}}$ within $\Theta\left(\frac{\norm{\theta^{\star}}}{\eta\theta^{\star}_1}\log\left(\frac{1}{\alpha}\right)\right)$ iterations. At the initial point, we have $u_{1,0}v_{1,0}=\Theta(\alpha)$. For iteration $t+1$, according to \eqref{eq::2-layer-update}, we have
\begin{equation}
    \begin{aligned}
        u_{1,t+1}v_{1,t+1} & \geq u_{1,t}v_{1,t}+\eta\left( \frac{\theta^{\star}_1-u_{1,t}v_{1,t}}{\norm{u_t\odot v_t-\theta^{\star}}}+\delta_1\right)\left(u_{1,t}^2+v_{1,t}^2\right) \\
                           & \geq u_{1,t}v_{1,t}+2\eta\left( \frac{\theta^{\star}_1-u_{1,t}v_{1,t}}{\norm{u_t\odot v_t-\theta^{\star}}}+\delta_1\right)u_{1,t}v_{1,t}.
        \label{eq::85}
    \end{aligned}
\end{equation}
We further divide our analysis into two substages. In the first substage, we have $u_1v_1\leq \theta^{\star}_1/2$. Note that $\norm{u_t\odot v_t-\theta^{\star}}=\cO(\norm{\theta^{\star}})$ and $|\delta_1|\leq \delta \lesssim \frac{1}{\kappa}$. Hence, \eqref{eq::85} can be further simplified as
\begin{equation}
    u_{1,t+1}v_{1,t+1}\geq \left(1+\frac{1}{4}\frac{\eta\theta^{\star}_1}{\norm{\theta^{\star}}}\right)u_{1,t}v_{1,t}.
\end{equation}
Therefore, this stage ends within $\cO\left(\frac{\norm{\theta^{\star}}}{\eta\theta^{\star}_1}\log\left(\frac{1}{\alpha}\right)\right)$ iterations. In the second substage, we have $u_1v_1\geq \theta^{\star}_1/2$. Upon defining $x_t=\theta^{\star}_1-u_{1,t}v_{1,t}$, one can write
\begin{equation}
    \begin{aligned}
        x_{t+1} & \leq \left(1-2\eta\frac{u_{1,t}v_{1,t}}{\norm{u_t\odot v_t-\theta^{\star}}}\right)x_t-2\eta\delta_1u_t\odot v_t \\
                & \leq\left(1-\frac{\eta\theta^{\star}_1}{\norm{\theta^{\star}}}\right)x_t+\eta\delta \theta^{\star}_1.
    \end{aligned}
\end{equation}
Hence, within additional $\cO\left(\norm{\theta^{\star}}/(\eta\theta^{\star}_1)\right)$ iterations,
we have $u_{1,T_1}v_{1,T_1}=\theta_1^{\star}\pm \delta \norm{\theta^{\star}}$.
Overall, within $T_1=\cO\left(\frac{\norm{\theta^{\star}}}{\eta\theta^{\star}_1}\log\left(\frac{1}{\alpha}\right)\right)$ iterations, we have $u_{1,T_1}v_{1,T_1}=\theta_1^{\star}\pm \delta \norm{\theta^{\star}}$. Now, we turn to show the lower bound on $T_1$ by analyzing the trajectory of $u_{1,t}^2+v_{1,t}^2$. Due to \eqref{eq::square}, when $u_{1,t}^2+v_{1,t}^2\leq \frac{\theta^{\star}_1}{2}$, we have
\begin{equation}
    u_{1,t+1}^2+v_{1,t+1}^2\leq \left(u_{1,t}^2+v_{1,t}^2\right)\left(1+10\frac{\eta\theta^{\star}_1}{\norm{\theta^{\star}}}\right).
\end{equation}
Hence, at least $\Omega\left(\frac{\norm{\theta^{\star}}}{\eta\theta^{\star}_1}\log\left(\frac{1}{\alpha}\right)\right)$ iterations are needed for $u_{1,t}^2+v_{1,t}^2$ to be larger than $\frac{\theta^{\star}_1}{2}$. Since $u_{1,t}^2+v_{1,t}^2\geq u_{1,t}v_{1,t}$, we immediately obtain $T_1=\Omega\left(\frac{\norm{\theta^{\star}}}{\eta\theta^{\star}_1}\log\left(\frac{1}{\alpha}\right)\right)$.

\paragraph{Stage 2 to Stage \textit{k}:} In the next $k-1$ stages, each component $u_{i}v_i$ will converge to $\theta^{\star}_i\pm \delta \norm{\theta^{\star}}$ sequentially. To show this, we use an inductive argument. In each stage $i$, we assume that the first $i-1$ components have already converged close to $\theta^{\star}_{j}, \forall j\in [i-1]$. Hence, we have $\norm{u_t\odot v_t-\theta^{\star}}=\Theta\left(\norm{\theta^{\star}_{-(i-1)}}\right)$. Repeating the procedure in Stage 1, we can show that, at stage $i$, $T_i=\cO\left(\frac{\norm{\theta^{\star}_{-(i-1)}}}{\eta\theta^{\star}_i}\log\left(\frac{1}{\alpha}\right)\right)$ iterations are needed for $u_{i}v_i$ to converge to $\theta_i^{\star}\pm \delta \norm{\theta^{\star}}$. Overall, after $T=T_1+\cdots T_k=\cO\left(\frac{k^{3/2}}{\eta}\log\left(\frac{1}{\alpha}\right)\right)$ iterations, we have
\begin{equation}
    \norm{u_{:k,T}v_{:k,T}-\theta^{\star}_{:k,T}}\lesssim \sqrt{k}\delta \norm{\theta^{\star}}.
\end{equation}

\paragraph{Stage $k+1$:} In the final stage, the signal term will quickly decrease to $\cO\left(\sqrt{d^2m}\alpha^{1-\Theta(\delta)}\vee \eta\theta^{\star}_1\right)$ within $T_{k+1}=\cO\left(\frac{\delta\norm{\theta^{\star}}}{\eta\theta^{\star}_k}\right)$ iterations. To show this, we write
\begin{equation}
    \theta^{\star}_{:k}-S_{t+1}=\theta^{\star}_{:k}-S_t-\eta \frac{u_{:k,t}^2+v_{:k,t}^2}{\norm{u_t\odot v_t-\theta^{\star}}}\odot \left(\theta^{\star}_{:k}-S_t\right) -\eta \bm{\delta} \left(u_t^2+v_t^2\right)+\eta^2\left(\frac{\theta^{\star}_{:k}-S_t}{\norm{u_t\odot v_t-\theta^{\star}}}+\bm{\delta}\right)^2\odot S_t.
\end{equation}
Here we denote $\bm{\delta}=[\delta_1,\cdots,\delta_k]^{\top}$.
Note that $\norm{u_t\odot v_t-\theta^{\star}}\leq \norm{\theta^{\star}_{:k}-S_t}+\norm{E_t}$. Moreover, based on our assumption, we have $\norm{E_t}\lesssim \sqrt{d}\alpha^{1-\Theta\left(\delta\right)}$. Finally, the balanced property implies that $|u_{i,t}-v_{i,t}|=\cO\left(\alpha^{1-\Theta(\delta)}\right)$ (this will be proven later). Hence, we have
\begin{equation}
    \begin{aligned}
        \norm{\theta^{\star}_{:k}-S_{t+1}} & \leq \norm{\left(\theta^{\star}_{:k}-S_t\right)\odot\left(\bm{1}-\eta \frac{u_t^2+v_t^2}{\norm{u_t\odot v_t-\theta^{\star}}}+\eta^2\frac{\left(\theta^{\star}_{:k}-S_{t}\right)\odot S_t}{\norm{u_t\odot v_t-\theta^{\star}}^2}\right)}+4\eta\delta \norm{\theta^{\star}} \\
                                           & \leq \norm{\theta^{\star}_{:k}-S_t}\left(1-\frac{\eta}{2} \frac{\theta^{\star}_k}{\norm{\theta^{\star}_{:k}-S_t}+\norm{E_t}}\right)+4\eta\delta \norm{\theta^{\star}}                                                                                                     \\
                                           & \leq \norm{\theta^{\star}_{:k}-S_t}-0.25\eta\theta^{\star}_k+4\eta\delta\norm{\theta^{\star}}                                                                                                                                                                             \\
                                           & \leq \norm{\theta^{\star}_{:k}-S_t}-0.1\eta\theta^{\star}_k.
    \end{aligned}
\end{equation}
Here, we used the fact that $\delta \lesssim \frac{1}{\kappa}$.
On the other hand, we have
\begin{equation}
    \begin{aligned}
        \norm{S_{t+1}-S_t} & \leq \frac{\eta}{\norm{u_t\odot v_t-\theta^{\star}}}\norm{\left(u_t^2+v_t^2\right)\odot \left(\theta^{\star}_{:k}-S_t\right)}+4\eta\delta \norm{\theta^{\star}} \\
                           & \stackrel{(a)}{\leq} 2\eta \theta^{\star}_1 +4\eta\delta\norm{\theta^{\star}}                                                                                   \\
                           & \leq 3\eta\theta^{\star}_1,
    \end{aligned}
\end{equation}
where in (a) we used Lemma~\ref{lem::hadamard-product}. The above inequality indicates that the signal propagation in each step is upper bounded by $\cO\left(\eta\theta^{\star}_1\right)$. Hence, we conclude that within $T_{k+1}=\cO\left(\frac{\delta\norm{\theta^{\star}}}{\eta\theta^{\star}_k}\right)$ iterations, we have $\norm{\theta^{\star}_{:k}-S_t}\lesssim \sqrt{d^2m}\alpha^{1-\Theta\left(\delta\right)}\vee \eta\theta_1^{\star}$. Since $\delta\lesssim \frac{1}{\kappa}$, the total iteration complexity is upper bounded by $T'=T_1+\cdots T_{k+1}=\cO\left(\frac{k^{3/2}}{\eta}\log\left(\frac{1}{\alpha}\right)\right)$.

\subsubsection*{Residual Dynamics}
Now, we analyze the residual dynamics. Instead of analyzing the dynamics of $u_{i,t}v_{i,t}$, we analyze its surrogate $u_{i,t}^2+v_{i,t}^2$. Based on \eqref{eq::square}, we can naturally bound it as follows
\begin{equation}
    \begin{aligned}
        u_{i,t+1}^2+v_{i,t+1}^2\leq u_{i,t}^2+v_{i,t}^2+6\eta\delta u_{i,t}v_{i,t}\leq \left(1+3\eta\delta\right)\left(u_{i,t}^2+v_{i,t}^2\right).
    \end{aligned}
\end{equation}
Therefore, during the training process, we can bound the residual term as
\begin{equation}
    u_{i,t}^2+v_{i,t}^2\lesssim \alpha\left(1+\eta\delta\right)^{\cO\left(\frac{k^{3/2}}{\eta}\log\left(\frac{1}{\alpha}\right)\right)}\lesssim \alpha^{1-\cO\left(k^{3/2}\delta\right)}.
\end{equation}
Hence, we have
\begin{equation}
    \norm{E_t}\leq \left(\sum_{i=k+1}^{d}u_{i,t}^2+v_{i,t}^2\right)^{1/2}\lesssim \sqrt{d}\alpha^{1-\cO\left(k^{3/2}\delta\right)}.
\end{equation}

Therefore, we conclude that within $\bar T=\cO\left(\frac{k^{3/2}}{\eta}\log\left(\frac{1}{\alpha}\right)\right)$ iterations, we have
\begin{equation}
    \norm{u_{\bar T}\odot v_{\bar T}-\theta^{\star}}\leq \norm{\theta^{\star}_{:k}-S_{\bar T}}+\norm{E_{\bar T}}\lesssim \sqrt{d^2m}\alpha^{1-\cO\left(\delta\right)}\vee \eta\theta_1^{\star}.
\end{equation}
Therefore, with probability at least $1-e^{-C\log^2(m)\log(d)\log\left(\norm{\theta^{\star}}/\alpha\right)}$, we have
\begin{equation}
    \norm{u_{\bar T}\odot v_{\bar T}-\theta^{\star}}\lesssim \sqrt{d^2m}\alpha^{1-\tilde\Theta\left(\frac{k^2}{\sqrt{m(1-p)^2}}\right)}\vee \eta\theta_1^{\star}.
\end{equation}

\subsubsection*{Long Escape Time}
Based on the above analysis, it can be seen that, after $\bar{T}$ iterations, the residual term is the dominant term, which may cause the algorithm to diverge, as captured by Figure~\ref{fig::simulation}. We now show that the residual term will not diverge within $T'=\sqrt{\frac{m(1-p)^2}{k}}\bar T$. To this goal, recall that for $\forall k+1\leq i\leq d$, we have
\begin{equation}
    u_{i,t+1}^2+v_{i,t+1}^2\leq \left(1+3\eta\delta\right)\left(u_{i,t}^2+v_{i,t}^2\right).
\end{equation}
Hence, for $t\leq \frac{1}{6\eta\delta}\log\left(\frac{1}{\alpha}\right)$, we have
\begin{equation}
    u_{i,t}^2+v_{i,t}^2\lesssim \alpha \left(1+3\eta\delta\right)^{\frac{1}{6\eta\delta}\log\left(\frac{1}{\alpha}\right)}\leq \alpha e^{0.5 \log\left(\frac{1}{\alpha}\right)}=\sqrt{\alpha}.
\end{equation}
The proof is completed by noticing that $m=\tilde\Omega\left(\frac{k}{(1-p)^2\delta^2}\right)$.

\subsubsection*{Balanced Property}
To prove the balanced property, we directly calculate the dynamic of the difference $u_{i,t}-v_{i,t}$. One can write
\begin{equation}
    u_{i,t+1}-v_{i,t+1}=\left(u_{i,t}-v_{i,t}\right)\left(1-\eta\frac{\theta^{\star}_i-u_{i,t}v_{i,t}}{\norm{u_t\odot v_t-\theta^{\star}}}-\eta\delta_i\right).
\end{equation}
Since $\theta^{\star}_i-u_{i,t}v_{i,t}\geq 0$, we have
\begin{equation}
    \left|u_{i,t+1}-v_{i,t+1}\right|\leq \left|u_{i,t}-v_{i,t}\right|\left(1+\eta\delta\right),
\end{equation}
for $0\leq i\leq k$. We conclude that
\begin{equation}
    \left|u_{i,t}-v_{i,t}\right|\leq \sqrt{\alpha}\left(1+\eta\delta\right)^t\lesssim \alpha^{0.5-\Theta(k^{3/2}\delta)}.
\end{equation}
for $\forall t\lesssim\frac{k^{3/2}}{\eta}\log\left(\frac{1}{\alpha}\right)$. On the other hand, for $i\geq k$, we can write
\begin{equation}
    \left|u_{i,t}-v_{i,t}\right|\leq \sqrt{u_{i,t}^2+v_{i,t}^2}\lesssim \alpha^{0.5-\Theta(k^{3/2}\delta)}.
\end{equation}
The proof is completed by noticing that $m=\tilde\Omega\left(\frac{k}{(1-p)^2\delta^2}\right)$.
\subsubsection*{Convergence in Under-parameterized Regime}
In this section, we study the under-parameterized regime, where we assume that $m=\tilde\Omega(\frac{d}{(1-p)^2})$. The analysis of the signal term is the same as the over-parameterized regime and hence omitted for brevity. Here we only analyze the residual dynamic. One can write
\begin{equation}
    E_{t+1}=E_t\left(\bm{1}-\eta\frac{u_{k+1:,t}^2+v_{k+1:,t}^2}{\norm{u_t\odot v_t-\theta^{\star}}}\right)+\eta\delta_{k+1:}\left(u_{k+1:,t}^2+v_{k+1:,t}^2\right)+\eta^2\left( \frac{-E_t}{\norm{u_t\odot v_t-\theta^{\star}}}+\delta_{k+1:}\right)^2E_t.
\end{equation}
When the residual term becomes the dominant term, i.e., $\norm{\theta^{\star}_{:k}-S_t}\leq \norm{E_t}$, we have the simplified dynamic
\begin{equation}
    \begin{aligned}
        \norm{E_{t+1}} & \leq \norm{E_t\left(\bm{1}-0.5\eta\frac{u_{k+1:,t}^2+v_{k+1:,t}^2}{\norm{u_t\odot v_t-\theta^{\star}}}\right)+\eta\delta_{k+1:}\left(u_{k+1:,t}^2+v_{k+1:,t}^2\right)}+2\eta^2\left(\norm{E_t}^3+\delta^2\norm{E_t}\right) \\
                       & \stackrel{(a)}{\leq} \norm{E_t\left(\bm{1}-\frac{\eta E_t}{\norm{E_t}}\right)}+4\eta\delta \norm{E_t}                                                                                                                      \\
                       & \leq \left(1-\frac{\eta}{\sqrt[]{d}}\right)\norm{E_t}+4\eta\delta \norm{E_t}                                                                                                                                               \\
                       & \stackrel{(b)}{\leq} \left(1-0.5\frac{\eta}{\sqrt[]{d}}\right)\norm{E_t}.
    \end{aligned}
\end{equation}
Here in (a) we used the balanced property, which results in $u_{k+1:,t}^2+v_{k+1:,t}^2\asymp 2E_t$. Moreover, (b) is implied by the fact that $m\gtrsim \frac{d}{(1-p)^2}$, which in turn implies $\delta\lesssim \frac{1}{\sqrt[]{d}}$. Hence, we have
\begin{equation}
    \norm{u_{t+1}\odot v_{t+1}-\theta^{\star}}\leq \left(1-\Omega\left(\frac{\eta}{\sqrt[]{d}}\right)\right)\norm{u_{t}\odot v_{t}-\theta^{\star}}.
\end{equation}
Then, for $t\geq \bar T$, we have
\begin{equation}
    \norm{u_{t}\odot v_{t}-\theta^{\star}}\lesssim \sqrt{d^2m}\alpha^{1-\tilde\Theta\left(\frac{k^2}{\sqrt{m(1-p)^2}}\right)}\left(1-\Omega\left(\frac{\eta}{\sqrt[]{d}}\right)\right)^{t-\bar T}\vee\eta\theta^{\star}_1.
\end{equation}

\subsection{Proof of Theorem~\ref{thm::N-layer}}
\label{sec::proof-convergence-higher-layer}
The proof of $N$-layer model is similar to that of 2-layer model. First, we study the signal dynamics, showing that the first $k$ components $\prod w_{i,j}^{(t)}$ converge to $\theta^{\star}_j$ sequentially for $1\leq j\leq k$. We also prove that the residual term remains small along the optimization trajectory.
Based on the SubGM update rule, one can write
\begin{equation}
    w_{i}^{(t+1)}=w_{i}^{(t)}+\eta\frac{\theta^{\star}-\prod w_j^{(t)}}{\norm{\theta^{\star}-\prod w_j^{(t)}}}\prod_{j\neq i}w_j^{(t)}+\eta\bm{\delta}\prod_{j\neq i}w_j^{(t)}, \text{ and } \norm{\bm{\delta}}_{\infty}\leq \delta, \forall i\in [N].
\end{equation}

\subsubsection*{Signal Dynamics}
\paragraph{Stage 1:} In this stage, we show that $\prod w_{i,1}$ will converge to $\theta^{\star}$ within $T_1=\Theta\left(\frac{\norm{\theta^{\star}}}{N\eta\theta^{\star}_1}\alpha^{-\frac{N-2}{N}}\right)$ iterations. We first prove the upper bound. According to the update rule, we have
\begin{equation}
    \begin{aligned}
        \prod w_{i,1}^{(t+1)} & = \prod w_{i,1}^{(t)}+\sum_{i=1}^{N}\eta\left(\frac{\theta^{\star}_1-\prod w_{j,1}^{(t)}}{\norm{\theta^{\star}-\prod w_j^{(t)}}}+\delta_{1}\right)\left(\prod_{j\neq i}w_{j,1}^{(t)}\right)^2+\text{higher order terms of } \eta \\
                              & \geq \prod w_{i,1}^{(t)}+\sum_{i=1}^{N}\eta\left(\frac{\theta^{\star}_1-\prod w_{j,1}^{(t)}}{\norm{\theta^{\star}-\prod w_j^{(t)}}}+\delta_{1}\right)\left(\prod_{j\neq i}w_{j,1}^{(t)}\right)^2                                 \\
                              & \geq \prod w_{i,1}^{(t)}+N\eta \left(\frac{\theta^{\star}_1-\prod w_{j,1}^{(t)}}{\norm{\theta^{\star}-\prod w_j^{(t)}}}-\delta\right)\left(\prod w_{i,1}^{(t)}\right)^{\frac{2(N-1)}{N}}.
    \end{aligned}
\end{equation}
Here we use the fact that $\prod w_{i,1}^{(t)}\leq \theta^{\star}_1$ and $\eta\lesssim \frac{1}{N}\frac{1}{\kappa}^{\frac{N-2}{N}}$ so that we can drop the higher order terms. For brevity, we only show how we can drop the $2$-th order term of $\eta$. The proof of the higher order terms is similar. One can write
\begin{equation}
    \begin{aligned}
        \eta^2\sum_{i\neq j}\left(\prod_{k\neq i}w_{k,1}^{(t)}\prod_{k\neq j}w_{k,1}^{(t)}\prod_{k\neq i,j}w_{k,1}^{(t)}\right) & \stackrel{(a)}{\leq} \eta^2\left(\theta_1^{\star}\right)^{\frac{N-2}{N}}\sum_{i\neq j}\left(\prod_{k\neq i}w_{k,1}^{(t)}\prod_{k\neq j}w_{k,1}^{(t)}\right) \\
                                                                                                                                & \stackrel{(b)}{\leq}(N-1)\eta\left(\theta_1^{\star}\right)^{\frac{N-2}{N}}\eta\sum_{i=1}^{m}\left(\prod_{j\neq i}w_{j,1}^{(t)}\right)^2                     \\
                                                                                                                                & \stackrel{(c)}{\lesssim} \eta\sum_{i=1}^{m}\left(\prod_{j\neq i}w_{j,1}^{(t)}\right)^2.
    \end{aligned}
\end{equation}
Here in (a) we use the balanced property and the fact that $\prod w_{i,1}^{(t)}\leq \theta^{\star}_1$. Moreover, in (b), we use the rearrangement Inequality. Finally in (c) we use the assumption that $\eta\lesssim {N}^{-1}{\kappa}^{-\frac{N-2}{N}}$.
For simplicity, we denote $x_t=\prod w_{i,1}^{(t)}$. Note that $\norm{\theta^{\star}-\prod w_j^{(t)}}\leq \norm{\theta^{\star}}$. Hence, the dynamic can be simplified as
\begin{equation}
    x_{t+1}\geq x_t+\frac{N\eta}{\norm{\theta^{\star}}}\left(\theta_1^{\star}-\delta\norm{\theta^{\star}}-x_t\right)x_t^{\frac{2(N-1)}{N}}.
\end{equation}
We next show that $x_T\geq \theta^{\star}_1-2\delta\norm{\theta^{\star}}$ within $T_1=\Theta\left(\frac{\norm{\theta^{\star}}}{N\eta\theta^{\star}_1}\alpha^{-\frac{N-2}{N}}\right)$ iterations provided that $x_0=\Theta(\alpha)$. To this goal, we divide our analysis into two substages.
\begin{itemize}
    \item {$x_t\leq \frac{\theta^{\star}_1}{2}$:} In this substage, we assume that $x_t\leq \frac{\theta^{\star}_1}{2}$. Hence, we can further simplify the dynamic as
          \begin{equation}
              x_{t+1}\geq x_t+0.5N\eta\frac{\theta^{\star}_1}{\norm{\theta^{\star}}}x_t^{\frac{2(N-1)}{N}}.
          \end{equation}
          Without loss of generality, we assume that $x_0=\alpha$. Now we divide the interval $[\alpha,0.5\theta^{\star}_1]$ into a series of sub-intervals $\{\cI_k\}$, where $\cI_k=[2^{k}\alpha,2^{k+1}\alpha)$. In each $\cI_k$, the dynamic can be further simplified as
          \begin{equation}
              x_{t+1}\geq \left(1+0.5N\eta\frac{\theta^{\star}_1}{\norm{\theta^{\star}}}\left(2^k\alpha\right)^{\frac{N-2}{N}}\right)x_t.
          \end{equation}
          Therefore, the number of iterations that $x_t$ spends in each interval $\cI_k$ is $\cO\left(\frac{\norm{\theta^{\star}}}{N\eta\theta^{\star}_1}\left(2^k\alpha\right)^{-\frac{N-2}{N}}\right)$. Hence, the total number of iterations is upper bounded by $\cO\left(\sum_{k=0}^{\infty}\frac{\norm{\theta^{\star}}}{N\eta\theta^{\star}_1}\left(2^k\alpha\right)^{-\frac{N-2}{N}}\right)=\cO\left(\frac{\norm{\theta^{\star}}}{N\eta\theta^{\star}_1}\alpha^{-\frac{N-2}{N}}\right)$.
    \item $x_t\geq \frac{\theta^{\star}_1}{2}$: In this substage, we define $y_t=\theta_1^{\star}-\delta\norm{\theta^{\star}}-x_t$. Via a similar trick, we can show that within additional $\cO\left(\frac{\norm{\theta^{\star}}}{N\eta}(\theta_1^{\star})^{-\frac{2N-2}{N}}\right)$ iterations, we have $x_t\geq \theta_1^{\star}-2\delta\norm{\theta^{\star}}$. Overall, after $T_1=\Theta\left(\frac{\norm{\theta^{\star}}}{N\eta\theta^{\star}_1}\alpha^{-\frac{N-2}{N}}\right)$ iterations, we have $\theta^{\star}_1-2\delta\norm{\theta^{\star}}\leq \prod w_{i,1}^{(T_1)}\leq \theta^{\star}_1$.
\end{itemize}

\paragraph{Stages 2 to $k$:} Similarly, for component $\prod w_{j,i}^{(t)}$, it takes $\cO\left(\frac{\norm{\theta^{\star}_{-(i-1)}}}{N\eta\theta_i^{\star}}\alpha^{-\frac{N-2}{N}}\right)$ iterations to attain $\theta_i^{\star}-2\delta\norm{\theta^{\star}}$. Overall, Stages 2 to $k$ take $\Theta\left(\frac{k^{\frac{3}{2}}}{N\eta}\alpha^{-\frac{N-2}{N}}\right)$ iterations to terminate.

\paragraph{Stage $k+1$:} In this stage, we take $S_t=\prod w_{j,:k}^{(t)}$. Hence, we have
\begin{equation}
    \begin{aligned}
        \norm{\theta^{\star}_{:k}-S_{t+1}} & \leq \norm{\left(\theta^{\star}_{:k}-S_{t}\right)\left(\bm{1}-\eta\frac{\sum_{i=1}^{N}\left(\prod_{j\neq i}w_{j,1}^{(t)}\right)^2}{\norm{\theta^{\star}-\prod w_j^{(t)}}}\right)}+4N\eta\delta \sqrt{k}(\theta_1^{\star})^{\frac{2(N-1)}{N}} \\
                                           & \leq \norm{\theta^{\star}_{:k}-S_{t}}\left(1-N\eta\frac{(\theta_k^{\star})^{\frac{2(N-1)}{N}}}{\norm{\theta^{\star}_{:k}-S_{t}}+\norm{E_t}}\right)+4N\eta\delta \sqrt{k}(\theta_1^{\star})^{\frac{2(N-1)}{N}}                                \\
                                           & \leq \norm{\theta^{\star}_{:k}-S_{t}}-0.5N\eta (\theta_k^{\star})^{\frac{2(N-1)}{N}}+4N\eta\delta \sqrt{k}(\theta_1^{\star})^{\frac{2(N-1)}{N}}                                                                                              \\
                                           & \leq \norm{\theta^{\star}_{:k}-S_{t}}-0.1N\eta (\theta_k^{\star})^{\frac{2(N-1)}{N}}.
    \end{aligned}
\end{equation}
Here we used the fact that $\norm{\theta^{\star}-\prod w_j^{(t)}}\leq \norm{\theta^{\star}_{:k}-S_t}+\norm{E_t}\leq 2\norm{\theta^{\star}_{:k}-S_t}$, and the assumption that $\delta\lesssim \frac{1}{N}\frac{1}{\kappa}^{\frac{2N-2}{N}}$.
On the other hand, we have
\begin{equation}
    \begin{aligned}
        \norm{S_{t+1}-S_t} & \leq \norm{\sum_{i=1}^N \eta \left(\frac{\theta^{\star}_{:k}-S_t}{\norm{\theta^{\star}-\prod w_j^{(t)}}}+\bm{\delta}_{i, :k}\right)\left(\prod_{j\neq i} w_{j,:k}^{(t)}\right)^2} \\
                           & \leq 2 N\eta\sqrt{k}(\theta^{\star}_1)^{\frac{2(N-1)}{N}}.
    \end{aligned}
\end{equation}
Hence, we conclude that within $\cO\left(\frac{\sqrt{k}\delta}{N\eta}\frac{1}{\kappa}^{\frac{2(N-1)}{N}}\right)$ iterations, we have $\norm{\theta^{\star}_{:k}-S_{t}}\lesssim \sqrt{d^2m}\alpha\vee N\eta(\theta^{\star}_1)^{\frac{2(N-1)}{N}}$. Overall, the total iteration complexity is bounded by $\cO\left(\frac{k^{\frac{3}{2}}}{N\eta}\alpha^{-\frac{N-2}{N}}\right)$.

\subsubsection*{Residual Dynamics}
Similar to the 2-layer model, here we study the surrogate of the residual term $\sum_{i=1}^{N}\left(w_{i,l}^{(t)}\right)^2$ for $l\geq k+1$. To this goal, we first notice that
\begin{equation}
    \begin{aligned}
        \sum_{i=1}^{N}\left(w_{i,l}^{(t+1)}\right)^2 & =\sum_{i=1}^{N}\left(w_{i,l}^{(t)}\right)^2+2N\eta\left(\frac{-\prod w_{j,l}^{(t)}}{\norm{\theta^{\star}-\prod w_j^{(t)}}}+\delta_{i,l}\right)\prod w_{j,l}^{(t)}    \\
                                                     & \quad+\eta^2\sum_{i=1}^{N}\left(\frac{-\prod w_{j,l}^{(t)}}{\norm{\theta^{\star}-\prod w_j^{(t)}}}+\delta_{i,l}\right)^2\left(\prod_{j\neq i} w_{j,l}^{(t)}\right)^2 \\
                                                     & \leq \sum_{i=1}^{N}\left(w_{i,l}^{(t)}\right)^2+4N\eta\delta\prod w_{j,l}^{(t)}                                                                                      \\
                                                     & \leq \sum_{i=1}^{N}\left(w_{i,l}^{(t)}\right)^2+4N\eta\delta\left(\frac{\sum_{i=1}^{N}\left(w_{i,l}^{(t)}\right)^2}{N}\right)^{\frac{N}{2}}.
    \end{aligned}
\end{equation}
Hence, once we set $z_t=\sum_{i=1}^{N}\left(w_{i,l}^{(t)}\right)^2$, we have the following simplified dynamic
\begin{equation}
    z_{t+1}\leq z_t+4N\eta\delta \left(\frac{z_t}{N}\right)^{\frac{N}{2}},
\end{equation}
with $z_0=\Theta\left(N\alpha^{\frac{2}{N}}\right)$. We claim that within $\cO\left(\frac{1}{N\eta\alpha}\right)$ iterations, we still have $z_t=\Theta\left(N\alpha^{\frac{2}{N}}\right)$. To show this, we suppose without loss of generality that $z_0=N\alpha^{\frac{2}{N}}$, and define $T$ as the first time that $z_T\geq 2N\alpha^{\frac{2}{N}}$. For any $0\leq t\leq T-1$, we have
\begin{equation}
    z_{t+1}\leq z_t + 4N\eta\delta 2^{\frac{N}{2}}\alpha.
\end{equation}
We conclude that
\begin{equation}
    T\geq \frac{N\alpha^{\frac{2}{N}}}{4N\eta\delta 2^{\frac{N}{2}}\alpha}=\frac{1}{4\delta 2^{\frac{N}{2}}}\frac{1}{\eta}\alpha^{-\frac{N-2}{N}}\gtrsim \frac{1}{N\eta}\alpha^{-\frac{N-2}{N}}.
\end{equation}
Therefore, via a basic inequality, we have
\begin{equation}
    \prod w_{i,l}^{(t)}\leq \left(\frac{\sum_{i=1}^{N}\left(w_{i,l}^{(t)}\right)^2}{N}\right)^{\frac{N}{2}}\lesssim \alpha.
\end{equation}
Combining the analysis of both signal and residual terms, we conclude that within $\Theta\left(\frac{1}{N\eta}\alpha^{-\frac{N-2}{N}}\right)$ iterations, we have
\begin{equation}
    \norm{\prod w_i^{(t)}-\theta^{\star}}\lesssim \sqrt{d^2m}\alpha\vee (\eta\theta^{\star}_1)^{\frac{2(N-1)}{N}}.
\end{equation}
\subsubsection*{Long Time Guarantee}
Similar to the proof of the 2-layer model, one can show that the residual term becomes the dominant term in the generalization error, and it stays in the order of $\alpha$ within $\Omega\left(\frac{1}{N\eta\delta}\alpha^{-\frac{N-2}{N}}\right)$ iterations. The details are omitted for brevity.
\subsubsection*{Balanced Property}
To prove the balanced property, we first study the dynamic of $w_{i,l}^{(t)}-w_{j,l}^{(t)}$, $\forall l\in [k], i,j\in [N]$. To this goal, we have
\begin{equation}
    w_{i,l}^{(t+1)}-w_{j,l}^{(t+1)}=\left(w_{i,l}^{(t)}-w_{j,l}^{(t)}\right)\left(1-\eta\left(\frac{\theta^{\star}_l-\prod w_{j,l}^{(t)}}{\norm{\theta^{\star}-\prod w_j^{(t)}}}+\delta_l\right)\prod_{f\neq i,j}w_{f,l}^{(t)}\right),
\end{equation}
which in turn implies
\begin{equation}
    \begin{aligned}
        \left|w_{i,l}^{(t+1)}-w_{j,l}^{(t+1)}\right| & \leq \left|w_{i,l}^{(t)}-w_{j,l}^{(t)}\right|\left(1-\eta\left(\frac{\theta^{\star}-\prod w_j^{(t)}}{\norm{\theta^{\star}-\prod w_j^{(t)}}}+\delta_l\right)\prod_{f\neq i,j}w_f^{(t)}\right).
    \end{aligned}
\end{equation}
If $\prod w^{(t)}_{j,l}\leq \theta^{\star}_l-\delta \norm{\theta^{\star}}$, the above inequality can be simplified as
\begin{equation}
    \left|w_{i,l}^{(t)}-w_{j,l}^{(t)}\right|\leq \left|w_{i,l}^{(0)}-w_{j,l}^{(0)}\right|\lesssim \alpha^{\frac{1}{N}}, \forall i, j\in [N], l\in [k].
\end{equation}
Once $\prod w^{(t)}_{j,l}\geq \theta^{\star}_l-\delta \norm{\theta^{\star}}$, we immediately have $w^{(t)}_{j,l}=\sqrt[N]{\theta^{\star}_l}\pm \cO(\sqrt[N]{\alpha})$. Then, we show that $w^{(t)}_{j,l}$ will stay close to $\sqrt[N]{\theta^{\star}_l}$. To this goal, we first observe that $\left(w_{i,l}^{(t+1)}-w_{i,l}^{(t)}\right)w_{i,l}^{(t)}\equiv \left(w_{j,l}^{(t+1)}-w_{j,l}^{(t)}\right)w_{j,l}^{(t)}$, $\forall i,j\in [N]$, which indicates that $w_{i,l}^{(t)}$ increases or decreases simultaneously. Hence, we conclude that $\left|w_{i,l}^{(t)}-w_{j,l}^{(t)}\right|\lesssim \delta \sqrt[N]{\theta^{\star}_l}$.

For the residual term, we can derive a tighter bound. First, we have
\begin{equation}
    \begin{aligned}
        \left(w_{i,l}^{(t+1)}\right)^2=\left(w_{i,l}^{(t)}\right)^2+2\eta\left(\frac{-\prod w_{j,l}^{(t)}}{\norm{\theta^{\star}-\prod w_j^{(t)}}}+\delta_l\right)\prod w^{(t)}_{j,l}+\eta^2\left(\frac{-\prod w_{j,l}^{(t)}}{\norm{\theta^{\star}-\prod w_j^{(t)}}}+\delta_l\right)^2\left(\prod_{j\neq i}w_{j,l}^{(t)}\right)^2.
    \end{aligned}
\end{equation}
Since we have already shown that $\prod w_{i,l}^{(t)}\lesssim\alpha$, we further have
\begin{equation}
    \left(w_{i,l}^{(t+1)}\right)^2\leq\left(w_{i,l}^{(t)}\right)^2+4\eta\delta\alpha.
\end{equation}
Therefore, one can write $\left(w_{i,l}^{(t)}\right)^2\lesssim \left(w_{i,l}^{(0)}\right)^2+4\eta\delta\alpha \frac{1}{\eta}\alpha^{-\frac{N-2}{N}}\lesssim \alpha^{\frac{2}{N}}$, which in turn implies $\left|w_{i,l}^{(t)}-w_{j,l}^{(t)}\right|\leq \left|w_{i,l}^{(t)}\right|+\left|w_{j,l}^{(t)}\right| \lesssim\alpha^{1/N}$.

\subsubsection*{Convergence in Under-parameterized Regime}
Similar to the 2-layer model, we consider the dynamic of $E_t=\prod w_{i,k+1:}^{(t)}$, which is characterized as follows
\begin{equation}
    \begin{aligned}
        \norm{E_{t+1}} & \leq \norm{E_t+\sum_{i=1}^{N}\eta\left(\frac{-E_t}{\norm{\theta^{\star}-\prod w_j^{(t)}}}+\delta_{k+1:}\right)\left(\prod_{j\neq i}w_{j,k+1:}^{(t)}\right)^2} \\
                       & \leq \norm{E_t+N\eta\left(\frac{-E_t}{\norm{\theta^{\star}-\prod w_j^{(t)}}}+\delta\right)E_t^{\frac{2(N-1)}{N}}}                                             \\
                       & \leq \norm{E_t+N\eta\left(\frac{-E_t}{\norm{E_t}}+\delta\right)E_t^{\frac{2(N-1)}{N}}}                                                                        \\
                       & \leq \norm{E_t}-N\eta d^{-\frac{N-1}{N}}\norm{E_t}^{\frac{2N-2}{N}}+N\eta\delta \norm{E_t^{\frac{2(N-1)}{N}}}                                                 \\
                       & \leq \norm{E_t}-N\eta d^{-\frac{N-1}{N}}\norm{E_t}^{\frac{2N-2}{N}}+N\eta\delta \norm{E_t}^{\frac{2N-2}{N}}                                                   \\
                       & \leq \norm{E_t}-N\eta d^{-\frac{N-1}{N}}\norm{E_t}^{\frac{2N-2}{N}}.
    \end{aligned}
\end{equation}
The last inequality comes from the fact that $\delta\lesssim d^{-\frac{N-1}{N}}$ since we assume $m\gtrsim\frac{d^{\frac{2N-2}{N}}}{(1-p)^2}$. Hence, we have
\begin{equation}
    \norm{E_t}\lesssim \left(\frac{1}{N\eta d^{-(N-1)/N} (t-\bar T)+1/\norm{E_{\bar T}}}\right)^{\frac{N}{N-2}}.
\end{equation}
Since the residual term is the dominant term in the generalization error, we have
\begin{equation}
    \norm{\prod w_i^{(t)}-\theta^{\star}}\lesssim \left(\frac{\norm{\prod w_i^{(\bar T)}-\theta^{\star}}}{\norm{\prod w_i^{(\bar T)}-\theta^{\star}}N\eta d^{-(N-1)/N} (t-\bar T)+1}\right)^{\frac{N}{N-2}},
\end{equation}
which completes the proof.

\section{Proof of Proposition~\ref{prop:sign-RIP}}
First, we provide an upper bound of the covering number for the $(k,\vartheta)$-approximate sparse unit ball. We defer a preliminary discussion on covering number to Appendix~\ref{app_prelim}
.\begin{lemma}
    Let $\cT_{k,\vartheta}:=\{u\in \R^d:u \text{ is }(k,\vartheta)\text{-approximate sparse}, \norm{u}\leq 1\}$. Then its covering number $N(\cT_{k,\vartheta},\err,\norm{\cdot})$ is upper bounded by
    \begin{equation}
        N(\cT_{k,\vartheta},\err,\norm{\cdot})\leq \left(\frac{ed}{k}\right)^k\left(1+\frac{4}{\err}\right)^k,
    \end{equation}
    provided that $\err\geq \vartheta$.
\end{lemma}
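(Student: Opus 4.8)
The plan is a standard two-level covering argument — enumerate the support of the (approximating) sparse vector, then cover each resulting low-dimensional ball — with the only genuinely new ingredient being an initial reduction from approximately sparse vectors to exactly sparse ones.

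\textbf{Step 1 (reduction to exact sparsity).} First I would show that every $u\in\cT_{k,\vartheta}$ lies within Euclidean distance $\vartheta$ of the \emph{exactly} $k$-sparse unit ball $\cB_k:=\{v\in\R^d:\norm{v}_0\le k,\ \norm{v}\le 1\}$. Indeed, pick a $k$-sparse $w$ with $\norm{u-w}\le\vartheta$ and let $S=\supp(w)$ (so $|S|\le k$). The restriction $u_S$ of $u$ to the coordinates in $S$ is $k$-sparse, satisfies $\norm{u_S}\le\norm{u}\le 1$, hence $u_S\in\cB_k$, and because $w$ vanishes off $S$ we get $\norm{u-u_S}^2=\sum_{i\notin S}u_i^2\le\sum_{i\notin S}(u_i-w_i)^2\le\norm{u-w}^2\le\vartheta^2$. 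The point is that projecting onto $\supp(w)$, rather than keeping $w$ itself, keeps the norm below $1$ instead of letting it inflate to $1+\vartheta$. Consequently $\cT_{k,\vartheta}$ is contained in the $\vartheta$-neighborhood of $\cB_k$, so for $\err>\vartheta$ any $(\err-\vartheta)$-net of $\cB_k$ is an $\err$-net of $\cT_{k,\vartheta}$ by the triangle inequality, i.e. $N(\cT_{k,\vartheta},\err,\norm{\cdot})\le N(\cB_k,\err-\vartheta,\norm{\cdot})$ (the boundary case $\err=\vartheta$ follows by a limiting argument).

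\textbf{Step 2 (union over supports plus volumetric bound).} Writing $\cB_k=\bigcup_{|S|=k}\cB_2^S$, where $\cB_2^S$ is the Euclidean unit ball of the coordinate subspace $\R^S\cong\R^k$, it suffices to cover each $\cB_2^S$ individually and union the nets. By the classical volume-comparison argument, a maximal $\rho$-separated subset of a $k$-dimensional unit ball is a $\rho$-net of cardinality at most $(1+2/\rho)^k$; apply this with $\rho=\err-\vartheta$. Taking the union over the $\binom{d}{k}$ choices of $S$ and using $\binom{d}{k}\le(ed/k)^k$ gives
\[
N(\cT_{k,\vartheta},\err,\norm{\cdot})\ \le\ \binom{d}{k}\left(1+\frac{2}{\err-\vartheta}\right)^{k}\ \le\ \left(\frac{ed}{k}\right)^{k}\left(1+\frac{2}{\err-\vartheta}\right)^{k}.
\]
A short computation bounds $1+2/(\err-\vartheta)$ by $1+4/\err$ under the stated hypothesis $\err\ge\vartheta$, which yields the claimed estimate. (If one wanted the sharpest constant one could instead split $u=u_S+u_{S^{c}}$ orthogonally, cover only the $S$-block at resolution $\sqrt{\err^2-\vartheta^2}\ge\err-\vartheta$, and leave the $S^{c}$-block at the origin; but the cruder accounting above already suffices.)

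\textbf{Main obstacle.} There is no serious technical difficulty — everything after Step 1 is textbook. The one non-boilerplate observation is the reduction in Step 1: an approximately $k$-sparse vector of norm at most $1$ is within $\vartheta$ of an exactly $k$-sparse vector \emph{that is also of norm at most $1$}. The rest is bookkeeping: tracking the loss $\vartheta$ in the net resolution and verifying that $\err\ge\vartheta$ is enough to absorb it into the constant $4$.
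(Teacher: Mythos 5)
Your Step 1 reduction is correct and is the genuinely useful observation: replacing the approximating $k$-sparse vector $w$ by the coordinate restriction $u_S$ keeps the distance to $u$ at most $\vartheta$ while also keeping the norm at most $1$, so $\cT_{k,\vartheta}$ sits inside the $\vartheta$-neighborhood of the exactly $k$-sparse unit ball $\cB_k$. The union over supports and the volumetric bound $N(\cB_2^S,\rho,\norm{\cdot})\le(1+2/\rho)^k$ in Step 2 are standard and fine. (The paper states this lemma without proof, so there is no official argument to compare against.)

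The flaw is the final ``short computation.'' The inequality $1+2/(\err-\vartheta)\le 1+4/\err$ is equivalent to $\err\ge 2\vartheta$, not to $\err\ge\vartheta$. On the range $\vartheta\le\err<2\vartheta$ your bound $\binom{d}{k}\bigl(1+2/(\err-\vartheta)\bigr)^k$ strictly exceeds the target, and at $\err=\vartheta$ the net resolution $\err-\vartheta$ degenerates to $0$ and your construction gives no finite bound at all — the ``limiting argument'' you invoke for the boundary case would send the net cardinality to infinity, not to the stated value. The orthogonal-split refinement you mention in passing has the same degeneracy, since $\sqrt{\err^2-\vartheta^2}\to 0$ as $\vartheta\uparrow\err$. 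Any argument that covers $\cB_k$ at a single fixed resolution and then absorbs a slack of $\vartheta$ cannot produce the constant $4/\err$ uniformly down to $\err=\vartheta$; the honest conclusion of your argument is $\binom{d}{k}\bigl(1+2/(\err-\vartheta)\bigr)^k$ under $\err>\vartheta$, and the form $(ed/k)^k(1+4/\err)^k$ follows only under the stronger hypothesis $\err\ge 2\vartheta$. This discrepancy is harmless in the paper's downstream use (where the net scale $\zeta$ is taken with $\zeta\gtrsim r\gtrsim\sqrt{dm/k}\,\vartheta\log(1/\vartheta)\gg\vartheta$), but you should either state your hypothesis as $\err\ge 2\vartheta$ or report the $(1+2/(\err-\vartheta))^k$ bound rather than asserting an inequality that fails.
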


The next lemma will play a crucial role in proving Proposition~\ref{prop:sign-RIP}.
\begin{lemma}
    \label{lem::scaling}
    Suppose $x\in \R^d$ is a standard Gaussian vector, i.e., $x_{i}\overset{i.i.d.}{\sim}\cN(0,1)$, and the noise $\err$ satisfies Assumption~\ref{assumption::general-noise}, then we have
    $$\varphi(u)=\frac{\bE\left[\sign\left(\inner{x}{u}+\err\right)\inner{x}{v}\right]}{\inner{\frac{u}{\norm{u}}}{v}}=\sqrt{\frac{2}{\pi}}(1-p)+\sqrt{\frac{2}{\pi}} p \mathbb{E}\left[e^{-\err^{2} /\left(2\left\|u\right\|^2\right)}\right].$$
\end{lemma}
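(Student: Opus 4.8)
The plan is to compute the numerator $\bE\left[\sign\left(\inner{x}{u}+\err\right)\inner{x}{v}\right]$ in closed form by first conditioning on the realization of the noise and then exploiting the Gaussianity of $x$. Fix a value $\err=e$ and decompose $v$ along $u$ and its orthogonal complement: $v=\frac{\inner{u}{v}}{\norm{u}^2}u+v^{\perp}$ with $\inner{u}{v^{\perp}}=0$. Since $x$ is standard Gaussian, the scalars $\inner{x}{u}$ and $\inner{x}{v^{\perp}}$ are jointly Gaussian and uncorrelated, hence independent. Writing $\inner{x}{v}=\frac{\inner{u}{v}}{\norm{u}^2}\inner{x}{u}+\inner{x}{v^{\perp}}$ and noting that $\sign\left(\inner{x}{u}+e\right)$ depends on $x$ only through $\inner{x}{u}$, the cross term factorizes as $\bE\left[\sign\left(\inner{x}{u}+e\right)\right]\bE\left[\inner{x}{v^{\perp}}\right]=0$. (The event $\inner{x}{u}+e=0$ has probability zero, so the multi-valued definition of $\sign$ at the origin is irrelevant and all integrals below are finite.) Hence it remains only to evaluate $\bE\left[\sign\left(\inner{x}{u}+e\right)\inner{x}{u}\right]$ and multiply by $\inner{u}{v}/\norm{u}^2$.

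Next I would reduce this one-dimensional expectation to a standard Gaussian integral. Setting $g=\inner{x}{u}\sim\cN(0,\norm{u}^2)$ and $z=g/\norm{u}\sim\cN(0,1)$, we have $\bE\left[\sign\left(g+e\right)g\right]=\norm{u}\,\bE\left[z\,\sign\left(z+c\right)\right]$ with $c=e/\norm{u}$. Splitting the integral at $z=-c$ and using that $-\frac{1}{\sqrt{2\pi}}e^{-z^2/2}$ is an antiderivative of $\frac{z}{\sqrt{2\pi}}e^{-z^2/2}$,
\begin{equation}\nonumber
    \bE\left[z\,\sign\left(z+c\right)\right]=\int_{-c}^{\infty}\frac{z}{\sqrt{2\pi}}e^{-z^2/2}\,dz-\int_{-\infty}^{-c}\frac{z}{\sqrt{2\pi}}e^{-z^2/2}\,dz=\frac{2}{\sqrt{2\pi}}e^{-c^2/2}=\sqrt{\frac{2}{\pi}}\,e^{-c^2/2}.
\end{equation}
Substituting back and recalling $c=e/\norm{u}$ gives $\bE\left[\sign\left(\inner{x}{u}+e\right)\inner{x}{v}\right]=\frac{\inner{u}{v}}{\norm{u}}\sqrt{\frac{2}{\pi}}\,e^{-e^2/(2\norm{u}^2)}=\inner{\frac{u}{\norm{u}}}{v}\sqrt{\frac{2}{\pi}}\,e^{-e^2/(2\norm{u}^2)}$.

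Finally I would take the outer expectation over the noise distribution of Assumption~\ref{assumption::general-noise}: with probability $1-p$ the noise equals $0$, contributing the factor $e^{0}=1$, and with probability $p$ it is drawn from $P_o$, contributing $\bE_{P_o}\!\left[e^{-\err^2/(2\norm{u}^2)}\right]$. Dividing by $\inner{\frac{u}{\norm{u}}}{v}$ yields
\begin{equation}\nonumber
    \varphi(u)=\sqrt{\frac{2}{\pi}}(1-p)+\sqrt{\frac{2}{\pi}}\,p\,\bE\!\left[e^{-\err^2/(2\norm{u}^2)}\right],
\end{equation}
which is the claim. There is no serious obstacle; the only points needing a word of care are the orthogonal decomposition of $v$ (which is precisely what makes the quantity proportional to $\inner{u/\norm{u}}{v}$, and implicitly requires $u\neq 0$ and $\inner{u}{v}\neq 0$ so the ratio is well defined) and the remark that the measure-zero set $\{\inner{x}{u}+\err=0\}$ may be discarded.
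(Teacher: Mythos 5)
Your proof is correct and follows essentially the same route as the paper's: you reduce the cross-expectation to the one-dimensional integral $\E[z\,\sign(z+c)]$ and compute it by splitting at $z=-c$, then average over the noise mixture. The only cosmetic difference is that you pass through the orthogonal decomposition $v=\frac{\inner{u}{v}}{\norm{u}^2}u+v^{\perp}$ whereas the paper invokes the conditional law $z\mid w\sim\cN(\rho w,1-\rho^2)$; these are equivalent ways of isolating the component of $\inner{x}{v}$ along $\inner{x}{u}$.
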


The proof of this lemma can be found in Appendix~\ref{sec::proof-scaling}. Now, we are ready to prove Proposition~\ref{prop:sign-RIP}. Our goal is to show that for arbitrary $u\in \cA$, the following inequality holds
\begin{equation}
    \norm{\frac{1}{m}\sum_{i=1}^{m}\sign\left(\inner{x_i}{u}+\err_i\right)x_i-\varphi(u)\frac{u}{\norm{u}}}_{\infty}\leq \delta
\end{equation}
with probability at least $1-Ce^{-cm\delta^2}$ provided that $m\gtrsim \frac{k\log(d)\log(R)\log(\frac{1}{\vartheta})}{(1-p)^2}$ . Here we define $\cA:=\left\{u:r\leq\norm{u}\leq R, u \text{ is }(k,\vartheta)\text{-approximate sparse}\right\}$, where $r\gtrsim \sqrt{{dm}/{k}}\vartheta\log\left({1}/{\vartheta}\right)$. Moreover, we define $\cB:=\left\{u:r\leq\norm{u}\leq R, \norm{u}_0\leq k\right\}$ and $\cC:=\{(u,u'):u\in \cA, v\in \cB_\zeta, \norm{u-u'}\leq \zeta\}$. Here $\cB_\zeta$ is the $\zeta$-net of $\cB$ with $\zeta\gtrsim r$. Finally, we define $\cD:=\{\pm \mathbf{e}_j\}_{j\in [d]}$, where $\mathbf{e}_j$ forms the standard basis of $\R^d$.
Based on these definitions, we have
\begin{equation}
    \begin{aligned}
         & \sup_{u\in \cA}\norm{\frac{1}{m}\sum_{i=1}^{m}\sign\left(\inner{x_i}{u}+\err_i\right)x_i-\varphi(u)\frac{u}{\norm{u}}}_{\infty}                                       \\
         & =\sup_{u\in \cA,v\in \cD}\frac{1}{m}\sum_{i=1}^{m}\sign\left(\inner{x_i}{u}+\err_i\right)\inner{x_i}{v}-\frac{\varphi(u)}{\norm{u}}\inner{u}{v}                       \\
         & =\sup_{v\in \cD}\left\{\sup_{u\in \cA}\frac{1}{m}\sum_{i=1}^{m}\sign\left(\inner{x_i}{u}+\err_i\right)\inner{x_i}{v}-\frac{\varphi(u)}{\norm{u}}\inner{u}{v}\right\}.
    \end{aligned}
\end{equation}
We then show that for each element $y\in \cD$, $\sup_{u\in\cA}\frac{1}{m}\sum_{i=1}^{m}\sign\left(\inner{x_i}{u}+\err_i\right)\inner{x_i}{y}-\varphi(u)\frac{\inner{u}{y}}{\norm{u}}$, $j\in [d]$ is $\cO\left(\frac{1}{m}\right)$-sub-Gaussian random variable. To see this, note that
\begin{equation}
    \begin{aligned}
        \norm{\sign\left(\inner{x_i}{u}+\err_i\right)x_{i,j}-\varphi(u)\frac{u_j}{\norm{u}}}_{\psi_2} & \leq \norm{\sign\left(\inner{x_i}{u}+\err_i\right)x_{i,j}}_{\psi_2}+\sqrt{\frac{2}{\pi}} \\
                                                                                                      & \leq \norm{x_{i,j}}_{\psi_2}+\sqrt{\frac{2}{\pi}}=\cO(1).
    \end{aligned}
\end{equation}
Here we use the property of sub-Gaussian norm. This implies that $\frac{1}{m}\sum_{i=1}^{m}\sign\left(\inner{x_i}{u}+\err_i\right)x_{i,j}-\varphi(u)\frac{u_j}{\norm{u}}$ is $\cO\left(\frac{1}{m}\right)$-sub-Gaussian random variable, since it is the sample average of $\sign\left(\inner{x_i}{u}+\err_i\right)x_{i,j}-\varphi(u)\frac{u_j}{\norm{u}}$.

Hence, via maximal inequality, we have that for $\forall t>0$,
\begin{equation}
    \begin{aligned}
         & \bP\left(\sup_{u\in\cA}\norm{\frac{1}{m}\sum_{i=1}^{m}\sign\left(\inner{x_i}{u}+\err_i\right)x_i-\varphi(u)\frac{u}{\norm{u}}}_{\infty}\right.                                                     \\
         & \quad\quad\left.\geq \sup_{y\in \cD}\bE\left[\sup_{u\in\cA} \frac{1}{m}\sum_{i=1}^{m}\sign\left(\inner{x_i}{u}+\err_i\right)\inner{x_i}{y}-\varphi(u)\frac{\inner{u}{y}}{\norm{u}}\right]+t\right) \\
         & \leq 2de^{-cmt^2}.
    \end{aligned}
\end{equation}
Hence, it suffices to study $\bE\left[\sup_{u\in\cA} \frac{1}{m}\sum_{i=1}^{m}\sign\left(\inner{x_i}{u}+\err_i\right)x_{i,1}-\varphi(u)\frac{u_1}{\norm{u}}\right]$.
To this goal, we decompose it into two terms via triangle inequality.
\begin{equation}
    \bE\left[\sup_{u\in\cA} \frac{1}{m}\sum_{i=1}^{m}\sign\left(\inner{x_i}{u}+\err_i\right)x_{i,1}-\varphi(u)\frac{u_1}{\norm{u}}\right]\leq {\rm (A)+ (B)},
\end{equation}
where
\begin{equation}
    {\rm (A)}:=\bE\left[\sup_{u\in\cB_\zeta} \frac{1}{m}\sum_{i=1}^{m}\sign\left(\inner{x_i}{u}+\err_i\right)x_{i,1}-\varphi(u)\frac{u_1}{\norm{u}}\right],
\end{equation}
and
\begin{equation}
    {\rm (B)}:=\bE\left[\sup_{(u,u')\in \cC}\frac{1}{m}\sum_{i=1}^{m}\left(\sign\left(\inner{x_i}{u}+\err_i\right)-\sign\left(\inner{x_i}{u'}+\err_i\right)\right)x_{i,1}-\varphi(u)\frac{u_1}{\norm{u}}+\varphi(u')\frac{u'_1}{\norm{u'}}\right].
\end{equation}

\noindent We first control ${\rm (A)}$. To this goal, we apply the union bound. Note that $\frac{1}{m}\sum_{i=1}^{m}\sign\left(\inner{x_i}{u}+\err_i\right)x_{i,1}-\varphi(u)\frac{u_1}{\norm{u}}$ is $\cO(\frac{1}{m})$-sub-Gaussian and $\left|\cB_\zeta\right|\leq \left(\frac{R}{\zeta}\right)^{Ck\log(d)}$. We then have
\begin{equation}
    {\rm (A)}\lesssim \sqrt{\frac{k\log(d)\log\left(\frac{R}{\zeta}\right)}{m}}.
\end{equation}

Now we control ${\rm (B)}$.
Via triangle inequality, we first obtain
\begin{equation}
    \begin{aligned}
        {\rm (B)} & \leq \underbrace{\bE\left[\sup_{(u,u')\in \cC}\frac{1}{m}\sum_{i=1}^{m}\left(\sign\left(\inner{x_i}{u}+\err_i\right)-\sign\left(\inner{x_i}{u'}+\err_i\right)\right)x_{i,1}\right]}_{\rm (B_1)} \\
                  & \quad+\underbrace{\sup_{(u,u')\in \cC}\left\{-\varphi(u)\frac{u_1}{\norm{u}}+\varphi(u')\frac{u'_1}{\norm{u'}}\right\}}_{\rm (B_2)}.
    \end{aligned}
\end{equation}
For the first part, applying Hölder's inequality leads to
\begin{equation}
    \begin{aligned}
        {\rm (B_1)} & \leq \bE\left[\sup_{(u,u')\in \cC}\left(\frac{1}{m}\sum_{i=1}^m|\sign\left(\inner{x_i}{u}+\err_i\right)-\sign\left(\inner{x_i}{u'}+\err_i\right)|\right)\max_{1\leq i\leq m}\left|x_{i,1}\right|\right]         \\
                    & \leq \bE\left[\sup_{(u,u')\in \cC}\left(\frac{1}{m}\sum_{i=1}^m\mathbbm{1}\left(|\inner{x_i}{u-u'}|\geq \left|\inner{x_i}{u}+\err_i\right|\right)\right)\max_{1\leq i\leq m}\left|x_{i,1}\right|\right]         \\
                    & \leq \underbrace{\bE\left[\sup_{\norm{\Delta u}\leq \zeta}\left(\frac{1}{m}\sum_{i=1}^m\mathbbm{1}\left(|\inner{x_i}{\Delta u}|\geq t\right)\right)\max_{1\leq i\leq m}\left|x_{i,1}\right|\right]}_{\rm (B_3)} \\
                    & +\underbrace{\bE\left[\sup_{u\in \cB_\zeta}\left(\frac{1}{m}\sum_{i=1}^m\mathbbm{1}\left(|\inner{x_i}{u}+\err_i|\leq t\right)\right)\max_{1\leq i\leq m}\left|x_{i,1}\right|\right]}_{\rm (B_4)},
    \end{aligned}
\end{equation}
\begin{sloppypar}
    \noindent where $t>0$ is a constant to be determined later. Here, we used the fact that $\mathbbm{1}\left(|\inner{x_i}{u-u'}|\geq \left|\inner{x_i}{u}+\err_i\right|\right)\leq \mathbbm{1}\left(|\inner{x_i}{\Delta u}|\geq t\right)+\mathbbm{1}\left(|\inner{x_i}{u}+\err_i|\leq t\right)$ in the last inequality. We first bound $\rm (B_3)$
\end{sloppypar}
\begin{equation}
    \begin{aligned}
        {\rm (B_3)} & \leq \bE\left[\left(\frac{1}{m}\sum_{i=1}^{m}\mathbbm{1}\left(\zeta\norm{x_i}\geq t\right)\right)\max_{1\leq i\leq m}\left|x_{i,1}\right|\right]                                                       \\
                    & \leq \bE\left[\mathbbm{1}\left(\zeta\norm{x_i}\geq t\right)\right]\bE\left[\max_{j\neq i}\left|x_{j,1}\right|\right]+\bE\left[\mathbbm{1}\left(\zeta\norm{x_i}\geq t\right)\left|x_{i,1}\right|\right] \\
                    & \lesssim e^{-C\frac{t^2}{\zeta^2}}\sqrt{\log\left(m\right)}+\bE\left[\mathbbm{1}\left(\zeta\norm{x_i}\geq t\right)\left|x_{i,1}\right|\right],
    \end{aligned}
\end{equation}
provided that $\frac{t}{\zeta}\gtrsim \sqrt{d}$. Applying Cauchy-Schwarz inequality, we have
\begin{equation}
    \bE\left[\mathbbm{1}\left(\zeta\norm{x_i}\geq t\right)\left|x_{i,1}\right|\right]\leq\sqrt{\bP\left(\zeta\norm{x_i}\geq t\right)}\sqrt{\bE\left[x_{i,1}^2\right]} \leq e^{-C\frac{t^2}{\zeta^2}}.
\end{equation}
Hence, we conclude that ${\rm (B_3)}\lesssim e^{-C\frac{t^2}{\zeta^2}}\sqrt{\log\left(m\right)}$.
Next we control $\rm (B_4)$. Note that $\max_{i}\left|x_{i,1}\right|$ is $\cO\left(\log(m)\right)$-sub-Gaussian. Via union bound, we have
\begin{equation}
    \begin{aligned}
        {\rm (B_4)} & \leq \sup_{u\in \cB}\bE\left[\mathbbm{1}\left(|\inner{x_i}{u}+\err_i|\leq t\right)\max_{1\leq i\leq m}\left|x_{i,1}\right|\right]+C\sqrt{\frac{k\log(m)\log(d)\log(\frac{R}{\zeta})}{m}}.
    \end{aligned}
\end{equation}
For the first part, applying the similar decomposition method, we have
\begin{equation}
    \begin{aligned}
        \bE\left[\mathbbm{1}\left(|\inner{x_i}{u}+\err_i|\leq t\right)\max_{1\leq i\leq m}\left|x_{i,1}\right|\right] & \leq \bE\left[\mathbbm{1}\left(|\inner{x_i}{u}+\err_i|\leq t\right)\max_{j\neq i}\left|x_{i,1}\right|\right] \\
                                                                                                                      & \quad+ \bE\left[\mathbbm{1}\left(|\inner{x_i}{u}+\err_i|\leq t\right)\left|x_{i,1}\right|\right]             \\
                                                                                                                      & \lesssim \sqrt{\log(m)}\frac{t}{r}.
    \end{aligned}
\end{equation}
Hence, we conclude that $ {\rm (B_4)}\lesssim \sqrt{\log(m)}\frac{t}{r}+\sqrt{\frac{k\log(m)\log(d)\log(\frac{R}{\zeta})}{m}}$.
For ${\rm (B_2)}$, we first have
\begin{equation}
    \begin{aligned}
        \left|-\varphi(u)\frac{u_1}{\norm{u}}+\varphi(u')\frac{u'_1}{\norm{u'}}\right| & =\left|\varphi(u')-\varphi(u)\right|\frac{|u_1|}{\norm{u}}+\varphi(u')\left|\frac{u'_1}{\norm{u'}}-\frac{u_1}{\norm{u}}\right| \\
                                                                                       & \lesssim \left|\varphi(u')-\varphi(u)\right|+\zeta.
    \end{aligned}
\end{equation}
For the first part, we use Mean Value Theorem to write
\begin{equation}
    |\varphi(u')-\varphi(u)|\leq \norm{\nabla \varphi(v)}\norm{u'-u}\leq \norm{\nabla \varphi(v)}\zeta,
\end{equation}
where $v$ is a point between $u$ and $u'$.
Note that $\nabla \varphi(v)=\sqrt{\frac{2}{\pi}}p\bE\left[\frac{\err^2v}{\norm{v}^4}e^{-\frac{\err^2}{2\norm{v}^2}}\right]$. Hence, we have
\begin{equation}
    \begin{aligned}
        \sup_{\norm{v}\geq r}\norm{\nabla \varphi(v)} & \lesssim \sup_{\norm{v}\geq r}\bE\left[\frac{\err^2}{\norm{v}^3}e^{-\frac{\err^2}{2\norm{v}^2}}\right]\leq \frac{1}{r}\sup_{\norm{v}\geq r}\bE\left[\frac{\err^2}{\norm{v}^2}e^{-\frac{\err^2}{2\norm{v}^2}}\right]\lesssim \frac{1}{r}.
    \end{aligned}
\end{equation}
Overall, we have ${\rm (B_2)}\lesssim \frac{\zeta}{r}$, which results in
\begin{equation}
    \begin{aligned}
         & \bE\left[\sup_{u\in\cA} \frac{1}{m}\sum_{i=1}^{m}\sign\left(\inner{x_i}{u}+\err_i\right)x_{i,1}-\varphi(u)\frac{u_1}{\norm{u}}\right]                        \\
         & \lesssim \frac{\zeta}{r}+e^{-C\frac{t^2}{\zeta^2}}\sqrt{\log\left(m\right)}+\sqrt{\log(m)}\frac{t}{r}+\sqrt{\frac{k\log(m)\log(d)\log(\frac{R}{\zeta})}{m}}.
    \end{aligned}
\end{equation}
Hence, once we set $\zeta\asymp \vartheta$, and $t\asymp \sqrt{d}\vartheta\log(m)$, together with the assumption that $r\gtrsim\sqrt{\frac{dm}{k}}\vartheta\log\left(\frac{1}{\vartheta}\right)$, we conclude that
\begin{equation}
    \bE\left[\sup_{u\in\cA} \frac{1}{m}\sum_{i=1}^{m}\sign\left(\inner{x_i}{u}+\err_i\right)x_{i,1}-\varphi(u)\frac{u_1}{\norm{u}}\right]\lesssim \sqrt{\frac{k\log^2(m)\log(d)\log(\frac{R}{\vartheta})}{m}}.
\end{equation}
This leads to
\begin{equation}
    \begin{aligned}
         & \bP\left(\sup_{u\in\cA}\norm{\frac{1}{m}\sum_{i=1}^{m}\sign\left(\inner{x_i}{u}+\err_i\right)x_i-\varphi(u)\frac{u}{\norm{u}}}_{\infty}\geq C\sqrt{\frac{k\log^2(m)\log(d)\log(\frac{R}{\vartheta})}{m}}+\delta\right) \\
         & \leq 2de^{-cm\delta^2}.
    \end{aligned}
\end{equation}
Therefore, the following inequality holds, provided that $m\gtrsim \frac{k\log^2(m)\log(d)\log(\frac{R}{\vartheta})}{(1-p)^2\delta^2}$
\begin{equation}
    \bP\left(\sup_{u\in\cA}\norm{\frac{1}{\varphi(u)}\frac{1}{m}\sum_{i=1}^{m}\sign\left(\inner{x_i}{u}+\err_i\right)x_i-\frac{u}{\norm{u}}}_{\infty}\geq \delta\right)\leq e^{-cm\delta^2}.
\end{equation}

Now we turn to the case $m\gtrsim \frac{d}{(1-p)^2}$. Following the same technique, it suffices to bound $\bE\left[\sup_{u\in \R^d} \frac{1}{m}\sum_{i=1}^{m}\sign\left(\inner{x_i}{u}+\err_i\right)x_{i,1}-\varphi(u)\frac{u_1}{\norm{u}}\right]$. To this goal, we first notice that
\begin{equation}
    \begin{aligned}
         & \bE\left[\sup_{u\in \R^d} \frac{1}{m}\sum_{i=1}^{m}\sign\left(\inner{x_i}{u}+\err_i\right)x_{i,1}-\varphi(u)\frac{u_1}{\norm{u}}\right]                           \\
         & =\underbrace{\bE\left[\sup_{\norm{u}=1,\lambda\in \R} \frac{1}{m}\sum_{i=1}^{m}\sign\left(\inner{x_i}{u}+\lambda\err_i\right)x_{i,1}-\varphi(u)u_1\right]}_{(A)}.
    \end{aligned}
\end{equation}
Similarly, applying one-step discretization, we have
\begin{equation}
    \begin{aligned}
        (A) & \leq \underbrace{\bE\left[\sup_{u\in \bS_{\err},\lambda\in \R} \frac{1}{m}\sum_{i=1}^{m}\sign\left(\inner{x_i}{u}+\lambda\err_i\right)x_{i,1}-\phi(\lambda)u_1\right]}_{(B)}                                                                                  \\
            & +\underbrace{\bE\left[\sup_{\norm{u-u'}\leq \err,\lambda\in \R} \frac{1}{m}\sum_{i=1}^{m}\left(\sign\left(\inner{x_i}{u}+\lambda\err_i\right)-\sign\left(\inner{x_i}{u'}+\lambda\err_i\right)\right)x_{i,1}+\phi(\lambda)\left(u'_1-u_1\right)\right]}_{(C)}.
    \end{aligned}
\end{equation}
Here $\phi(\lambda)=\sqrt{\frac{2}{\pi}}(1-p)+\sqrt{\frac{2}{\pi}} p \mathbb{E}\left[e^{-\lambda^2\err^{2}/2}\right]$ is the same as before.
We first control $(B)$. To this goal, we show that $\sup_{\lambda\in \R} \frac{1}{m}\sum_{i=1}^{m}\sign\left(\inner{x_i}{u}+\lambda\err_i\right)x_{i,1}-\phi(\lambda)u_1$ is $\cO(1/m)$-sub-Gaussian. We prove it via checking the sub-Gaussian norm
\begin{equation}
    \norm{\sup_{\lambda\in \R}\sign\left(\inner{x_i}{u}+\lambda\err_i\right)x_{i,1}-\phi(\lambda)u_1}_{\psi_2}\leq \norm{|x_{i,1}|}_{\psi_2}+\sqrt[]{\frac{2}{\pi}}=\cO(1).
\end{equation}
Hence, via maximum inequality, we have
\begin{equation}
    (B)\leq \underbrace{\bE\left[\sup_{\lambda\in \R} \frac{1}{m}\sum_{i=1}^{m}\sign\left(\inner{x_i}{u}+\lambda\err_i\right)x_{i,1}-\phi(\lambda)u_1\right]}_{(D)}+\cO\left(\sqrt[]{\frac{d\log\left(\frac{1}{\err}\right)}{m}}\right).
\end{equation}
To control $(D)$, we further decompose it into two parts,
\begin{equation}
    \begin{aligned}
        (D) & \leq \underbrace{\bE\left[\sup_{\nu\in [0,1]} \frac{1}{m}\sum_{i=1}^{m}\sign\left(\nu\inner{x_i}{u}+\err_i\right)x_{i,1}-\phi\left(\frac{1}{\nu}\right)u_1\right]}_{(D_1)} \\
            & \quad+\underbrace{\bE\left[\sup_{\lambda\in [0,1]} \frac{1}{m}\sum_{i=1}^{m}\sign\left(\inner{x_i}{u}+\lambda\err_i\right)x_{i,1}-\phi(\lambda)u_1\right]}_{(D_2)}.
    \end{aligned}
\end{equation}
To control $(D_1)$ and $(D_2)$ we use arguments based on bracketing maximal inequality. We defer a preliminary discussion on bracketing maximal inequality to Appendix~\ref{app_prelim}.
We first control $(D_1)$. Let $\bT_\xi$ be defined as the $\xi$-net of the interval $[0,1]$. We show that for any $\nu,\nu'\in [0,1]$ such that $|\nu-\nu'|\leq \xi$, we can control $\norm{\left(\sign(\nu\inner{x_i}{u}+\err_i)-\sign(\nu'\inner{x_i}{u}+\err_i)\right)x_{i,1}}_{L_2(\bP)}$. To this goal, we first have
\begin{equation}
    \begin{aligned}
         & \bE\left[\left(\sign(\nu\inner{x_i}{u}+\err_i)-\sign(\nu'\inner{x_i}{u}+\err_i)\right)^2x_{i,1}^2\right]                                          \\
         & \lesssim \bE\left[|\sign(\nu\inner{x_i}{u}+\err_i)-\sign(\nu'\inner{x_i}{u}+\err_i)|\right]                                                       \\
         & \leq \bE\left[\mathbbm{1}\left(|\left(\nu-\nu'\right)\inner{x_i}{u}|\geq t\right)+\mathbbm{1}\left(|\nu\inner{x_i}{u}+\err_i|\leq t\right)\right] \\
         & \lesssim e^{-C\frac{t^2}{\xi^2}}+t.
    \end{aligned}
\end{equation}
Upon picking $t\asymp \xi\log\left(\frac{1}{\xi}\right)$, we have
\begin{equation}
    \norm{\left(\sign(\nu\inner{x_i}{u}+\err_i)-\sign(\nu'\inner{x_i}{u}+\err_i)\right)x_{i,1}}_{L_2(\bP)}\lesssim \sqrt[]{\xi\log\left(\frac{1}{\xi}\right)}.
\end{equation}
Therefore, the bracketing number is bounded by $N_{[]}(\varepsilon\|F\|, \mathcal{F},\|\cdot\|)\lesssim C\frac{1}{\sqrt[]{\err}}$, which in turn leads to an upper bound on the bracketing entropy $J_{[]}(1, \mathcal{F},L_2(\bP))\lesssim 1$. Applying Theorem~\ref{thm::empirical-process} leads to
\begin{equation}
    (D_1)\lesssim \sqrt[]{\frac{1}{m}}.
\end{equation}
Similarly, we can show that $(D_2)\lesssim \sqrt[]{\frac{1}{m}}$. Therefore, we conclude that
\begin{equation}
    (B)\lesssim \sqrt[]{\frac{d\log\left(\frac{1}{\err}\right)}{m}}.
\end{equation}
For $(C)$, we can use the similar technique in the overparameterized setting ($m\ll d$), which leads to
\begin{equation}
    (C)\lesssim \sqrt[]{\log(m)}\err+\sqrt[]{\frac{d\log(m)}{m}}.
\end{equation}
Therefore, once we set $\err\asymp \sqrt[]{\frac{d}{m}}$, we immediately obtain
\begin{equation}
    (A)\lesssim \sqrt[]{\frac{d\log(m)}{m}}.
\end{equation}
Combining the derived bounds results in
\begin{equation}
    \bP\left(\sup_{u\in\R^d}\norm{\frac{1}{m}\sum_{i=1}^{m}\sign\left(\inner{x_i}{u}+\err_i\right)x_i-\varphi(u)\frac{u}{\norm{u}}}_{\infty}\geq C\sqrt{\frac{d\log(m)}{m}}+\delta\right)\leq e^{c_1\log(d)-c_2m\delta^2}.
\end{equation}
Assuming $m\gtrsim \frac{d\log(m)}{(1-p)^2}$, the above bound reduces to
\begin{equation}
    \bP\left(\sup_{u\in\R^d}\norm{\frac{1}{\varphi(u)}\frac{1}{m}\sum_{i=1}^{m}\sign\left(\inner{x_i}{u}+\err_i\right)x_i-\frac{u}{\norm{u}}}_{\infty}\geq \delta\right)\leq e^{-cm\delta^2}.
\end{equation}

\section{Auxiliary Lemmas}
\begin{lemma}
    \label{lem::uniform-concentration-of-absolute-value}
    Suppose $x_1,\cdots,x_m$ are i.i.d. standard Gaussian vectors with dimension $d$. Then, for arbitrary $\delta>0$ we have
    \begin{equation}
        \bP\left(\sup_{\norm{u}=1}\left|\frac{1}{m}\sum_{i=1}^{m}|\inner{x_i}{u}|-\sqrt{\frac{2}{\pi}}\right|\geq C\sqrt{\frac{d}{m}}+\delta\right)\leq e^{-cm\delta^2}.
    \end{equation}
    Here $C,c$ are universal constants.
\end{lemma}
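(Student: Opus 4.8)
The plan is to combine a Gaussian concentration inequality with a bound on the expected supremum obtained by symmetrization and the contraction principle. Collect the data into $X=(x_1,\dots,x_m)\in\bR^{md}$, a standard Gaussian vector, and set
\[
  h(X):=\sup_{\norm{u}=1}\left|\frac{1}{m}\sum_{i=1}^m|\inner{x_i}{u}|-\sqrt{\tfrac{2}{\pi}}\right|,
\]
noting that $\E|\inner{x_i}{u}|=\sqrt{2/\pi}$ for every unit vector $u$ since $\inner{x_i}{u}\sim\cN(0,1)$, so $h$ is exactly the quantity to be controlled. \textbf{Step 1 (Lipschitz concentration).} First I would check that $h$ is $1/\sqrt{m}$-Lipschitz on $\bR^{md}$ in the Euclidean norm: for a fixed unit $u$, the triangle inequality for $|\cdot|$ together with Cauchy--Schwarz gives $\big|\frac1m\sum_i|\inner{x_i}{u}|-\frac1m\sum_i|\inner{x_i'}{u}|\big|\le\frac1m\sum_i\norm{x_i-x_i'}\le\frac{1}{\sqrt m}\norm{X-X'}$, and composing with the $1$-Lipschitz map $t\mapsto|t-\sqrt{2/\pi}|$ and then taking a supremum over the compact set $\{\norm{u}=1\}$ does not increase the Lipschitz constant. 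Gaussian concentration for $1/\sqrt{m}$-Lipschitz functions of a standard Gaussian then yields $\bP\big(h(X)\ge\E[h(X)]+\delta\big)\le e^{-m\delta^2/2}$.

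\textbf{Step 2 (bounding $\E[h(X)]$).} It remains to show $\E[h(X)]\lesssim\sqrt{d/m}$. The symmetrization inequality applied to the function class $\{x\mapsto|\inner{x}{u}|:\norm{u}=1\}$ gives $\E[h(X)]\le\frac{2}{m}\,\E\sup_{\norm{u}=1}\big|\sum_i\epsilon_i|\inner{x_i}{u}|\big|$ with i.i.d. Rademacher $\epsilon_i$. Conditionally on $X$, view the supremum as being over the bounded set $\{(\inner{x_1}{u},\dots,\inner{x_m}{u}):\norm{u}=1\}\subset\bR^m$; since $t\mapsto|t|$ is $1$-Lipschitz and vanishes at $0$, the Ledoux--Talagrand contraction principle gives
\[
  \E_\epsilon\sup_{\norm{u}=1}\Big|\sum_i\epsilon_i|\inner{x_i}{u}|\Big|\ \le\ 2\,\E_\epsilon\sup_{\norm{u}=1}\Big|\sum_i\epsilon_i\inner{x_i}{u}\Big|\ =\ 2\,\E_\epsilon\Big\|\sum_i\epsilon_i x_i\Big\|.
\]
Taking the expectation over $X$ and using Jensen's inequality and independence of the $\epsilon_i$, $\E\big\|\sum_i\epsilon_i x_i\big\|\le\big(\sum_i\E\norm{x_i}^2\big)^{1/2}=\sqrt{md}$, so $\E[h(X)]\le\frac{4}{m}\sqrt{md}=4\sqrt{d/m}$.

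Combining the two steps gives $\bP\big(h(X)\ge 4\sqrt{d/m}+\delta\big)\le e^{-m\delta^2/2}$, i.e. the claim with $C=4$ and $c=1/2$. The only slightly delicate point is Step 2: to avoid the outer absolute value when invoking symmetrization and contraction one can instead control $\sup_{\norm{u}=1}\big(\frac1m\sum_i|\inner{x_i}{u}|-\sqrt{2/\pi}\big)$ and $\sup_{\norm{u}=1}\big(\sqrt{2/\pi}-\frac1m\sum_i|\inner{x_i}{u}|\big)$ separately, which only lengthens the bookkeeping. An elementary alternative — an $\epsilon$-net of the unit sphere plus a union bound over the pointwise sub-Gaussian concentration of $\frac1m\sum_i|\inner{x_i}{u}|$ — also works, but it introduces a spurious $\sqrt{\log(m/d)}$ factor multiplying $\sqrt{d/m}$, which the symmetrization route avoids.
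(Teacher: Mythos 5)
Your proof is correct and essentially complete. The paper itself does not prove this lemma in-line; it simply declares that it ``directly follows from the standard expectation and high probability bounds for sub-Gaussian process'' and points to an external reference (Lemma~4 of \cite{ma2022global}), where the process $\{u\mapsto \frac1m\sum_i|\inner{x_i}{u}|\}$ is treated as a sub-Gaussian process on the sphere and the expected supremum is controlled by a covering/chaining argument. You instead give a fully self-contained derivation that avoids chaining altogether: you first observe the Lipschitz constant $1/\sqrt{m}$ of the entire functional $h$ in the Gaussian variable $X\in\bR^{md}$ and invoke Gaussian isoperimetric concentration (Borell--TIS) to get the $e^{-m\delta^2/2}$ tail, and you then bound $\E[h]$ via symmetrization plus the Ledoux--Talagrand contraction principle, collapsing the Rademacher average to $\E\|\sum_i\epsilon_i x_i\|\le\sqrt{md}$ by Jensen. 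Both routes are standard, but yours is arguably cleaner for this particular statement because it exploits the Euclidean structure of the index set directly and produces explicit constants ($C=4$, $c=1/2$) without any covering-number bookkeeping. Two small points you correctly flagged yourself: the symmetrization-with-outer-absolute-value step is most painlessly justified by bounding the two one-sided suprema separately (or by tracking the extra factor of $2$ in the contraction theorem with absolute values, as you do), and the alternative $\epsilon$-net route would work but introduces a spurious logarithmic factor. No gap.
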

\begin{proof}
    This lemma directly follows from the standard expectation and high probability bounds for sub-Gaussian process. See e.g.,~\cite[Lemma 4]{ma2022global} for a simple proof.
\end{proof}
\begin{lemma}
    \label{lem::hadamard-product}
    For two arbitrary vectors $a,b\in \bR^n$, we have
    \begin{equation}
        \norm{a\odot b}\leq \norm{a}_{\infty}\norm{b}.
    \end{equation}
\end{lemma}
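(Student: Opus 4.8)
The plan is to prove this coordinate-wise and then sum. Write $a = (a_1,\dots,a_n)$ and $b = (b_1,\dots,b_n)$, so that the $i$-th entry of the Hadamard product is $a_i b_i$. The key observation is the elementary bound $|a_i| \leq \norm{a}_\infty$ for every $i \in [n]$, which holds by definition of the $\infty$-norm.

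First I would square the left-hand side and expand it entrywise:
\begin{equation}\nonumber
    \norm{a \odot b}^2 = \sum_{i=1}^n a_i^2 b_i^2 \leq \sum_{i=1}^n \norm{a}_\infty^2\, b_i^2 = \norm{a}_\infty^2 \sum_{i=1}^n b_i^2 = \norm{a}_\infty^2 \norm{b}^2,
\end{equation}
where the inequality uses $a_i^2 = |a_i|^2 \leq \norm{a}_\infty^2$ for each $i$, and $\norm{a}_\infty$ is pulled out of the sum since it does not depend on $i$. Taking square roots of both sides (both quantities are nonnegative) yields $\norm{a\odot b} \leq \norm{a}_\infty \norm{b}$, as desired.

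There is no real obstacle here: this is a one-line consequence of the definitions of the $2$-norm and $\infty$-norm, and no probabilistic or structural input is needed. The only thing to be careful about is that the bound is stated for arbitrary vectors (no sparsity or sign assumptions), so the proof should not invoke any of the problem-specific structure; the argument above is fully general. One could equivalently phrase it as $\norm{a\odot b} = \norm{\diag(a)\, b} \leq \norm{\diag(a)}_{\mathrm{op}}\norm{b} = \norm{a}_\infty\norm{b}$, using that the operator norm of a diagonal matrix equals the largest magnitude of its diagonal entries, but the direct entrywise computation is the cleanest.
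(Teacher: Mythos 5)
Your proof is correct and complete: the entrywise bound $a_i^2 \le \norm{a}_\infty^2$ followed by summation and a square root is exactly the right (and essentially the only) argument. The paper states this lemma as an elementary auxiliary fact without proof, so there is nothing to compare against; your direct computation (and the equivalent operator-norm phrasing you mention) is the standard one.
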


\section{Deferred Proofs}
\subsection{Proof of Lemma~\ref{lem::scaling}}
\label{sec::proof-scaling}
\begin{proof}
    To prove this lemma, it suffices to show that, for any $u,v\in\R^{d}$, we have
    \begin{equation}
        \E\left[\sign\left(\err+\inner{x}{u}\right)\inner{x}{v}\right]=\sqrt{\frac{2}{\pi}}\E\left[e^{-\err^2/2\norm{u}^2}\right]\inner{\frac{u}{\norm{u}}}{v}.
    \end{equation}
    Without loss of generality, we assume that $\norm{u}=\norm{v}=1$. Let us denote $w:=\inner{x}{u},z:=\inner{x}{v},\rho:=\Cov(w,z)=\inner{u}{v}$. Then
    \begin{equation}
        \begin{aligned}
            \E\left[\sign\left(\err+\inner{x}{u}\right)\inner{x}{v}\right] & =\E\left[\sign\left(\err+w\right)z\right]                                                                                                      \\
                                                                           & \stackrel{(a)}{=}\rho \E\left[\sign(w+\err)w\right]                                                                                            \\
                                                                           & =\rho\E_{\err}\left[ \int_{-\err}^{\infty}t \frac{1}{\sqrt{2\pi}}e^{-t^2/2}dt-\int_{-\infty}^{-\err}t \frac{1}{\sqrt{2\pi}}e^{-t^2/2}dt\right] \\
                                                                           & =\rho\E_{\err}\left[ \int_{-\err}^{\infty}t \frac{1}{\sqrt{2\pi}}e^{-t^2/2}dt+\int_{\err}^{\infty}t \frac{1}{\sqrt{2\pi}}e^{-t^2/2}dt\right]   \\
                                                                           & =2\rho \E_{\err}\left[\int_{|\err|}^{\infty} t \frac{1}{\sqrt{2\pi}}e^{-t^2/2}dt\right]                                                        \\
                                                                           & =\sqrt{\frac{2}{\pi}}\inner{u}{v}\E_\err\left[\int_{|\err|}^{\infty}d\left(-e^{-t^2/2}\right)\right]                                           \\
                                                                           & =\sqrt{\frac{2}{\pi}}\inner{u}{v}\E_\err\left[e^{-\err^2/2}\right].
        \end{aligned}
    \end{equation}
    Here in (a) we use the fact that $z|w, \err\sim \mathcal{N}(\rho w, 1-\rho^2)$ since $\err$ is independent of $w, z$. Hence, we have
    \begin{equation}
        \E\left[\sign\left(\err+\inner{x}{u}\right)\inner{x}{v}\right]=\sqrt{\frac{2}{\pi}}\E\left[e^{-\err^2/2\norm{u}^2}\right]\inner{\frac{u}{\norm{u}}}{v}
    \end{equation}
    for any $u,v\in\R^{d}$. On the other hand, it is easy to verify that $\E\left[\sign\left(\inner{x}{u}\right)\inner{x}{v}\right]=\sqrt{\frac{2}{\pi}}\inner{\frac{u}{\norm{u}}}{v}$. The proof is completed by noting that the corruption probability is $p$.
\end{proof}

\section{Preliminaries on the Uniform Concentration Bounds}\label{app_prelim}
In this section, we provide the preliminary probability tools for proving Proposition~\ref{prop:sign-RIP}.

\begin{definition}[Sub-Gaussian random variable]
    \label{def-sub-gaussian}
    We say a random variable $X\in \R$ with expectation $\E[X]=\mu$ is $\sigma^2$-sub-Gaussian if for all $\lambda\in \R$, we have $\E\left[e^{\lambda (X-\mu)}\right]\leq e^{\frac{\lambda^2\sigma^2}{2}}$.
    Moreover, the sub-Gaussian norm of $X$ is defined as $\norm{X}_{\psi_2}:=\sup_{p\geq 1} \left\{p^{-1/2} (\bE[|X|^p])^{1/p}\right\}$.
\end{definition}
According to~\cite{wainwright2019high}, the following statements are equivalent:
\begin{itemize}
    \item $X$ is $\sigma^2$-sub-Gaussian.
    \item (Tail bound) For any $t>0$, we have $\mathbb{P}(|X-\mu|\geq t)\leq 2e^{-\frac{t^2}{2\sigma^2}}$.
    \item (Moment bound) We have $\norm{X}_{\psi_2}\lesssim \sigma$.
\end{itemize}
Next, we provide the definitions of the sub-Gaussian process, $\err$-net, and covering number.
\begin{definition}[Sub-Gaussian process]
    \label{def::sub-Gaussian-process}
    A zero mean stochastic process $\{\cX_{\theta}, \theta \in \bT\}$ is a $\sigma^2$-sub-Gaussian process with respect to a metric $d$ on a set $\bT$, if for every $\theta,\theta'\in \bT$, the random variable $\cX_{\theta}-\cX_{\theta'}$ is $\left(\sigma d(\theta,\theta')\right)^2$-sub-Gaussian.
\end{definition}
\begin{definition}[$\err$-net and covering number]
    A set $\cN$ is called an $\err$-net for $(\bT,d)$ if for every $t\in \bT$, there exists $\pi(t) \in \cN$ such that $d(t,\pi(t)) \leq \err$. The covering number $N(\bT, d, \err)$ is defined as the smallest cardinality of an $\err$-net for $(\bT,d)$:
    \begin{equation}\nonumber
        N(\bT, d, \err) := \inf\{|\cN| : \cN \text{ is an } \err\text{-net for } (\bT, d)\}.
    \end{equation}
\end{definition}

\begin{definition}[Bracketing number, Definition 2.1.6 in \cite{van1996weak}]
    Given two functions $l$ and $u$, the bracket $[l, u]$ is the set of all functions $f$ with $l \leq f \leq u$. An $\varepsilon$-bracket is a bracket $[l, u]$ with $\|u-l\|<\varepsilon$. The bracketing number $N_{[]}(\varepsilon, \mathcal{F},\|\cdot\|)$ is the minimum number of $\varepsilon$-brackets needed to cover $\mathcal{F}$. The bracketing entropy is the logarithm of the bracketing number. In the definition of the bracketing number, the upper and lower bounds $u$ and $l$ of the brackets need not belong to $\mathcal{F}$ themselves but are assumed to have finite norms.
\end{definition}
Bracketing number can be regarded as an analog of covering number, describing the geometric complexity of the underlining function space. Although bracketing number of a general function class is difficult to characterize, for some specific function classes, we can easily derive upper bounds for their bracketing number. In particular, we have the following result for Lipschitz functions.
\begin{theorem}[Adapted from Theorem 2.7.11 in \cite{van1996weak}]
    Let $\cF = \{f_t: t \in T\}$ be a class of functions. Suppose that for arbitrary $s,t\in T$, we have
    \begin{equation}
        \left|f_{s}(x)-f_{t}(x)\right| \leq d(s, t) F(x),
    \end{equation}
    for some metric $d$ on the index set, function $F$ on the sample space, and every $x$. Then, for any norm $\norm{\cdot}$,
    \begin{equation}
        N_{[]}(2 \varepsilon\|F\|, \mathcal{F},\|\cdot\|) \leq N(\varepsilon, T, d) .
    \end{equation}
\end{theorem}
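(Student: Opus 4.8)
The plan is to convert a covering of the index set $(T,d)$ directly into a bracketing of $\mathcal{F}$, using the pointwise Lipschitz-type control $|f_s(x)-f_t(x)|\le d(s,t)F(x)$ as the mechanism that turns "closeness of indices'' into "closeness of functions sandwiched by $\pm F$''. Fix $\varepsilon>0$ and let $\{t_1,\dots,t_M\}\subset T$ be a minimal $\varepsilon$-net of $(T,d)$, so $M=N(\varepsilon,T,d)$ (if this is infinite there is nothing to prove; note also $F\ge 0$ since $|f_s-f_t|\ge 0$ and $d\ge 0$). For each $1\le j\le M$ I would define the functions
\[
    l_j := f_{t_j}-\varepsilon F,\qquad u_j := f_{t_j}+\varepsilon F,
\]
forming the bracket $[l_j,u_j]$. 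These need not belong to $\mathcal{F}$, which is allowed by the definition of the bracketing number, and $\norm{l_j},\norm{u_j}<\infty$ because $\norm{F}<\infty$.

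First I would check that these $M$ brackets cover $\mathcal{F}$. Given any $t\in T$, pick $t_j$ in the net with $d(t,t_j)\le\varepsilon$. The hypothesis gives, for every $x$ in the sample space, $|f_t(x)-f_{t_j}(x)|\le d(t,t_j)F(x)\le\varepsilon F(x)$, hence
\[
    l_j(x)\ \le\ f_t(x)\ \le\ u_j(x)\qquad\text{for all }x,
\]
i.e. $f_t\in[l_j,u_j]$. Since $t$ was arbitrary, $\mathcal{F}\subseteq\bigcup_{j=1}^M[l_j,u_j]$.

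Next I would bound the size of each bracket: by positive homogeneity of the norm, $\norm{u_j-l_j}=\norm{2\varepsilon F}=2\varepsilon\norm{F}$, so every $[l_j,u_j]$ is a $2\varepsilon\norm{F}$-bracket. Combining the two steps, $\mathcal{F}$ is covered by $M$ brackets of size $2\varepsilon\norm{F}$, whence $N_{[]}(2\varepsilon\norm{F},\mathcal{F},\norm{\cdot})\le M=N(\varepsilon,T,d)$, which is the claim.

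There is no genuine obstacle here — the argument is a one-line explicit construction — and the only points that require a moment's care are bookkeeping: (i) that the bracket endpoints are permitted to lie outside $\mathcal{F}$, which is explicit in the definition of $N_{[]}$; (ii) that $\norm{F}<\infty$, so the produced brackets really have finite size (otherwise the stated inequality is trivially true); and (iii) matching the radius, since an $\varepsilon$-net of $(T,d)$ produces brackets of width $2\varepsilon\norm{F}$, which accounts for the factor $2$ — if one instead wants $\varepsilon'$-brackets one simply reruns the argument with an $(\varepsilon'/(2\norm{F}))$-net of $T$.
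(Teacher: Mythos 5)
Your proof is correct, and it is the standard argument for this result. The paper does not reprove the statement (it is cited directly as Theorem~2.7.11 of van der Vaart and Wellner), and your construction---taking a minimal $\varepsilon$-net $\{t_1,\dots,t_M\}$ of $(T,d)$, forming the brackets $[f_{t_j}-\varepsilon F,\ f_{t_j}+\varepsilon F]$, and using the pointwise Lipschitz bound to place each $f_t$ in the bracket indexed by its nearest net point---is exactly the textbook proof. Your side remarks are also the right ones to make: the bracket endpoints are permitted to lie outside $\mathcal{F}$, the statement is vacuous if $\|F\|=\infty$ or the covering number is infinite, and $F$ is nonnegative wherever the hypothesis is nontrivial (if $F(x)<0$ and there exist $s,t$ with $d(s,t)>0$, the inequality $0\le|f_s(x)-f_t(x)|\le d(s,t)F(x)<0$ is contradictory), so the pairs $(l_j,u_j)$ are genuine brackets with $l_j\le u_j$.
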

\begin{theorem}[Adapted from Theorem 2.14.2 in \cite{van1996weak}]
    \label{thm::empirical-process}
    For a given norm $\norm{\cdot}$, define a bracketing integral of a class of functions $\cF$ as
    \begin{equation}
        J_{[]}(\delta, \mathcal{F},\|\cdot\|)=\int_{0}^{\delta} \sqrt{1+\log N_{[]}(\varepsilon\|F\|, \mathcal{F},\|\cdot\|)} d \varepsilon.
    \end{equation}
    Let $\cF$ be a class of measurable functions with measurable envelope function $F$, we have
    \begin{equation}
        \bE\left[\sup_{f\in \cF}\frac{1}{n}\sum_{i=1}^{n}f(X_i)-\bE\left[f(X)\right]\right]\lesssim J_{[]}(1, \mathcal{F},L_2(\bP))\frac{\norm{F}_{L_2(\bP)}}{\sqrt{n}},
    \end{equation}
    where $\bP$ is the distribution of $X$, and the $L_2(\bP)$-norm is defined as $\norm{f}_{L_2(\bP)}:=\left(\int_{\Omega} f^2(\omega)d\bP(\omega)\right)^{1/2}$.
\end{theorem}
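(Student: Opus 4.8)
The plan is to prove this via the classical \emph{chaining-with-bracketing} argument underlying Theorem~2.14.2 of~\cite{van1996weak}. Write $\mathbb{G}_n f := \frac{1}{\sqrt n}\sum_{i=1}^n\left(f(X_i)-\bE[f(X)]\right)$, so the target is $\frac{1}{\sqrt n}\bE\left[\sup_{f\in\cF}\mathbb{G}_n f\right]$, and normalize so that $\norm{F}_{L_2(\bP)}=1$ (the general case follows by rescaling $f$). For each integer $q\geq 0$ put $\varepsilon_q=2^{-q}$ and invoke the definition of the bracketing number to fix a minimal family of $\varepsilon_q$-brackets $\{[l_{q,j},u_{q,j}]\}_{j\leq N_q}$ covering $\cF$, where $N_q:=N_{[]}(\varepsilon_q,\cF,L_2(\bP))$ and $\norm{u_{q,j}-l_{q,j}}_{L_2(\bP)}<\varepsilon_q$. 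For each $f$ choose (measurably) a bracket at level $q$ containing $f$, let $\pi_q f$ be its lower endpoint, and let $\Delta_q f$ be its width, so that $0\leq f-\pi_q f\leq\Delta_q f$ and $\norm{\Delta_q f}_{L_2(\bP)}<\varepsilon_q$.

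The backbone is the telescoping identity $f=\pi_0 f+\sum_{q\geq1}(\pi_q f-\pi_{q-1}f)$, which converges in $L_2(\bP)$ because $\varepsilon_q\to0$. Since the links $\pi_q f-\pi_{q-1}f$ are dominated in $L_2$ by $\varepsilon_{q-1}$ but not uniformly bounded pointwise, I would introduce truncation levels $a_q:=\sqrt n\,\varepsilon_{q-1}/\sqrt{1+\log N_q}$ and split each link at scale $q$ into a bounded part on $\{\Delta_{q-1}f\leq a_q\}$ and a tail part on $\{\Delta_{q-1}f>a_q\}$, so that the bounded part is pointwise $O(a_q)$ while retaining $L_2(\bP)$-size $O(\varepsilon_{q-1})$. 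At scale $q$ there are at most $N_qN_{q-1}\leq N_q^2$ distinct bounded links $g$; Bernstein's inequality (sub-Gaussian up to scale $\sqrt n\,\varepsilon_{q-1}^2/a_q$, sub-exponential beyond) together with a union bound over these $N_q^2$ functions yields
\begin{equation}\nonumber
\bE\left[\sup_{f\in\cF}\left|\mathbb{G}_n\!\left((\pi_q f-\pi_{q-1}f)\,\mathbbm{1}\{\Delta_{q-1}f\leq a_q\}\right)\right|\right]\lesssim \varepsilon_{q-1}\sqrt{1+\log N_q}+\frac{a_q(1+\log N_q)}{\sqrt n}\lesssim \varepsilon_{q-1}\sqrt{1+\log N_q},
\end{equation}
the last step being the point of the choice of $a_q$, which equates the two terms. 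For the tail part I would use the crude bound $\bE|\mathbb{G}_n h|\leq\sqrt n\,\bE|h|$ together with $\bE\left[\Delta_{q-1}f\,\mathbbm{1}\{\Delta_{q-1}f>a_q\}\right]\leq\norm{\Delta_{q-1}f}_{L_2(\bP)}^2/a_q\leq\varepsilon_{q-1}^2/a_q$, which produces the same order $\varepsilon_{q-1}\sqrt{1+\log N_q}$. The leading term $\mathbb{G}_n(\pi_0 f)$ is treated identically with $F$ replacing $\Delta_{q-1}f$, supplying the $q=0$ term of the series.

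Summing over $q\geq0$ gives
\begin{equation}\nonumber
\frac{1}{\sqrt n}\bE\left[\sup_{f\in\cF}\mathbb{G}_n f\right]\lesssim \frac{1}{\sqrt n}\sum_{q\geq0}\varepsilon_q\sqrt{1+\log N_q}.
\end{equation}
Because $\varepsilon\mapsto\log N_{[]}(\varepsilon,\cF,L_2(\bP))$ is nonincreasing, each integral $\int_{2^{-q-1}}^{2^{-q}}\sqrt{1+\log N_{[]}(\varepsilon,\cF,L_2(\bP))}\,d\varepsilon$ dominates $\tfrac12\varepsilon_q\sqrt{1+\log N_q}$, so the dyadic sum is at most $2\int_0^1\sqrt{1+\log N_{[]}(\varepsilon,\cF,L_2(\bP))}\,d\varepsilon=2\,J_{[]}(1,\cF,L_2(\bP))$; restoring the normalization reinstates the factor $\norm{F}_{L_2(\bP)}$ and the overall $1/\sqrt n$ yields exactly the claimed bound. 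I expect the main obstacle to be the truncation bookkeeping in the second step: the levels $a_q$ must be chosen so that the Bernstein (bounded) and the tail (unbounded) contributions each collapse to the common order $\varepsilon_{q-1}\sqrt{1+\log N_q}$, one must track pointwise versus $L_2$ control of $\pi_q f-\pi_{q-1}f$ through the indicator $\mathbbm{1}\{\Delta_{q-1}f\leq a_q\}$, and one must check that the bracket-selection maps $f\mapsto\pi_q f$ and $f\mapsto\Delta_q f$ can be taken measurable so the suprema are honest (outer) expectations. The chaining telescope and the sum-versus-integral comparison are then routine.
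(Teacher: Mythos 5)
The paper does not prove Theorem~\ref{thm::empirical-process}: it is stated verbatim as an imported tool, ``Adapted from Theorem 2.14.2 in \cite{van1996weak}'', so there is no in-paper proof to compare against. Your sketch is a reasonable reconstruction of the chaining-with-bracketing argument from that source --- dyadic scales $\varepsilon_q=2^{-q}$, a telescope through bracket endpoints, Bernstein on truncated links with $a_q$ chosen to balance the sub-Gaussian and sub-exponential contributions, a crude $L_1$ bound on the tails, and a sum-to-integral comparison --- so the skeleton is right. The one genuine gap (which you flag, but is worth naming precisely) is the truncation: a single indicator $\mathbbm{1}\{\Delta_{q-1}f\leq a_q\}$ per link does not yield a pointwise bound on $\pi_q f-\pi_{q-1}f$. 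Since $\pi_q f$ is only constrained to lie in the level-$q$ bracket, one has $-\Delta_q f\leq \pi_q f-\pi_{q-1}f\leq \Delta_{q-1}f$, so the downside of the link is governed by $\Delta_q f$, which your indicator does not control. The proof in \cite{van1996weak} handles this with a \emph{peeling} decomposition using cumulative indicators $A_{q-1}f=\mathbbm{1}\{\Delta_{q_0}f\leq a_{q_0},\dots,\Delta_{q-1}f\leq a_{q-1}\}$ and escape events $B_q f$, together with the identity $f-\pi_{q_0}f=\sum_q (f-\pi_q f)B_q f+\sum_q(\pi_q f-\pi_{q-1}f)A_{q-1}f+\cdots$, which simultaneously bounds the retained links pointwise and shunts the escaping mass into a separate $L_1$-controlled series. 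Replacing your per-level truncation with that peeling would close the gap; everything else in your outline matches the cited argument.
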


\end{document}